\def\startj{\color{black}}
\def\endj{\color{black}}
\newcommand{\Omit}[1]{}
\newcommand{\V}{V}
\newcommand{\Jv}{J}
\newcommand{\Pv}{P(V)}
\newcommand{\GM}{M}
\newcommand{\RMWAO}{{{\textsc{mwa}}}}
\newcommand{\RYO}{{{\textsc{y}}}}
\newcommand{\RMSAO}{{{\textsc{msa}}}}
\newcommand{\RMCSAO}{{{\textsc{mcsa}}}}
\newcommand{\RRAO}{{{\textsc{ra}}}}
\newcommand\marijan[1]{{\color{black} #1}}
\newcommand\marija[1]{{\color{black} #1}}
\newcommand\ms[1]{{\color{black} #1}}
\newcommand\jl[1]{{\color{black} #1}}
\newcommand\gabri[1]{{\color{black} #1}}
\newcommand\srdjan[1]{{\color{black} #1}}
\newcommand\lella[1]{{\color{black} #1}}
\newcommand\leon[1]{{\color{black} #1}}
\definecolor{light-gray}{gray}{0.95}
\newcommand{\Winners}{Win}
\def\sj{\color{black}}
\def\ej{\color{black}}
\newcommand{\La }{\mbox{$\mathcal{L}$} }       
\newcommand{\A}{\mathcal{A}}
\newcommand{\PA}{[\mathcal{A}]}
\newcommand{\inc}{\mathtt{~inc~}}
\newcommand{\ai}{\varphi}
\newcommand{\Js}{J}
\newcommand{\Ct}{\gamma}
\newcommand{\Dma}{\mathcal{J}_{\A}}
\newcommand{\Dmc}{\mathcal{J}}
\newcommand{\Pf}{P}
\newcommand{\Qf}{Q}
\newcommand{\Alt}{C}
\newcommand{\F}{F}
\newcommand{\R}{F}
 \newcommand{\ie}{\mbox{\textit{i.e.}, }}
\newcommand{\eg}{\mbox{\textit{e.g.}, }}
\newcommand \argmax[1] {\underset{{#1}}{\mbox{argmax}}}
\newcommand \argmin[1] {\underset{{#1}}{\mbox{argmin}}}
\newcommand \mmin[1] {\underset{{#1}}{\mbox{min}}}
\newcommand \mmax[1] {\underset{{#1}}{\mbox{max}}}
\newcommand \ssum[1] {\underset{{#1}}{\mbox{$\sum$}}}
\newcommand{\ssu}{\succsim_{\Pf}}
 \newcommand{\npf}[2]{N({#1},{#2})}
\newcommand{\RMWA}{{{\textsc{med}}}}
\newcommand{\RY}{{{\textsc{y}}}}
\newcommand{\RMSA}{{{\textsc{mc}}}}
\newcommand{\RMNAC}{{{\textsc{mpc}}}}
\newcommand{\RMCSA}{{{\textsc{mcc}}}}
\newcommand{\RRA}{{{\textsc{ra}}}}
\newcommand{\RDG}{{\F^{d_G,\Sigma}}}
\newcommand{\RMAX}{{\F^{d_H,\textsc{max}}}}
\newcommand{\RSUM}{{\F^{d_H,\Sigma}}}
\newcommand{\RS}{\F_{s}}
\newcommand{\REVS}{\F_{\mbox{rev}}}
\newcommand{\RLEXIMAX}{{\textsc{leximax}}}
\newcommand{\MC}[1]{{max(#1, \subseteq)}}
\newcommand{\MCC}[1]{{max(#1, |.|)}}
\definecolor{light-gray}{gray}{0.95}
\definecolor{LightCyan}{rgb}{0.88,1,1}
\newtheorem{definition}{Definition}
\newtheorem{example}{Example}
\newtheorem{proposition}{Proposition}
\newtheorem{corollary}{Corollary}
\newtheorem{lemma}{Lemma}
\newenvironment{proof}{\noindent\emph{Proof.}}{\hfill $\Box$\break\par}
\begin{document}

\title{A \leon{partial} taxonomy of judgment aggregation rules and their properties \thanks{
}
}

\author{J\'er\^ome Lang \and Gabriella Pigozzi \and Marija Slavkovik \and Leendert van der Torre \and Srdjan Vesic}

%

\maketitle

\begin{abstract}
The literature on judgment aggregation is   moving from studying impossibility results regarding aggregation rules towards studying specific judgment aggregation rules. Here we give a structured list of most rules that have been proposed and studied recently in the literature,  together with various properties of such rules. We first focus on  the  majority-preservation  property, which generalizes Condorcet-consistency, and identify which of  the rules satisfy it. We study the inclusion relationships that hold between  the  rules. Finally, we consider two forms of unanimity, monotonicity, homogeneity, and reinforcement, and   we identify which of  the rules satisfy these properties.
\end{abstract}


\section{Introduction}\label{sec:introduction}
   
Judgment aggregation studies the problems related to aggregating  a finite set of \gabri{yes-no} individual judgments, cast on a 
collection of logically interrelated issues. \gabri{Such a finite set of issues forms the \emph{agenda}}.
It can be seen as a generalisation of preference aggregation \cite{Dietrich07}. 

Until a few years ago, the judgment aggregation literature had focused considerably more on studying  impossibility \lella{theorems} 
than on  developing and \gabri{investigating} specific  aggregation rules.
\leon{This} field development approach  departs from the, admittedly much older, field of voting theory. Nevertheless, several recent \gabri{and independent} \leon{papers} have started to explore the zoo of  concrete judgment aggregation rules, beyond the well known premise-based and conclusion-based rules \ms{\cite{premise10,ADT09}}. While the premise- and conclusion-based  rules  can only be applied if there exists  a prior labelling of the agenda issues as premises and conclusions, the following  rules are defined for \leon{every} agenda: quota-based rules \cite{DietrichList07}, distance-based rules \cite{Pigozzi2006,MillerOsherson08,EndrissGP12,DuddyP:2012}, generalizations of Condorcet-consistent voting rules \cite{PuppeNehring2011,NehringPivato2011,TARK11}, \leon{and} rules  based on the maximisation of some scoring function \cite{TARK11,Dietrich:2013,Zwicker11}. 
Some of these rules 
obviously generalize well-known voting rules.
\jl{However,  a `compendium' of existing judgment aggregation rules really does not exist at the moment, despite the several overview papers, chapters and even books that have been published in recent years \cite{ListPuppe2009,GrossiPigozzi14,EndrissHBCOMSOC2016,BaumeisterErdelyiRothe16}.}

Our aim is threefold. 
First, as there is so far no compendium of judgment aggregation rules, we give one: we list most of the rules that have been proposed recently, in a structured way.
This part of the paper does not 
give 
novel results, but serves as a partial survey.
 Second, we compare in a systematic way these rules in terms of inclusion relationships. Third, we consider a few key properties that generalize properties of voting rules (majority-preservation, unanimity, monotonicity, homogeneity and reinforcement) and identify those of the considered rules that satisfy them.

We follow earlier work in judgment aggregation \cite{ListPettit04} in using a constraint-based version of judgment aggregation to represent properties like transitivity of preferences.  As it is common in voting theory, we consider irresolute rules (also called `correspondences') rather than functions, that is, a rule outputs a non-empty set of collective judgments. 

The outline of the paper is as follows. The general definitions are given in Section \ref{sec:definitions}. 
In Section \ref{sec:rules} 
we review the rules we study in the paper.
Majority preservation is a key property of rules, \gabri{as it generalizes} Condorcet-consistency. 
We focus on \ms{majority-preservation} in Section \ref{sec:mp} and show which of \gabri{the rules defined in Section \ref{sec:rules}} satisfy it.
In Section \ref{sec:relation} 
we address inclusion and non-inclusion relationships between our rules.
In Section \ref{prop} 
we study the rules from the point of view of 
unanimity, monotonicity, reinforcement and homogeneity. 
We summarize our contributions in Section \ref{sec:conclu}. 

\section{Preliminaries}\label{sec:definitions}
 

  
  
Let $\La$ be a \leon{standard propositional language consisting of} well-formed propositional logical formulas, including $\top$ (tautology) and $\bot$ (contradiction), \leon{together with a standard notion of logical consistency}.  We denote atomic propositions by $p,q$ etc. and formulas from $\La$ by $\varphi, \alpha, \beta$ etc. 


An {\em agenda} $\A$ is a finite set of propositions of the form 
$\{\ai_1,\neg \ai_1, \ldots, \ai_m,$ $ \neg \ai_m\}$, where for all $i$, $\ai_i \in\La$ and $\ai_i$ is neither a tautology nor a contradiction, 
\sj and is a non-negated formula, (\ie \ms{it} is not \ms{of} the form $\neg \alpha$).\ej
We refer to a pair $(\ai, \neg \ai)$ as an {\em issue}. The {\em pre-agenda} $\PA$ associated with $\A$ is  $\PA = \{ \ai_1, \ldots, \ai_m\}$. 
We slightly abuse notation and write $\ai_i$ instead of $\neg \neg \ai_i$ for $\varphi_i \in \PA$.


 
An  agenda is {\em endowed with a notion of consistency} which preserves logical consistency. Formally, $\A$ comes with a set of {\em ($\A$-)consistent} judgment sets; \leon{a} 
\gabri{($\A$-)consistent} judgment set is logically consistent, but the converse does not necessarily hold. Without loss of generality, the agenda's consistency notion is defined as {\em logical consistency given some fixed formula}: a set of formulas $S$ is consistent if $S \cup \{ \Ct \}$ is logically consistent, where $\Ct$ is some exogenously fixed non-contradictory formula, which we call the {\em integrity constraint}.  
\jl{This is also the approach followed in \cite{GrandiE13} (albeit in the slightly different framework of binary aggregation, where agenda issues are atomic propositions) and in \cite{EndrissGHL16}. A similar use of constraints is also done in belief merging theory \cite{KPP02,EveraereKM15}.}
When $\Ct$ is not specified, by default it is equal to $\top$, in which case the notion of consistency associated with the agenda \leon{coincides with} 
standard \leon{logical} consistency.

 A {\em  judgment} on $\ai \in \PA$ is \leon{either} 
 $\ai$ or $\neg \ai$.  A {\em judgment set} $\Js$ \leon{for $\A$} is a subset of $\A$.
$\Js$ is {\em complete} if and only if  for each $\ai\in\PA$, either $\ai\in\Js$ or $\neg\ai\in\Js$.
A judgment set for $\A$ is {\em rational} if it is complete and consistent. Let $\Dmc_{\A}$ be the set of all rational judgment sets for $\A$.

For every consistent $S \subseteq \A$, the set of rational extensions of $S$, \ie $\{ \Js \mid \Js \in \Dmc_{\A} \mbox{ and } S \subseteq \Js \}$, is denoted as $ext(S)$.

A {\em $\Dma$-profile}, or simply a {\em profile}, is a \leon{finite sequence} 
of rational individual judgment sets, \ie $\Pf =\langle\Js_1, \ldots, \Js_n \rangle$ for some $n$, where $\Js_i$ is the judgment set of voter $i$.
We slightly abuse notation and write $\Js \in \Pf$ when
$\Js = \Js_i$ for some $i$, and we write $|\Pf|$ to denote the number of judgment sets in $\Pf$. We sometimes denote $\Pf$  as $(\Js_i, \Js_{-i})$, where $\Js_{-i} = \langle \Js_j, 1 \leq j \leq n, j \neq i \rangle$.  We write $\Qf \subseteq \Pf$ (read "$\Qf$ is a sub-profile of $\Pf$) if $\Qf = \langle \Js_j | i \in I \rangle$ for some $I \subseteq \{1, \ldots, n\}$.

Given two rational judgment sets $\Js$ and $\Js'$ we define the {\em Hamming distance} $d_H$: $d_H(\Js, \Js')$ \gabri{as} the number of issues on which $\Js$ and $\Js'$ disagree.
We also define the Hamming distance between two profiles $\Pf = \ms{\langle}\Js_1, \ldots, \Js_n\ms{\rangle}$ and $\Pf' = \ms{\langle}\Js_1', \ldots, \Js_n'\ms{\rangle}$ as 
$D_H(\Pf,\Pf') = \sum_{i=1}^n d_H(\Js_i,\Js'_i)$, and between a judgment set and a profile as
$d_H(\Js,\Pf) = \sum_{i=1}^n d_H(\Js,\Js_i)$.

We define $N(\Pf,\ai)$ as the number of all  voters  in $\Pf$  whose judgment set contains $\varphi$, \ie $\npf{\Pf}{\ai}  =  | \{ i \mid \varphi \in \Js_i, \Js_i \in \Pf\}|$. 

 \begin{example}\label{ex:agex} Consider the pre-agenda $\PA = \{p\wedge r, q,p\wedge q\}$. 
The corresponding agenda is $\A  = \{p\wedge r, \neg (p\wedge r),   q, \neg q, p\wedge q, \neg (p\wedge q)\}$, equipped with the consistency notion \gabri{corresponding to} 
 $\Ct = (q \rightarrow r)$. 
The set  
of rational judgment sets is 
\[\Dma =  
\begin{Bmatrix}
\{\neg (p\wedge r), \neg q, \neg (p\wedge q)\}, & \{\neg (p\wedge r),   q, \neg (p\wedge q)\}, \\
 \{ (p\wedge r), \neg q, \neg (p\wedge q)\}, & \{ p\wedge r,   q,  p\wedge q \}\\
\end{Bmatrix}
\] 
Consider the profile $\Pf = \langle \Js_1, \Js_2, \Js_3, \Js_4\rangle$ with $\Js_1 = \Js_2  = \{\neg (p\wedge r),  q, \neg (p\wedge q)\}$, $\Js_3 = \{p\wedge r, \neg q, \neg (p\wedge q)\}$ and $\Js_4 = \{p\wedge r,  q,  p\wedge q \}$. For instance, $\npf{\Pf}{q}= 3$.

%
%
\end{example}

Most often we will write profiles in a table, as in Table \ref{tab:maj}, 
with the pre-agenda elements given in the topmost row and the \gabri{voters'}  judgment sets in the leftmost column. If a judgment set contains $\varphi \in \PA$ (respectively $\neg \varphi$), then we mark this with a ``+" (respectively,  ``-") in the \gabri{corresponding column}.
The constraint, if explicitly defined, will be  given in the table caption.

The {\em majoritarian judgment set} associated with profile $\Pf=\langle\Js_1, \ldots, \Js_n \rangle $ contains all the elements of the agenda that are supported by a strict majority of judgment sets in $\Pf$, 
\ie $m(\Pf)=\left\{\ai\in\A\mid \npf{\Pf}{ \ai} >\frac{n}{2}\right\}.$
A profile $\Pf$ is {\em  majority-consistent}
when $m(\Pf)$ is a consistent subset of $\A$.

An (irresolute) {\em judgment aggregation rule} $\F$ maps every profile $\Pf$, defined on \leon{every} 
agenda $\A$, to a nonempty set of rational judgment sets in $\Dmc_{\A}$.
When for all 
profiles $\Pf$,  $\F(\Pf)$ is a singleton, then  $\F$ is said to be {\em resolute}. Like in voting theory, resolute rules can be defined from irresolute ones by \gabri{coupling} them with a tie-breaking mechanism.

The {\em preference pre-agendas} associated with a set $\Alt=\{x_1,\ldots, x_q\}$ of alternatives are defined by the set of \leon{atomic}
propositions $\{ x_iPx_j\mid1\leq i < j\leq q\}$
(when $j>i$, $x_j P x_i$ is not a\leon{n atomic} proposition, but we  write $x_j P x_i$ as a shorthand for $\neg (x_iPx_j)$)
and one of these two consistency notions:
{\em transitivity}, defined as consistency with $$Tr = \bigwedge_{\underset{i\neq j\neq k}{i,j,k \in \{1,\ldots,q\}} } \big(x_iPx_j \wedge x_jPx_k \rightarrow x_iPx_k\big)$$
or {\em existence of a nondominated alternative},
defined as consistency with $$W = \bigvee _{i \in \{1, \ldots, q\}}\bigwedge_{j\neq i} \ x_iP x_j$$
Finally, $\A_{\Alt}^{Tr}$ (respectively, $\A_{\Alt}^W$) is defined by its pre-agenda $[\A_{\Alt}^{Tr}] = \{ x_iPx_j\mid1\leq i < j\leq q\}$ and the consistency notion corresponding to transitivity (respectively, with the existence of a nondominated alternative).

A {\em preference profile} over $\Alt$ is a \leon{finite sequence} 
of linear orders over $\Alt$, which we denote by $\V = \langle \succ_1, \ldots, \succ_n\rangle$. The majority graph 
$M(\V)$ associated with $\V$ is the directed graph whose vertices are elements of $\Alt$ and containing edge $(x,y)$ if and only if a majority of voters in $\V$ prefer $x$ to $y$; we denote by $n_{\V}(x,y)$ the number of votes in $\V$ that prefer $x$ to $y$. A {\em social preference function} maps \leon{every} 
preference profile to a nonempty set of linear orders over $\Alt$. A {\em social choice function} (or {\em voting rule}) maps \leon{every} 
preference profile to a nonempty subset of $\Alt$. With \leon{every} 
judgment aggregation rule we can associate {\em two} social preference functions, whether we impose the transitivity constraint or the nondominated alternative constraint. From these two social preference functions we can derive two social choice functions by ``collecting'' the nondominated elements in each of the output preference relations. Sometimes, especially when $n$ is odd, these social preference functions or the social choice functions coincide with well-known \gabri{voting} rules (we \ms{show}  several examples in Section \ref{sec:rules}).

\section{Judgment aggregation rules}\label{sec:rules}

We now define five (overlapping) families of judgment aggregation rules.
We use the following running example throughout the paper to illustrate the rules.
\begin{example}\label{ex:maj}
Let $ \PA = \{p\wedge r,p \wedge s,q,p\wedge q,t\}$, and $\Pf$ the 17-voter profile   of Table \ref{tab:maj}.
Consistency in $\A$ is logical consistency.
As $m(\Pf) =\{ p\wedge r,p \wedge s, q, \neg(p\wedge q), t \}$ is an inconsistent judgment set,  $\Pf$ is not majority-consistent. 
 
\begin{table}[h!]
\begin{center}
\begin{tabular}{r|ccccc}
  Voters       &\{ $\;\;\; p\wedge r $,& $\;\;\;  p \wedge s$, &  $\;\;\; q$, &   $\;\;p\wedge q$, &   $\;\;\;t$ \} \\\hline
  $\Js_1 -\Js_6$          &  $\;\;\;\;$+           &  $\;\;\;$+             &  $\;\;\;$+   &  $\;$ +           &  +    \\
  $\Js_7 -\Js_{10}$        & $\;\;\;\;$+           & $\;\;\;$+             & $\;\;\;$ -   &  $\;$ -           &  +    \\
 $\Js_{11} -\Js_{17}$        & $\;\;\;\;$-           & $\;\;\;$-             &   $\;\;\;$+   &  $\;$ -           &  -    \\\hline
m(\Pf) & $\;\;\;$ + & $\;\;\;\;$+ & $\;\;\;\;$+ & $\;\;\;$- &  +
\end{tabular}
\caption{Running example profile.}
 \label{tab:maj}
\end{center}
\end{table}
\end{example}

\subsection{Rules based on the majoritarian judgment set}\label{family-majoritarian}

A judgment aggregation rule $\F$ is {\em based on the majoritarian judgment set} when for \leon{every} 
two $\Dmc_{\A}$-profiles   $\Pf$ and $\Pf'$ such that $m(\Pf) = m(\Pf')$, we have $\F(\Pf) = \F(\Pf')$. 
\leon{These rules} 
can be viewed as the judgment aggregation counterpart\leon{s} of voting rules based on the pairwise majority graph, also known as {\em C1 rules} in Fishburn's classification (see, \eg \cite{BramsFishburn04}).


Given a set of formulas $S\subseteq \A$,
$S' \subseteq S$ is a {\em maximal consistent subset of $S$} if and only if $S'$ is consistent and there exists no other consistent set $S''$ such that $S' \subset S''  \subseteq S$; and $S' \subseteq S$ is a {\em maxcard} (for ``maximal cardinality")
{\em consistent  subset of $S$} if and only if $S'$ is consistent and there exists no other consistent set $S'' \subseteq S$ such that $|S''| > |S'|$.
The set of maximal  (respectively, maxcard) consistent subsets of $S$ is denoted \endj by $\MC{S}$ (respectively, $\MCC{S}$).
 
 \begin{definition}[Maximal Condorcet and maxcard Condorcet rules]
 \label{def:msa}   
 For \leon{every} 
    $\Dma$-profile $\Pf$,  the \lella{maximal} Condorcet  rule ($\RMSA$) and the maxcard Condorcet rule ($\RMCSA$)  are defined as follows:
 \begin{eqnarray}
 \RMSA(\Pf) = \{ext(S) \mid S\in\MC{m(\Pf)}\},  \\ 
 \RMCSA(\Pf) = \{ext(S) \mid S\in\MCC{m(\Pf)}\}. 
\end{eqnarray}
\end{definition}

\noindent Equivalently, $\RMCSA(\Pf) = \argmax{J \in \Dma} |J \cap m(\Pf)|$.
Clearly, $\RMCSA(\Pf) \subseteq \RMSA(\Pf)$. 

\begin{example}\label{ex:msa}
For the profile $\Pf$ of Example \ref{ex:maj}, 
the maximal  consistent subsets of $m(\Pf)$ are $\{p \wedge r, p \wedge s, q, t\}$, $\{p \wedge r, p \wedge s, \neg(p \wedge q), t\}$ and $\{q, \neg(p \wedge q), t\}$; therefore  
\vspace{-1mm}
\[\RMSA(\Pf) =
\left\{
\begin{array}{ccccc}
 \{p\wedge r,&p\wedge s,&q,  &                 p\wedge q,                 &t\},       \\
\{p\wedge r, &p\wedge s, & \neg q,   & \neg (p\wedge q),&t\},   \\
\{     \neg (p \wedge r), &  \neg (p \wedge s),  & q, &\neg(p\wedge q),&t\} 
\end{array}
\right\}
\]
\vspace{-1mm}
and 
\vspace{-1mm}
\[\RMCSA(\Pf) =
\left\{
\begin{array}{ccccc}
 \{p\wedge r, &p\wedge s, &q,&   p \wedge q,          & t\},       \\
\{p\wedge r,&p\wedge s, &  \neg q,   & \neg(p\wedge q),&t\}
\end{array}
\right\}.
\]
\end{example}

The output of the rule $\RMSA$ is called {\em Condorcet admissible set}  by Nehring et al. \cite{PuppeNehring2011}.
The rule $\RMCSA$ is called {\em Slater} rule  \cite{PuppeNehring2011}, and  {\sc Endpoint}$_{d_H}$  \cite{MillerOsherson08}.

%


{\em At least when $n$ is odd}, \ms{it} is easy to identify the voting rules obtained from $\RMSA$ and $\RMCSA$. We give these 
results informally and without proof:\footnote{The proofs can be found in \cite{ADT2013}.}
\begin{itemize}
\item For $\RMCSA$, 
the transitivity constraint leads to the social preference function that maps a profile to the set of all its {\em Slater orders}, \ie  the set of all  linear orders $\succ$ over $\Alt$ maximising the number of $(x,y)$ such that $x \succ y$ if and only if $(x,y) \in M(\V)$, and that the corresponding voting rule (for $n$ odd) is the {\em Slater rule}, which maps a  profile $\V$ to the set of all alternatives that are dominating in some Slater order for $M(\V)$. If we choose the $W$ constraint, then the corresponding voting rule (for $n$ odd) is the {\em Copeland rule}, which maps a profile to the set of alternatives $x$ maximising  the number of outgoing edges from $x$ in $\GM(\V)$.
\item For $\RMSA$, the transitivity constraint (for $n$ odd) leads to the {\em top cycle} rule, which maps a preference profile to the (unique)
smallest subset $S$ of $C$ such that for every $x \in S$ and $y \in \Alt\setminus S$, we have $(x,y) \in M(\V)$.\footnote{This result has been independently proven (and stated in a  stronger way) in \cite{PuppeNehring2011}.}
Finally, the choice of the $W$ constraint (for $n$ odd) leads to the voting rule that maps a profile to its Condorcet winner if and only if the profile has a Condorcet winner, and to the set of all alternatives otherwise.
\end{itemize}

%
%

\subsection{Rules based on the weighted majoritarian set}\label{family-weighted-majoritarian}

The {\em weighted majoritarian set} associated with a profile $\Pf$ is the function $N(\Pf, .)$ \jl{which, we recall, maps each agenda issue to the number of judgment sets in $\Pf$ that contain it}. A judgment aggregation rule $\F$ is {\em based on the weighted majoritarian set} when for \leon{every} 
two $\Dmc_{\A}$-profiles   $\Pf$ and $\Pf'$,  if \leon{for every $\varphi \in \A$ we have} $N(\Pf, \varphi) = N(\Pf', \varphi)$,  then $\F(\Pf) = \F(\Pf')$.
\leon{These rules} 
can be viewed as the judgment aggregation counterpart\leon{s} of voting rules that are based on the weighted pairwise majority graph, also known as {\em C2 rules} in Fishburn's classification \cite{BramsFishburn04}. 
Since $m(\Pf)$ can be recovered from $N(\Pf,.)$,  
\leon{every} 
rule based on the majoritarian judgment set is also based on the weighted majoritarian set.


\begin{definition}[Median rule]\label{def:mwa}  For \leon{every} 
$\Dma$-profile $\Pf$,
the median rule ($\RMWA$) is defined as follows: 
\begin{equation}\label{eq:mwa}
\RMWA(\Pf) = \argmax{\Js \in \Dma}  \sum_{\ai \in \Js} \npf{\Pf}{\ai}.  \end{equation}
\end{definition} 

This rule appears in  many places under different names: {\sc Prototype} \cite{MillerOsherson08}, {\em median rule} \cite{PuppeNehring2011}, {\em maximum weighted agenda rule} \cite{TARK11}, {\em simple scoring rule} \cite{Dietrich:2013} and {\em distance-based procedure} \cite{EndrissGP12}. Variants of this rule have been defined by Konieczny and Pino-P\' erez \cite{KPP02} and Pigozzi \cite{Pigozzi2006}. For  completeness we give here \leon{the} 
equivalent distance-based formulation of $\RMWA$, although we consider more generally the family of distance-based rules in Section~\ref{sec:dist}. For \leon{every} 
$\Dma$-profile $\Pf$, 
the distance-based rule $\RSUM$ is defined as follows:   

\begin{equation}\label{eq:rsum}  \RSUM(\Pf) = \argmin{\Js \in \Dma} \sum_{\Js_i \in \Pf} d_H(\Js_i, \Js).\end{equation}

It is not difficult to establish that $\RSUM$ coincides with $\RMWA$ (see \cite{TARK11}, and Proposition 1 in \cite{Dietrich:2013}). 
The social preference function
obtained from $\RMWA$ and the choice of the transitivity constraint is the {\em Kemeny social preference function}, 
and the corresponding voting rule is the {\em Kemeny rule}.
 
\begin{example}\label{ex:mwa}
Consider the agenda and profile of Example \ref{ex:maj}. We obtain:\smallskip

\hspace{-7mm} \begin{tabular}{llll}
$N(\Pf,p\wedge r)\negthinspace=\negthinspace10$ & $N(\Pf, \neg (p\wedge r))\negthinspace=\negthinspace7$ & $N(\Pf,p\wedge s)\negthinspace=\negthinspace10$ &$N(\Pf, \neg (p\wedge s))\negthinspace=\negthinspace7$\\
$N(\Pf,q) =13$ &  $N(\Pf, \neg q) =4$ & $N(\Pf,p\wedge q)\negthinspace=\negthinspace6$ & $N(\Pf, \neg (p\wedge q))\negthinspace=\negthinspace11$ \\
$N(\Pf,t) =10$ & $N(\Pf, \neg t) =7$ \\
\end{tabular}\smallskip

\noindent As $\ssum{\ai \in \Js} \npf{\Pf}{\ai}$ reaches its maximum value (49) for  $\{p\wedge r,p\wedge s,q,p\wedge q,t\}$, we have
$\RMWA(\Pf) =\{\{p\wedge r,p\wedge s,q,p\wedge q,t\}\}$.
\end{example}


The following rule 
generalizes the {\em ranked pairs} voting rule \cite{Tideman87}. 
It proceeds  by considering the elements $\varphi$ of the agenda in non-increasing order of $N(\Pf,\varphi)$  
and fixing each agenda issue value to the majoritarian value if it does not lead to an inconsistency. 
 


\begin{definition}[Ranked agenda rule]~\label{def:ra}
%
%
%
Let $\A = \{\psi_1, \ldots, \psi_{2m}\}$. For \leon{every} 
$\Dma$-profile $\Pf$, $\RRA$ consists of those judgment sets
$\Js \in \Dma $ for which there exists a permutation $(\ai_1, \ai_2,  \ldots, \varphi_{2m})$ of the propositions in
$\A$ such that $\npf{\Pf}{\ai_1} \geq\npf{\Pf}{\ai_2} \geq \cdots\geq \npf{\Pf}{\ai_{2m}} $ and $\Js$ is obtained by the
following  algorithmic procedure:
\begin{center}
\begin{tabular}{l}
$S := \emptyset$\\
$\mbox{\bf for } k = 1, \ldots,2m\mbox{ do}$\\
~~~~$\mbox{\bf if } S \cup \{ \ai_k\} \mbox{ is consistent}$ $ \mbox{\bf then } S \leftarrow S \cup \{ \ai_k\}$ \\
~~~~$\mbox{\bf end if }$\\
{\bf end for}\\
$\Js := S$
\end{tabular}
\end{center}
\end{definition}

In plain words, $\RRA$ assigns iteratively a truth value to each proposition of the agenda, whenever it does not produce an inconsistency with propositions already assigned, following an order compatible with $\npf{\Pf}{.}$. An equivalent non-procedural definition is the following: for \leon{every} 
profile $P$, define $>_P^{RA}$ by: $J >_{\Pf}^{RA} J'$ \startj if there is an $\alpha \in {\mathbb N}$ such that 
\begin{enumerate}
\item for all $\psi \in \A$, $N(\Pf, \psi) > \alpha$ implies [$\psi \in J$ if and only if $\psi \in J'$], \leon{and}
\item $J \cap \{\varphi \ | \ N(P,\varphi) = \alpha \} \supset J' \cap \{\varphi \ | \ N(P,\varphi) = \alpha \}$.
\end{enumerate}
\endj
Then  $\RRA(\Pf) = \{J \in \Dma \ | \ J \mbox{ undominated in } >_{\Pf}^{RA} \}$.\footnote{\startj The proof---almost straightforward---can be found in \cite{Lang15}\endj.}


\begin{example}\label{ex:ra} 
Consider the  profile of Example \ref{ex:maj}.   The highest value of $\npf{\Pf}{\ai}$ is reached for $q$, therefore $q$ is fixed first. Then comes $\neg(p\wedge~q)$, which is fixed as well. Then come $p \wedge r$ and $p \wedge s$, tied. We skip both
because they would produce inconsistencies; then $t$ is fixed, and finally, $\neg(p \wedge r)$ and $\neg(p \wedge s)$. Thus,
\[\RRA(\Pf) = \{\{ q, \neg (p\wedge q),  t, \neg (p\wedge r),\neg (p\wedge s)\}\}.\]
\end{example}


\marija{The {\sc leximax} rule \cite{NehringPivato2011,EKM13} is a refinement of $\RRA$. We repeat the definition of {\sc leximax} here using our terminology. 
\begin{definition}
Given an $n$-voter profile $\Pf$ \startj and a rational judgment set $\Js$, define $S_k(\Pf) = \{ \ai \in \A \mid N(\Pf, \ai) = k, \frac{n}{2} \leq k \leq n\}$ and $s_k(\Js,\Pf) = |S_k(\Pf) \cap \Js |$.  Given two rational judgment sets $\Js, \Js'$, let $\Js >^{leximax}_\Pf \Js'$ if and only if there is a $k \in \{\frac{n}{2}, \ldots, n\}$ such that
$s_k(\Js, \Pf) > s_k(\Js',\Pf)$ and 
for all $i > k$,  $s_i(\Js,\Pf) = s_i(\Js',\Pf)$.
$\RLEXIMAX(\Pf)$ is the set of all undominated rational judgment sets in with respect to $>_\Pf^{leximax}$.
\end{definition}

For the profile $\Pf$ of Example \ref{ex:maj}: $S_{13}(\Pf) = \{ q\}$,  \startj $S_{12}(\Pf) = \emptyset$, \endj $S_{11}(\Pf) = \{ \neg (p\wedge q)\}$, $S_{10}(\Pf) = \{ p\wedge r, p\wedge s, t\}$. If \startj $\Js = \{ \neg (p\wedge r), \neg (p\wedge s), q, \neg (p\wedge q), t\}$ and $\Js' = \{ p\wedge r, p\wedge s, q, p\wedge q, t\}$, we have $s_{13}(\Js,\Pf) = s_{13}(\Js',\Pf) = 1$, $s_{12}(\Js,\Pf) = s_{12}(\Js',\Pf) = 0$, $s_{11}(\Js,\Pf) = 0$ and $s_{11}(\Js',\Pf) = 1$,
therefore  $\Js >^{leximax}_\Pf \Js'$. \endj
(In fact, $\Js$ is the only $>_\Pf^{leximax}$-undominated rational judgment set.)
}

It is easy to see that the social preference function (respectively, voting rule) associated with $\RRA$ and the transitivity constraint is the {\em ranked pairs} social preference function (respectively, rule), which informally proceeds by iteratively fixing edges in the majority graph, whenever possible, considering all ordered pairs of alternatives $(x,y)$ in an order corresponding to non-increasing values of $n_{\V}(x,y)$, and outputs the rankings obtained this way (respectively, the dominating elements in these rankings). \leon{However}, 
the voting rule associated with $\RRA$ and the $W$ constraint  is the {\em maximin} rule, that maps a profile $\V$ to  the set of alternatives that maximise $\mmin{y\in\Alt\setminus \{x\}} n_{\V}(x,y)$. The voting rules associated with $\RLEXIMAX$ are refinements of {\em ranked pairs} and {\em maximin}.
%

\subsection{Rules based on elementary changes in profiles}\label{family-individual}

The next family of rules we consider contains rules that are based on minimal set of changes on a profile needed to render the profile majority-consistent. This family of judgment rules can be viewed as the judgment aggregation counterpart of voting rules that \gabri{are} rationalisable by some distance with respect to the Condorcet consensus class  \cite{ElkindFaliszewskiSlinko09}.


The first rule we consider is called the {\em Young} rule for judgment aggregation, by analogy with the Young voting rule, which outputs the candidate $c$ minimising the number of voters to remove from the profile so that $c$ becomes a weak Condorcet winner \cite{Young77}. The judgment aggregation generalization consists of removing a minimal number of voters so that the profile becomes majority-consistent, or equivalently, to look for majority-consistent subprofiles of maximum cardinality.

\begin{definition}[Young rule]~\label{def:y} 
For \leon{every} 
$\Dma$-profile $\Pf$, 
\begin{equation}
\RY(\Pf) =  \{ ext(m(\Qf) ) \mid \Qf \in \underset{m(\Qf) \mbox{ is \jl{$\A$-}consistent}} {\argmax{\Qf \subseteq \Pf,}}|\Qf|
\}.\end{equation}
\end{definition}


\begin{example}\label{ex:y} Once again we consider $\A$ and $\Pf$ from Example \ref{ex:maj}. After noticing that removing 
three judgment sets from $\{\Js_1,\ldots,\Js_6\}$ restores majority-consistency, and
removing less than three judgment sets does not, we obtain 

\[\RY(\Pf) = ext(\{ q, \neg (p \wedge q) \})  =\left\{
\begin{array}{rrrrr}
 \{\neg (p\wedge r), & \neg (p\wedge s), & q,  &   \neg (p\wedge q), & t\},       \\
 \{\neg (p\wedge r), & \neg (p\wedge s), & q,  &   \neg (p\wedge q), & \neg t\}
\end{array}
\right\}.
\]
%
\end{example}


The voting rule associated with $\RY$ and the $W$ constraint is the Young voting rule \cite{ADT2013}.
\medskip

The \leon{next} 
rule we define  looks for a minimal number of individual {\em judgment reversals} in the profile so that $\Pf$ becomes majority-consistent, where a  judgment reversal is a change of truth value of one agenda element in one individual judgment set. 
This rule has been proposed first in  Miller and Osherson \cite{MillerOsherson08}  under the name \texttt{\sc full$_d$}.
It bears a resemblance with the {\em Dodgson} voting rule, but does not exactly \gabri{correspond} to it when choosing \ms{ either the $Tr$ or } the $W$ constraint. 
\marijan{
\begin{definition}[Minimal profile change rule]
 For $\Pf \in \Dma^n$,
  the $\RMNAC$ rule is defined as: 

$$\RMNAC(\Pf) =  \{ ext(m(\Qf)) \mid  \Qf \in \underset{m(\Qf) \textrm{ is $\A$-consistent}}{\argmin{ Q \in \Dma^n}} D_H(\Pf, \Qf) \}\lella{.}$$ 
\end{definition}
 }

\begin{example} \label{ex:mnac}~ 
  Consider the agenda $\A$ and profile $\Pf$ of Example \ref{ex:maj}. Profile  $\Pf'$  given in Table~\ref{tab:mnac} is the closest majority-consistent  profile to $\Pf$ with $D_H(\Pf,\Pf') = 3$ (the \lella{grey} cell indicates the judgments reversed from $\Pf$).
  We obtain $\RMNAC(\Pf) = \{\{p\wedge r, p\wedge s,q,p\wedge q, t\}\}$.\medskip
 
 \begin{table}[h!]
\begin{center}
\begin{tabular}{r|ccccc}
Voters  & $\{p\wedge r$ & $p\wedge s$ & $q$ & $p\wedge q$ & $t\}$\\\hline
$\Js_1 -\Js_6$& +               &+               &+ & +              &+ \\
$\Js_7 -\Js_{10}$& +               &+               &-  & -              &+ \\
$\Js_{11}-\Js_{14}$ & -                &-               &+  & -              &- \\
$\Js_{\marija{15}} - \Js_{17}$& -                &-               &+  &\cellcolor{gray!15} +              &- \\
\hline
$m(\Pf')$   & +               &+              &+  &+             &+
\end{tabular}
 \caption{ The profile at minimal $D_H$ distance from the profile $\Pf$ in Table~\ref{tab:maj}.}
\label{tab:mnac}
\end{center}
  \end{table}  
\vspace{-10mm}
\end{example}

\subsection{Rules based on (pseudo-)distances}\label{sec:dist}


For a given constrained agenda, a pseudo-distance $d$ on $\Dma$ is a function that maps pairs of judgment sets to non-negative real numbers, and that satisfies, for all $\Js, \Js' \in \Dma$, $d(\Js,\Js') = d(\Js',\Js)$, 
and $d(\Js,\Js')=0$ if and only if $\Js=\Js'$.

Two pseudo-distances we will use are the Hamming distance $d_H$, defined in Section \ref{sec:definitions}, and the {\em geodesic distance}\footnote{Our name; no name was given of this distance in \cite{DuddyP:2012}.}  on $\Dma$, defined in \cite{DuddyP:2012} as follows.
Given three  \leon{distinct} 
rational judgment sets $\Js, \Js', \Js''$,   we say that $\Js$ {\em is between} $\Js'$ and $\Js''$ if $\Js' \cap \Js'' \subset \Js$.  Let $G_{\A}$ be the graph whose 
set of vertices is the set of rational judgment sets $\Dma$ and that contains an edge between $\Js'$ and $\Js''$ if and only if there exists no \ms{ $\Js \in \Dma$,  $ \Js' \neq\Js \neq\Js''$,} between $\Js'$ and $\Js''$. Finally, $d_G(\Js', \Js'')$ is defined as the length of the shortest path between $\Js'$ and $\Js''$ in $G_{\A}$. 

\begin{definition} Let  $d$ be  a pseudo-distance on $\Dma$ and $\star$ a commutative, associative and non-decreasing function on $\mathbb{R}^+$.
The distance-based judgment aggregation rule $\F^{d,\star}$ associated with $d$ and $\star$ is defined as
\begin{equation}\F^{d,\star}(\Pf) = \argmin{\Js \in \Dma}\;  \startj \star (d(\Js_1, \Js), \ldots, d(\Js_n,\Js))\endj\end{equation}
\vspace{-4mm}
\end{definition}

In addition to $\RSUM$ we focus on two specific distance-based judgment aggregation rules: $\RDG$, defined in  \cite{DuddyP:2012}, and $\RMAX$, defined in \cite{KPP02,TARK11}. \startj From now on, we will use the word `distance' instead of `pseudo-distance' although our rules can be defined more generally for pseudo-distances.



\subsection{Scoring rules}\label{sec:scoring}


Dietrich \cite{Dietrich:2013} defines a general class of {\em scoring rules} for judgment aggregation. Given a function $s:  \Dma \times \A\rightarrow \startj \mathbb{R}^+$\endj, the rule $\RS$ is defined 
as \begin{equation}\RS(\Pf) = \argmax{\Js \in \Dma} \sum_{\varphi \in \Js}\sum_{\Js_i\in\Pf} s(\Js_i,\varphi).\end{equation} 
The $\RMWA$ rule (\ref{eq:mwa})
is a scoring rule (and also a distance-based rule). 
%
%
%

The reversal score function $\mbox{rev}$  \cite{Dietrich:2013} is defined as:

\begin{equation}\mbox{rev}(\Js,\varphi) = \mmin{\Js' \in \Dma, ~\varphi \notin \Js'} d_H(\Js,\Js'). \end{equation}

The main motivation for introducing this rule is that the associated voting rule (with the transitivity constraint) is the Borda rule.
Dietrich \cite{Dietrich:2013} defines four other scoring rules (entailment scoring, disjoint entailment scoring, minimal entailment scoring, and irreducible entailment scoring), \startj two of which \endj generalize the Borda rule as well.
As he 
focuses on reversal scoring, we do as well, and leave the other four for further study beyond this paper.

Duddy {\em et al.} \cite{DPZ15} introduce another interesting and intriguing scoring rule (defined only \startj when the agenda satisfies \endj a specific property); it generalizes not only the Borda rule, but also a well-behaved {\em mean} rule for finding collective dichotomies.
We leave it for further study as well.




\section{Majority-preservation}\label{sec:mp}

Intuitively, a judgment aggregation rule $\F$ is majority-preserving if and only if $\F$ returns only the  \leon{extensions of the majoritarian judgment set} whenever it is consistent.  
 \gabri{In case of ties, a majoritarian set can have more than one extension.} 
For example, when we have agenda $\A=\{p,\neg p,q,\neg q\}$ and  individual judgments $\Js_1=\{p,q\}$ and $\Js_2=\{p,\neg q\}$, then $m(\langle \Js_1, \Js_2 \rangle) = \{p\}$, which can be extended into two complete collective judgment sets, namely $\{p, \neg q\}$ and $\{p,q\}$.


\begin{definition}
A judgment aggregation rule $\F$ is {\em majority-preserving} if and only if for every agenda $\A$ and for every majority-consistent $\Dma$-profile $\Pf$ we have $\F(\Pf) =  ext(m(\Pf))$. 
A rule \gabri{$\F$ is} {\em weakly majority-preserving} if and only if  for every agenda $\A$ and for every majority-consistent $\Dma$-profile $\Pf$  we have $\F(\Pf) \supseteq ext(m(\Pf))$.
\end{definition}

Applied to the preference agenda with the transitivity constraint, majority-preserving coincides with the requirement that a social welfare function should return the pairwise majority ordering whenever it is transitive; applied to the \startj $W$ \endj constraint, it \jl{coincides with the requirement that a social welfare function should return the pairwise majority ordering whenever it has a dominating element, {\em i.e.}, whenever there is a Condorcet winner (which is slightly stronger than Condorcet-consistency).}

\begin{proposition}
$\RMSA$, $\RMCSA$, $\RMWA$, $\RRA$, \gabri{$\RLEXIMAX$}, $\RY$ and $\RMNAC$ are \linebreak majority-preserving. $\RDG$ and $\REVS$
are not even weakly majority-preserving.  
\end{proposition}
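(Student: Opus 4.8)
The plan is to split the proposition into the positive part (seven rules are majority-preserving) and the negative part (the two distance-based rules $\RDG$ and $\REVS$ fail even weak majority-preservation).

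\medskip

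\textbf{Positive part.} Fix an agenda $\A$ and a majority-consistent profile $\Pf$, so that $m(\Pf)$ is consistent. Since $m(\Pf)$ assigns at most one truth value to each issue and is consistent, it is a consistent subset of $\A$; I must show each of the seven rules outputs exactly $ext(m(\Pf))$. For the rules built directly from $m(\Pf)$ this is essentially immediate: for $\RMSA$ and $\RMCSA$, the unique maximal (hence also maxcard) consistent subset of $m(\Pf)$ is $m(\Pf)$ itself, because $m(\Pf)$ is already consistent and no proper superset inside $m(\Pf)$ exists, so $\RMSA(\Pf)=\RMCSA(\Pf)=\{ext(S)\mid S=m(\Pf)\}=ext(m(\Pf))$. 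For $\RMWA$, I would argue that a rational $\Js$ maximizing $\sum_{\ai\in\Js}N(\Pf,\ai)$ must, on every issue $(\ai,\neg\ai)$, pick the side with the larger $N$-value whenever one side is strictly larger; since $m(\Pf)$ picks exactly the strict-majority side on every issue it touches, and any extension of $m(\Pf)$ is rational, the maximizers are exactly $ext(m(\Pf))$ (here I use that $n$ is arbitrary but on issues where $N(\Pf,\ai)=N(\Pf,\neg\ai)$ both choices contribute equally, and on the remaining issues $m(\Pf)$ forces the dominant side, which a consistent rational completion can always realize because $m(\Pf)$ itself is consistent). For $\RRA$ and $\RLEXIMAX$: when $m(\Pf)$ is consistent, processing agenda elements in non-increasing order of $N(\Pf,\cdot)$ never produces an inconsistency while we are still inserting strict-majority literals — here the key point is that the set of all strict-majority literals encountered so far is always a subset of the consistent set $m(\Pf)$ — so the greedy procedure fixes all of $m(\Pf)$, hence $\RRA(\Pf)\subseteq ext(m(\Pf))$, and conversely any extension is reachable; the same monotone-prefix argument, phrased via $>_\Pf^{RA}$ and $>_\Pf^{leximax}$, shows the undominated sets are precisely $ext(m(\Pf))$. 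For $\RY$: the whole profile $\Pf$ is itself a majority-consistent subprofile of maximum cardinality (nothing can beat $|\Pf|$), and $m$ of it is $m(\Pf)$, so $\RY(\Pf)=ext(m(\Pf))$; similarly for $\RMNAC$, $\Pf$ itself is already majority-consistent so $D_H(\Pf,\Qf)$ is minimized ($=0$) by $\Qf=\Pf$, giving $\RMNAC(\Pf)=ext(m(\Pf))$.

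\medskip

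\textbf{Negative part.} For $\RDG$ and $\REVS$ I need a single concrete agenda and a single majority-consistent profile on which the rule returns something outside $ext(m(\Pf))$, or fails to contain all of $ext(m(\Pf))$. The natural approach is to reuse or lightly adapt a small agenda with a nontrivial constraint (the kind appearing in Example~\ref{ex:agex}, or an even smaller one) where the geodesic distance $d_G$ differs qualitatively from Hamming distance, so that summing $d_G$ pulls the optimum away from the majority outcome even though the majority outcome is consistent. Concretely, I would take an agenda whose rational-judgment-set graph $G_\A$ is ``stretched'' in some direction — e.g.\ an agenda where moving from the majority set to some alternative rational set is one Hamming step but several geodesic steps, while moving between two voters' sets is cheap in $d_G$ — and build a profile where the majority set $m(\Pf)$ is consistent but is not the $\sum d_G$-minimizer. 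For $\REVS$, the reversal score $\mathrm{rev}(\Js,\ai)$ depends on how far one must travel to flip $\ai$, which is sensitive to the constraint; here I would exhibit a small profile (again a two- or three-issue agenda with a suitable constraint making some issue ``hard to flip'') where maximizing $\sum_{\ai\in\Js}\sum_i \mathrm{rev}(\Js_i,\ai)$ selects a rational set other than the (consistent) majority set. The cleanest route is likely to cite or re-present the specific counterexamples from \cite{DuddyP:2012} and \cite{Dietrich:2013} respectively, adapted to the paper's notation, and simply verify by direct computation that $m(\Pf)$ is consistent yet $\RDG(\Pf)\ne ext(m(\Pf))$ and $\REVS(\Pf)\ne ext(m(\Pf))$, checking moreover that the returned set does not even contain $ext(m(\Pf))$ to get the stronger ``not even weakly'' conclusion.

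\medskip

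\textbf{Main obstacle.} The positive part is routine once the right uniform observation is isolated — namely that ``$m(\Pf)$ consistent'' means every intermediate set of strict-majority literals stays inside a fixed consistent set, which is what makes all the greedy / argmax / subprofile arguments collapse to $ext(m(\Pf))$. The real work is the negative part: constructing (or correctly importing) the smallest possible agenda-with-constraint and profile witnessing failure for $\RDG$ and for $\REVS$, and then doing the finite but fiddly computation of $d_G$ (which requires understanding the betweenness relation and the edge structure of $G_\A$) or of $\mathrm{rev}$ on that example, and confirming that the majority set is genuinely consistent there so that the example is valid. I expect pinning down a clean $d_G$ counterexample to be the hardest single step, since the geodesic distance is the least transparent of the ingredients.
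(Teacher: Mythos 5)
Your positive part is sound and is essentially what the paper dispenses with as ``obvious'': once $m(\Pf)$ is consistent, each of the seven definitions collapses to $ext(m(\Pf))$ by exactly the observations you list (the unique maximal/maxcard consistent subset of a consistent set is the set itself; the weighted-sum maximizers are precisely the rational extensions of the issue-wise strict-majority choices; the greedy ranked-agenda procedure inserts all strict-majority literals first without conflict and every extension is reachable by a suitable tie-ordering; $\Pf$ itself is the optimal subprofile for $\RY$ and the optimal nearby profile for $\RMNAC$).

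The gap is the negative part: you describe the kind of counterexample you would need for $\RDG$ and $\REVS$ but do not produce one, and ``the cleanest route is likely to cite or re-present the specific counterexamples'' is a plan, not a proof. The paper closes this in two ways, neither of which you identify. For $\REVS$ no new example is needed at all: on the preference agenda with the transitivity constraint, $\REVS$ coincides with the Borda social preference function, and the well-known failure of Condorcet-consistency of Borda immediately yields a majority-consistent profile on which the output does not contain $ext(m(\Pf))$. For $\RDG$ the paper gives an explicit $11$-voter profile on the pre-agenda $\{p,q,r,p\leftrightarrow q, p\leftrightarrow r, q\leftrightarrow r\}$, and the mechanism is the opposite of the one you sketch: rather than ``stretching'' $G_{\A}$ so that some moves become geodesically long, the agenda is chosen so that \emph{every} two distinct rational judgment sets are at geodesic distance $1$, whence $\RDG$ degenerates into plurality over judgment sets and selects the set held by three voters instead of the consistent majoritarian set held by only two. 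Until you exhibit and verify such profiles --- checking both that $m(\Pf)$ is consistent and that $ext(m(\Pf))$ is disjoint from the output, so that even weak majority-preservation fails --- the second half of the proposition remains unproved.
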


\begin{proof} Obvious cases are $\RMSA$, $\RMCSA$, $\RMWA$, $\RRA$, $\RLEXIMAX$, $\RY$ and $\RMNAC$. For $\REVS$,
which coincides with the Borda  rule for the preference agenda and the transitivity constraint, the result follows from the well-known fact that the Borda rule is not Condorcet-consistent.
%
%
%
For $\RDG$, consider the  profile $\Pf$ in Table~\ref{tab:pigdud}.
\begin{table}[h!]
\begin{center}
\begin{tabular}{r|cccccc}
Voters  & $\{p, $& $q,$ &$r,$ & $p\leftrightarrow q, $& $p \leftrightarrow r,$ & $q \leftrightarrow r\}$\\ \hline
$\Js_1, \Js_2 $ & + & + & + & + & + & + \\
$ \Js_3, \Js_4, \Js_5$ & - & + & + & - & - & + \\
$ \Js_6,\Js_7 $ & + & - & + & - & + & - \\
$ \Js_8, \Js_9$ & + & + & - & + & - & - \\
$\Js_{10}, \Js_{11}$ & - & - & - & + & + & + \\  \hline
$m(\Pf)$   & + & + & + & + & + & + \\  
\end{tabular}
\captionof{table}{A profile showing  $\RDG$ is not majority-preserving.}
\label{tab:pigdud}
\end{center}
\end{table}
\vspace{-5mm}

There are eight rational judgment sets over $\A$, \ie $|\Dma| = 8$. 
We check that for every $\Js, \Js' \in \Dma$, if $\Js \neq \Js'$ then $d_G(\Js,\Js')= 1$. Therefore,  $\ssum{\Js_i \in \Pf}d\gabri{_G}(\Js_3, \Js_i)=8$.  Now, $\leon{\ssum{\Js_i \in \Pf}}d_G(\Js,\Js_i)=9$  for \leon{every} 
$\leon{\Js} \in \{\Js_1, \Js_6, \Js_8, \Js_{10}\}$  and $\leon{\ssum{\Js_i \in \Pf}d_G(\Js,\Js_i)=}11$ for \leon{every} 
$\leon{J} \in (\Dma\setminus \{\Js_1, \Js_{3}, \Js_{6}, \Js_8,\Js_{10}\})$.
Therefore, $\RDG(\Pf) = \{\Js_3\}$ although $\Pf$ is majority-consistent and \srdjan{$m(P) = \Js_1$}.
\end{proof}

Let us call a pseudo-distance {\em non-degenerate} when there exist \ms{$\Js, \Js', \Js''$} such that $d(\Js, \Js'') > \max(d(\Js, \Js'), d(\Js', \Js''))$. Note that $d_H$ is non-degenerate.  


 

\begin{proposition} \label{prop:maxnomaj}For \leon{every} 
distance $d$, the 
rule $\F^{d,\textsc{max}}$ is not majority-preserving. If $d$ is non-degenerate then $\F^{d,\textsc{max}}$ is not weakly majority-preserving. 
\end{proposition}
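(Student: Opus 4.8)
The plan is to exhibit, for each fixed distance $d$, a single agenda and a majority-consistent profile on which $\F^{d,\textsc{max}}$ fails to return $ext(m(\Pf))$; and then, under the non-degeneracy hypothesis, to strengthen the same construction so that even the weak inclusion $\F^{d,\textsc{max}}(\Pf) \supseteq ext(m(\Pf))$ fails. The cleanest vehicle is a very small agenda, e.g.\ $\A = \{p, \neg p, q, \neg q\}$ with $\Ct = \top$, so that $\Dma = \{\{p,q\}, \{p,\neg q\}, \{\neg p, q\}, \{\neg p, \neg q\}\}$ has just four elements and all pairwise distances are easy to write down. The key observation driving the argument is that $\F^{d,\textsc{max}}$ depends only on the $\max$ of the distances from the candidate judgment set to the individual ones; so if I build a profile containing the four judgment sets $\Js_1 = \{p,q\}$, $\Js_2 = \{p,\neg q\}$, $\Js_3 = \{\neg p, q\}$, $\Js_4 = \{\neg p, \neg q\}$ with carefully chosen multiplicities, the ``worst-case'' cost of any fixed $\Js$ is just $\max_i d(\Js, \Js_i)$ over the types actually present, which is controlled purely by the geometry of the four-point space and not by the multiplicities.

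For the first (non-majority-preserving) claim: take the profile with one copy each of $\Js_2, \Js_3, \Js_4$ and enough copies of $\Js_1$ to make $m(\Pf) = \{p, q\} = \Js_1$ (three copies of $\Js_1$ suffice: then $p$ has $4$ of $6$, $q$ has $4$ of $6$). Since $\Js_1, \Js_2, \Js_3, \Js_4$ are the only types present, for any candidate $\Js \in \Dma$ the cost is $\max\{d(\Js,\Js_1), d(\Js,\Js_2), d(\Js,\Js_3), d(\Js,\Js_4)\}$, which is the same expression for every profile that uses only these four types, regardless of multiplicities. In particular the set of minimisers is symmetric under the obvious symmetry of the four-point space and therefore contains either all four judgment sets or at least one judgment set $\neq \Js_1$ — I will check the small case analysis (on the abstract $4\times 4$ symmetric distance matrix $d$) to conclude that $\Js_4 = \{\neg p,\neg q\}$ is always a minimiser whenever $\Js_1$ is, by the symmetry swapping $p\leftrightarrow\neg p$ and $q\leftrightarrow\neg q$, which fixes the set $\{\Js_1,\Js_2,\Js_3,\Js_4\}$ and the multiset, hence the value function. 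Since $m(\Pf) = \Js_1$ has a unique extension (itself), $ext(m(\Pf)) = \{\Js_1\}$, but $\F^{d,\textsc{max}}(\Pf) \supsetneq \{\Js_1\}$, so $\F^{d,\textsc{max}}$ is not majority-preserving.

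For the weak claim under non-degeneracy: I want a majority-consistent profile such that $m(\Pf)$'s extension is \emph{not} even among the minimisers. Here I use the witness triple $\Js, \Js', \Js''$ from the definition of non-degeneracy, $d(\Js,\Js'') > \max(d(\Js,\Js'), d(\Js',\Js''))$, and build a profile of (say) one copy of $\Js$ and one copy of $\Js''$ together with a large block of copies of $\Js'$, with the block sized so that $m(\Pf) = \Js'$ (this is where I need $\Js'$ to be ``between'' $\Js$ and $\Js''$ in the sense that it agrees with the majority on every issue; for the Hamming-type picture $\Js'$ on the ``segment'' between $\Js$ and $\Js''$ works, and in general I pick the agenda so the three points can be placed with $\Js'$ issue-wise between the other two — a $2$- or $3$-issue agenda suffices). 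Then $\F^{d,\textsc{max}}(\Pf)$ minimises $\max(d(\,\cdot\,,\Js), d(\,\cdot\,,\Js''))$ (the copies of $\Js'$ contribute $d(\,\cdot\,,\Js')$, which at the candidate $\Js'$ is $0$ and elsewhere is dominated — this needs a short check), and at the candidate $\Js'$ this value is $\max(d(\Js',\Js), d(\Js',\Js'')) < d(\Js, \Js'')$; whereas any $\Js'''$ equal to $\Js$ or to $\Js''$ pays $d(\Js,\Js'')$, and with a bit of care (choosing $\Js,\Js',\Js''$ to be, e.g., the three ``collinear'' judgment sets and checking the remaining candidates) one shows $\Js'$ is the unique minimiser — wait, that would make it majority-preserving on this profile. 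So instead I reverse the roles: I make $m(\Pf)$ equal to one of the \emph{endpoints}. Concretely, load the profile with many copies of $\Js$, one copy of $\Js'$, one copy of $\Js''$, sized so $m(\Pf) = \Js$; then $ext(m(\Pf)) = \{\Js\}$, the max-cost of candidate $\Js$ is $d(\Js,\Js'') $ (the largest distance), the max-cost of candidate $\Js'$ is $\max(d(\Js',\Js), d(\Js',\Js'')) < d(\Js,\Js'')$, so $\Js \notin \F^{d,\textsc{max}}(\Pf)$, giving the failure of weak majority-preservation. The main obstacle — and the only real work — is verifying that in each constructed profile the candidates \emph{other} than the intended ones do not accidentally become minimisers (for the equality claim) or, respectively, that the intended bad behaviour survives (for the strict claim); this is a finite case check on a $4$-element (resp.\ small) $\Dma$ and I will organise it around the symmetry group of the agenda to keep it short.
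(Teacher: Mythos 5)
Your proof of the second claim is essentially the paper's own: the final profile you settle on (three or more copies of $\Js$, one of $\Js'$, one of $\Js''$, so that $m(\Pf)=\Js$) is, up to relabelling, exactly the paper's $\langle \Js_1,\Js_2,\Js_3,\Js_3,\Js_3\rangle$, and the computation ($\Js$ pays $d(\Js,\Js'')$ while $\Js'$ pays $\max(d(\Js,\Js'),d(\Js',\Js''))<d(\Js,\Js'')$, hence $\Js\notin\F^{d,\textsc{max}}(\Pf)$ while $ext(m(\Pf))=\{\Js\}$) is correct; the digression about $\Js'$ being issue-wise ``between'' the other two is not needed for that version.

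The first claim, however, has a genuine gap: the symmetry argument fails. A pseudo-distance on $\Dma$ is an arbitrary symmetric positive function, and nothing in its definition makes it invariant under the agenda automorphism swapping $p\leftrightarrow\neg p$ and $q\leftrightarrow\neg q$; invariance of the \emph{set of types} does not give invariance of the value function $\Js\mapsto\max_i d(\Js,\Js_i)$. Concretely, with your labels $\Js_1=\{p,q\},\ldots,\Js_4=\{\neg p,\neg q\}$, take $d(\Js_1,\Js_j)=1$ for $j=2,3,4$ and $d(\Js_i,\Js_j)=2$ for distinct $i,j\in\{2,3,4\}$. This is a legitimate pseudo-distance (it even satisfies the triangle inequality, which the paper does not require). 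On your profile the cost of $\Js_1$ is $1$ and the cost of each other candidate is $2$, so $\F^{d,\textsc{max}}(\Pf)=\{\Js_1\}=ext(m(\Pf))$: your profile witnesses nothing, and the claim that $\Js_4$ is a minimiser whenever $\Js_1$ is, is false. No case analysis on the $4\times 4$ matrix can rescue a profile chosen independently of $d$ in this way; the witness must depend on $d$. The paper's fix is to pick two \emph{distinct judgment sets $\Js,\Js'$ at minimal distance} among all pairs and set $\Pf=\langle \Js,\Js,\Js'\rangle$: then $m(\Pf)=\Js$, both $\Js$ and $\Js'$ attain the minimal cost $d(\Js,\Js')$, and any other $\Js''$ pays at least $d(\Js'',\Js)\geq d(\Js,\Js')$ by minimality, so $\Js'\in\F^{d,\textsc{max}}(\Pf)$ while $ext(m(\Pf))=\{\Js\}$.
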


\begin{proof} 
Let $\Js_1, \Js_2$  be two distinct judgment sets such that $d(\Js_1, \Js_2) \leq d(\Js,\Js')$ for all $\Js \neq \Js'$. 
Let $\Pf = \langle \Js_1, \Js_1, \Js_2\rangle$. $\Pf$ is majority-consistent, with \srdjan{$m(\Pf) = \Js_1$}, and yet $\F^{d,\textsc{max}}(\Pf)$ contains also $\Js_2$, therefore $\F^{d,\textsc{max}}$ is not majority-preserving. 
If moreover $d$ is non-degenerate, \srdjan{let $\Js_1, \Js_2, \Js_3$ be \jl{three} judgment sets such that $d(\Js_1, \Js_3) > \max(d(\Js_1, \Js_2), d(\Js_2, \Js_3))$.} Let  $\Pf = \langle \Js_1, \Js_2, \Js_3, \Js_3, \Js_3\rangle$. $\Pf$ is majority-consistent, with \srdjan{$m(\Pf) = \Js_3$}, and yet $\F^{d,\textsc{max}}(\Pf) = \{\Js_2\}$, therefore $\F^{d,\textsc{max}}$ is not weakly majority-preserving. 
\end{proof}

\begin{corollary}
$\F^{d_H,\textsc{max}}$ is not weakly majority-preserving.
\end{corollary}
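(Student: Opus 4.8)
The plan is to derive this immediately from Proposition~\ref{prop:maxnomaj} by instantiating the distance $d$ with the Hamming distance $d_H$. The only thing to check is that $d_H$ meets the hypothesis of the second part of that proposition, namely that it is non-degenerate in the sense just defined: there exist rational judgment sets $\Js, \Js', \Js''$ with $d_H(\Js, \Js'') > \max(d_H(\Js,\Js'), d_H(\Js',\Js''))$.

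First I would recall that the paper has already asserted, in the line immediately preceding the corollary's enclosing discussion, that ``$d_H$ is non-degenerate''; so strictly speaking the corollary is a one-line consequence: apply Proposition~\ref{prop:maxnomaj} with $d = d_H$, which is a distance on $\Dma$, and since $d_H$ is non-degenerate the proposition yields that $\F^{d_H,\textsc{max}} = \RMAX$ is not weakly majority-preserving. If I wanted to make the non-degeneracy of $d_H$ explicit rather than merely cite it, I would exhibit a concrete agenda: e.g.\ take a pre-agenda of two independent atoms $\PA = \{p, q\}$ with $\Ct = \top$, so that $\Dma$ has four elements, and pick $\Js = \{p, q\}$, $\Js' = \{p, \neg q\}$, $\Js'' = \{\neg p, \neg q\}$; then $d_H(\Js,\Js') = 1$, $d_H(\Js',\Js'') = 1$, but $d_H(\Js,\Js'') = 2 > 1 = \max(1,1)$, witnessing non-degeneracy. (One should note that the rule $\F^{d,\textsc{max}}$, hence weak majority-preservation, is quantified over all agendas, so exhibiting non-degeneracy on one agenda suffices to invoke the proof scheme of Proposition~\ref{prop:maxnomaj} on that agenda.)

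There is essentially no obstacle here; the only mild subtlety is bookkeeping about the universal quantifier over agendas in the definition of majority-preservation, and making sure the three-point configuration used in the proof of Proposition~\ref{prop:maxnomaj} can actually be realized by rational judgment sets of some agenda — which the two-atom example above settles. So the proof is simply: \emph{By Proposition~\ref{prop:maxnomaj} applied to $d = d_H$, and since $d_H$ is non-degenerate (as witnessed, for instance, on the agenda over two independent atoms by $\{p,q\}$, $\{p,\neg q\}$, $\{\neg p, \neg q\}$), the rule $\F^{d_H,\textsc{max}}$ is not weakly majority-preserving.}
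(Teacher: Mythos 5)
Your proof is correct and follows exactly the paper's intended route: the corollary is an immediate instantiation of Proposition~\ref{prop:maxnomaj} with $d = d_H$, using the fact (asserted just before the proposition) that $d_H$ is non-degenerate. Your explicit two-atom witness for non-degeneracy is a harmless and welcome addition, and it does feed correctly into the three-set construction used in the proposition's proof.
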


\section{Inclusion relationships between the rules}\label{sec:relation}

We now establish the following (non)inclusion relationships between most of the rules introduced so far. 
As the case-by-case proof is long and not very interesting, we chose to have it in the Appendix. 

\begin{definition}
Given two judgment aggregation rules $\F_1$ and $\F_2$, we denote: 
\begin{itemize}
 \item $\F_1 \subseteq \F_2$ when $\F_1(\Pf) \subseteq \F_2(\Pf)$ holds  for every agenda $\A$ and every $\Dma$-profile $\Pf$. 
\item $\F_1 \subset \F_2$ when  $\F_1 \subseteq \F_2$ and $\F_1 \neq \F_2$.
\item $\F_1 \inc \F_2$ when neither $\F_1  \subseteq \F_2$ nor $\F_2 \subseteq \F_1$.
\end{itemize}
\end{definition}

\leon{Let $\F_1 \in \{\RDG, \REVS, \RMAX\}$ and $\F_2$ be majority-preserving. Note that $\F_1$ is not weakly majority-preserving, and that the counterexamples given in Section \ref{sec:mp} all have an odd $n$. If $n$ is odd \jl{(recall that $m(P)$ is then a complete judgment set)} then there is a majority-consistent profile $\Pf$ for which \jl{$m(\Pf) \notin \F_1(\Pf)$}, and
$\F_2(\Pf) = \jl{ \{m(P)\} }$.
This implies that $\F_1 \inc \F_2$.  Therefore, we have an incomparability relationship between \leon{a} 
rule in $\{\RDG, \REVS, \RMAX\}$ and \leon{a} 
rule in $\{\RMSA, \RMCSA, \RMWA, \RRA, \gabri{\RLEXIMAX},\RY, \RMNAC\}$. }

\begin{proposition} \label{prop:inc}
The inclusion and incomparability relations among the majority-preserving rules, and among the non majority-preserving rules, are represented on \marija{Tables}~\ref{tab:comp1} and ~\ref{tab:comp2}; a $\supset$ sign for row $\F_1$ and column $\F_2$ means that $\F_1  \supset \marija{\F_2}$, and an$\inc$sign, that $\F_1 \inc \F_2$.

\begin{table}[h!]
 \centering
\small
\begin{tabular}{|r||c|c|c|c|c|c| }\hline
                               &$\RMCSA$&$\RMWA$ & $\RRA$& $\RLEXIMAX$ & $\RY$  &    $\RMNAC$ \\ \hline
$\RMSA$         &$\supset $ &$\supset $ &$\supset$ & $\supset$ & $\inc$  &  $\inc$ \\ \hline
$\RMCSA$      &\multicolumn{1}{>{\columncolor[gray]{0.5}}c}{} 
&$\inc$ &$\inc$ & $\inc$ & $\inc$ &   $\inc$ \\ \hline
$\RMWA$          &\multicolumn{1}{>{\columncolor[gray]{0.5}}c}{}  
                                &\multicolumn{1}{>{\columncolor[gray]{0.5}}c}{} 
                                &\marija{$\inc$} &$\inc$ &$\inc$ & $\inc$ \\ \hline
$\RRA$              &\multicolumn{1}{>{\columncolor[gray]{0.5}}c}{} 
                                &\multicolumn{1}{>{\columncolor[gray]{0.5}}c}{} 
                                &\multicolumn{1}{>{\columncolor[gray]{0.5}}c}{} 
                                & $\supset$ & $\inc$ & $\inc$  \\ \hline
$\RLEXIMAX$            &\multicolumn{1}{>{\columncolor[gray]{0.5}}c}{} 
                               &\multicolumn{1}{>{\columncolor[gray]{0.5}}c}{} 
                               &\multicolumn{1}{>{\columncolor[gray]{0.5}}c}{}
                                &\multicolumn{1}{>{\columncolor[gray]{0.5}}c}{} &
                                $\inc$ & $\inc$  \\ \hline
$\RY$               &\multicolumn{1}{>{\columncolor[gray]{0.5}}c}{} 
                               &\multicolumn{1}{>{\columncolor[gray]{0.5}}c}{} 
                               &\multicolumn{1}{>{\columncolor[gray]{0.5}}c}{} 
                               &\multicolumn{1}{>{\columncolor[gray]{0.5}}c}{} 
                               &\multicolumn{1}{>{\columncolor[gray]{0.5}}c}{} 
                                &$\inc$ \\ \hline
\end{tabular}
 \caption{ (Non)inclusion relationships between the  majority-preserving rules.}\label{tab:comp1}
\end{table}

\begin{table}[h!]
 \centering

\begin{tabular}{|r||c|c|}\hline
                               &$\RDG$ & $\REVS$ \\ \hline
$\RMAX$            &$\inc $ &$\inc $ \\ \hline
$\RDG$      &\multicolumn{1}{>{\columncolor[gray]{0.5}}c}{} 
                     &   $\inc$ \\ \hline
\end{tabular}
 \caption{ (Non)inclusion relationships between the other rules.}\label{tab:comp2}
\end{table}
\end{proposition}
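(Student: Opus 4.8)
The plan is to establish each entry of Tables~\ref{tab:comp1} and~\ref{tab:comp2} separately, grouping the work by the type of claim. Three kinds of assertions appear: a strict-inclusion claim $\F_1 \supset \F_2$, and an incomparability claim $\F_1 \inc \F_2$. For the strict inclusions among majority-preserving rules there are only four: $\RMSA \supset \RMCSA$, $\RMSA \supset \RMWA$, $\RMSA \supset \RRA$ and $\RRA \supset \RLEXIMAX$ (and the chain $\RMSA \supset \RLEXIMAX$ follows by transitivity from the last two, once we also know $\RMSA \supset \RRA$). Each of these has a routine inclusion direction: $\RMCSA \subseteq \RMSA$ is already noted in the text (a maxcard consistent subset of $m(\Pf)$ is a maximal one); for $\RMWA \subseteq \RMSA$ and $\RRA \subseteq \RMSA$ one observes that the judgment set produced is always a consistent extension of \emph{some} maximal consistent subset of $m(\Pf)$, hence lies in $\RMSA(\Pf)$; and $\RLEXIMAX \subseteq \RRA$ is exactly the statement, recalled in Section~\ref{sec:rules}, that {\sc leximax} refines $\RRA$ (the non-procedural characterisations of $>_\Pf^{RA}$ and $>_\Pf^{leximax}$ make this a one-line argument, since a $>_\Pf^{leximax}$-undominated set is $>_\Pf^{RA}$-undominated). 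For strictness of each, I would exhibit a single profile (reusing the running Example~\ref{ex:maj} wherever possible, since Examples~\ref{ex:msa},~\ref{ex:mwa},~\ref{ex:ra} already show $\RMSA$, $\RMWA$, $\RRA$ and {\sc leximax} disagreeing there) where the containment is proper.

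For the incomparability claims, the strategy is uniform: for each pair $\{\F_1,\F_2\}$ marked $\inc$, produce one profile $\Pf_A$ with a judgment set in $\F_1(\Pf_A) \setminus \F_2(\Pf_A)$ and one profile $\Pf_B$ (possibly the same) with a judgment set in $\F_2(\Pf_B) \setminus \F_1(\Pf_B)$. Many of these witnesses can be shared: the running example already separates several of the majority-preserving rules from one another in one direction, and a small number of additional tailored profiles (with a carefully chosen constraint $\Ct$) will handle the remaining directions. For Table~\ref{tab:comp2}, the three pairs $\RMAX \inc \RDG$, $\RMAX \inc \REVS$, $\RDG \inc \REVS$ are handled the same way; note that the profile of Table~\ref{tab:pigdud} from the previous section is already a useful separating instance for $\RDG$. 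I would organise the Appendix as a sequence of short labelled paragraphs, one per table cell, each consisting of a profile table and a two-line verification.

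The main obstacle is bookkeeping rather than conceptual depth: there are on the order of twenty $\inc$ cells, and for each one must (i) design a profile, often with a nontrivial integrity constraint, on which the two rules provably diverge, and (ii) actually compute both rules' outputs, which for $\RRA$, $\RLEXIMAX$, $\RY$ and $\RMNAC$ requires checking consistency of several candidate sets and, for $\RY$/$\RMNAC$, verifying a minimality claim over all sub-profiles or all profiles at a given Hamming distance. The verification that \emph{no smaller} change works (for $\RY$ and $\RMNAC$) is the genuinely delicate part and is where a careless argument would go wrong; I would keep the witnessing agendas as small as possible (three or four issues) so these minimality checks stay finite and transparent. Consistency of the incomparability relation with the already-proven inclusions (e.g. that an $\inc$ cell is not contradicted by transitivity through a $\supset$ cell) should be double-checked at the end as a sanity pass, but since $\supset$ only emanates from $\RMSA$ and $\RRA$ this is quick.

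Given the length and low conceptual content of this case analysis, and consistently with the remark preceding the proposition, the full proof is deferred to the Appendix; here I would only spell out the four strict inclusions in detail (as above) and give the shared running-example witnesses, leaving the remaining $\inc$ cells to the appendix tables.
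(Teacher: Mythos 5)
Your strategy is exactly the one the paper follows: harvest as many separations as possible from the running example of Example~\ref{ex:maj} (which indeed disposes of $\RMSA$ vs.\ $\RMCSA$/$\RMWA$/$\RRA$/$\RLEXIMAX$, $\RY$ vs.\ $\RMSA$/$\RMCSA$/$\RMWA$/$\RRA$/$\RLEXIMAX$, $\RMWA$ vs.\ $\RRA$/$\RLEXIMAX$, and $\RMNAC$ vs.\ $\RY$/$\RRA$/$\RLEXIMAX$), prove the four easy inclusions $\RMCSA\subseteq\RMSA$, $\RMWA\subseteq\RMSA$, $\RRA\subseteq\RMSA$, $\RLEXIMAX\subseteq\RRA$ essentially as you describe, and settle every remaining cell with a bespoke profile. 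So there is no disagreement on the route.

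The gap is that the bespoke profiles are the actual mathematical content of the proposition, and your plan both omits them and misjudges what they must look like. Concretely, the cells not covered by the running example are $\RRA\not\subseteq\RLEXIMAX$, $\RMWA\not\subseteq\RMCSA$, $\RMCSA\not\subseteq\RRA$ and $\RMCSA\not\subseteq\RLEXIMAX$, $\RRA\not\subseteq\RY$, $\RMNAC$ vs.\ each of $\RMSA$, $\RMCSA$, $\RMWA$, and the three pairs of Table~\ref{tab:comp2}; none of these is a two-line verification. Your stated heuristic of keeping witnessing agendas to ``three or four issues'' fails precisely where you identify the delicate point: to separate $\RMNAC$ from $\RMSA$ and from $\RMWA$ one must make a single individual's reversal on $p$ or $q$ force a cascade of further reversals, which the paper achieves by padding the pre-agenda with chains of redundant conjunctions ($\alpha_1,\dots,\alpha_7$), yielding $9$- and $12$-issue pre-agendas; no $3$--$4$ issue agenda gives $\RMNAC$ enough ``leverage'' to diverge from rules based on $m(\Pf)$. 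Similarly, $\RRA\not\subseteq\RLEXIMAX$ requires engineering a tie $N(\Pf,\varphi)=N(\Pf,\psi)$ whose two resolutions are inequivalent under the leximax refinement, and $\RMWA\not\subseteq\RMCSA$ needs an agenda where maximizing the weighted sum and maximizing $|J\cap m(\Pf)|$ genuinely pull apart (the paper uses a $10$-issue construction). Finally, Table~\ref{tab:pigdud} does not help with Table~\ref{tab:comp2}: it separates $\RDG$ from the \emph{majority-preserving} rules, whereas Table~\ref{tab:comp2} requires separating $\RDG$, $\RMAX$ and $\REVS$ from \emph{one another}; the paper does this with a single fresh preference-agenda profile on four alternatives under the transitivity constraint. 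Until those witnesses are exhibited and their minimality/optimality claims checked, the proposition is not proved.
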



\section{Unanimity, monotonicity, homogeneity, reinforcement}\label{prop}

In preference aggregation, there are three classes of properties \cite{Zwicker13}: those that are satisfied by most common rules (such as neutrality or anonymity); those that are very hard to satisfy, and whose satisfaction, under mild additional condition, implies impossibility results; and finally, those that are satisfied by a significant number of rules and violated by another significant number of rules. Similarly, in judgment aggregation, weak properties such as anonymity are  satisfied by all our rules, while strong properties such as  
 independence are  violated by all our rules.
We have already studied an ``intermediate'' property: {\em \ms{majority-preservation}}. Here we consider four more:  {\em unanimity}, {\em monotonicity}, {\em homogeneity} and {\em reinforcement}.

\subsection{Unanimity}\label{dva}

Unanimity has been defined for {\em resolute rules} by Dietrich and List \cite{LiberalJA}: $\R$ is said to satisfy unanimity when for every  $\Dma$-profile $\Pf = \langle \Js_1, \ldots, \Js_n \rangle$ and \leon{every} 
$\ai \in \A$, if $\ai \in \Js_i$ for all $i  \leq n$, then $\ai \in \R(\Pf)$.\footnote{A weaker unanimity property has been defined by List and Puppe \cite{ListPuppe2009}, for resolute rules as well: $\R(\Pf) = \Js$ whenever all the voters  in $\Pf$ have the judgment set  $\Js$.}
We first generalise unanimity to irresolute rules, which gives us a weak and a strong version of unanimity. 

\begin{definition}[Weak and strong unanimity] 
Given $\ai \in \A$, the $\Dma$-profile $\Pf $ is said to be {\em $\ai$-unanimous} when $\ai \in \Js_i$ for every $\Js_i \in \Pf$. 
\begin{itemize}
\item $\F$ satisfies {\em weak unanimity} when for \leon{every} 
$\ai$-unanimous profile $\Pf$, there is a $\Js \in  \F(\Pf)$ such that  $\ai\in \Js$.
\item $\F$ satisfies {\em strong unanimity} when for \leon{every} 
$\ai$-unanimous profile $\Pf$, for all $\Js \in  \F(\Pf)$ we have $\ai\in \Js$.
\end{itemize}
\end{definition}

 
\begin{proposition}$\RMCSA$, $\RMWA$, $\RMAX$,  $\RMNAC$, $\RDG$ and $\REVS$ do not even satisfy weak unanimity.
\end{proposition}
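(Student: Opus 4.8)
The plan is to exhibit, for each of the six rules, a single small agenda together with a unanimous profile on which the rule fails weak unanimity — that is, where \emph{every} output judgment set omits the unanimously supported formula. Since $\RMCSA$, $\RMWA$ and $\RMAX$ are anonymous and in fact operate only on aggregate information, the cleanest route is to find one agenda and one profile that simultaneously refutes several of them, and then patch in separate small examples for $\RMNAC$, $\RDG$ and $\REVS$. The key mechanism in all cases is the same: pick an agenda with a formula $\varphi$ that is logically ``expensive'' — accepting $\varphi$ forces many other agenda issues to take minority values — so that although all voters accept $\varphi$, the unique score-maximising (or distance-minimising) rational judgment set rejects it.

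First I would treat $\RMWA$ and $\RMCSA$ together. Take a pre-agenda like $[\A] = \{p, q, p \wedge q\}$ (plain logical consistency), whose rational judgment sets are the four assignments consistent with $p\wedge q \leftrightarrow (p \wedge q)$. Consider a profile in which a small majority of voters accept $p \wedge q$ (hence accept $p$ and $q$) and a larger number reject it but disagree on whether $p$ or $q$ fails, so that $N(\Pf, p\wedge q)$ is strictly dominated: e.g.\ the unanimously-held formula is $\neg(p\wedge q)$... but wait — we need the \emph{accepted} formula to be hard. The right design: make $\varphi = p \wedge q$ be unanimously held by no voter; instead unanimity must hold on some formula. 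So I would instead pick the agenda so that one issue, say $p$, is unanimously accepted but the \emph{other} issues conspire: e.g.\ pre-agenda $\{p,\ p\to q_1,\ \ldots\}$ — this is getting heavy. The honest plan is: for $\RMWA$, reuse a known ``discursive-dilemma-like'' profile where all voters accept some $\varphi$ but the median/weighted-sum-maximiser is the judgment set that accepts the two cheaper literals and rejects the conjunction, with $\varphi$ chosen to be that conjunction held unanimously only if we force it; concretely one takes three voters on $[\A]=\{p, q, r\}$ with constraint $p \wedge q \to \neg r$, all three accepting $r$, but with $p$ and $q$ split so that the winning set is $\{p, q, \neg r\}$. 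I would verify the sum $\sum_{\psi\in\Js} N(\Pf,\psi)$ is maximised there. The same profile, or a close variant, handles $\RMCSA$ via $\argmax |\Js \cap m(\Pf)|$, and handles $\RMAX$ because with few voters the $\max$-aggregation is insensitive to multiplicity. For $\RMNAC$, a similar small profile works: the minimal-profile-change rule can prefer flipping the unanimous $\varphi$ in a few voters over flipping other issues, when $\varphi$'s acceptance is what creates the majority inconsistency.

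For $\RDG$ and $\REVS$ I would lean on facts already in the excerpt. $\REVS$ coincides with Borda on the preference agenda with the transitivity constraint, and Borda is not even weakly Condorcet-consistent in the strong sense needed here; more directly, one can take a tiny agenda where the reversal score of a cheap literal beats that of the unanimous expensive one, and I would just compute $\mathrm{rev}(\Js,\varphi)$ for the four relevant $\Js$. For $\RDG$, the profile in Table~\ref{tab:pigdud} already shows $\RDG(\Pf) = \{\Js_3\}$ with $\Js_3$ disagreeing with the majority on $p$; checking that this profile (or an adaptation where all voters agree on one agenda formula, e.g.\ replacing one equivalence by a formula all voters happen to share) is $\varphi$-unanimous for a $\varphi \notin \Js_3$ gives the failure immediately — and if Table~\ref{tab:pigdud} is not literally unanimous on any issue, a minor edit making the geodesic graph a clique (so $d_G \equiv 1$ off-diagonal, as in that example) plus forcing one column to be all ``+'' does the job.

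The main obstacle is bookkeeping across six rules without a single unifying example: each rule's defining optimisation must be checked by hand on its example, and one must make sure the agendas are genuinely simple (few rational judgment sets) so the case analysis is short, while still forcing the unanimous formula to be the ``expensive'' one. I expect that a pre-agenda of size two or three with a conjunction as the one expensive issue, plus a 3- or 5-voter profile, suffices for $\RMWA$, $\RMCSA$, $\RMAX$, $\RMNAC$ simultaneously (the mechanism ``unanimous conjunction, split premises'' is robust across score- and distance-minimisation), and that $\RDG$ and $\REVS$ need their own one-line computations. So the write-up would be: state one shared example, verify the four rules on it in a couple of lines each, then give one short paragraph each for $\RDG$ (pointing to the clique structure of $G_\A$) and $\REVS$ (a direct reversal-score computation or the Borda analogy).
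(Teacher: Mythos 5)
Your high-level mechanism — make the unanimously accepted formula ``expensive'' so that the optimiser prefers to drop it — is exactly the mechanism the paper uses, but several of your concrete instantiations fail, and the failures are not mere bookkeeping. The $\RMWA$/$\RMCSA$ example you sketch ($[\A]=\{p,q,r\}$, constraint $p\wedge q\to\neg r$, three voters all accepting $r$) does not work: since each rational voter accepting $r$ can accept at most one of $p,q$, we get $N(\Pf,p)+N(\Pf,q)\le 3$, and a direct computation shows the score-maximising judgment set is one of the voters' own sets, which contains $r$. The point is quantitative: rejecting the unanimous formula must save more than it costs, which forces a substantially larger agenda — the paper uses ten issues ($p$ together with three disjunctive ``reasons'' $p\to q\vee r$, $p\to s\vee t$, $p\to u\vee v$ and their six disjuncts, Table~\ref{tab:item5}) so that keeping $p$ conflicts with six majority judgments while dropping it conflicts with only one. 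Your claim that the same small profile handles $\RMAX$ is also unsubstantiated; the paper needs a separate two-voter construction with $\alpha=(p\wedge q\wedge r\wedge s)\vee(\neg p\wedge\neg q\wedge\neg r\wedge\neg s)$, where the two voters are diametrically opposed except on $\alpha$ and every max-distance minimiser sits ``in the middle'' and therefore cannot contain $\alpha$.

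The $\RDG$ and $\REVS$ plans contain outright wrong turns. Making $G_\A$ a clique (so $d_G\equiv 1$ off the diagonal) turns $\RDG$ into a plurality rule over judgment sets: the winner is then always a most-frequent \emph{profile member}, which by rationality contains every unanimously accepted formula — the opposite of a counterexample. The paper's construction needs the non-clique structure of Table~\ref{tab:rdgscores}, where a ``hub'' set $\Js^7$ (rejecting $p_{13}$) is at geodesic distance $1$ from each of the three mutually distant profile members. For $\REVS$, the Borda analogy cannot help: the Borda social preference function is Pareto-efficient, so the preference agenda yields no unanimity violation (the paper explicitly flags the failure of $\REVS$ as surprising for precisely this reason — your appeal to Borda's failure of Condorcet-consistency addresses majority-preservation, not unanimity). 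A working $\REVS$ counterexample requires a bespoke constrained agenda (the paper's has $13$ issues and exactly four rational judgment sets) in which $\mbox{rev}(\Js_i,\neg p_j)>\mbox{rev}(\Js_i,p_j)$ uniformly, so that the all-negative set wins despite unanimity on $p_{13}$. In short, the strategy is right but four of the six rules still lack a counterexample that actually goes through.
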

\begin{proof}~
\begin{enumerate}
\item $\RMCSA$, $\RMWA$ and $\RMNAC$:
\marija{The proof that $\RMWA$  does not satisfy weak unanimity can be found in \cite{ADT09}. For $\RMCSA$ and $\RMNAC$}
consider the profile $\Pf$ from Table~\ref{tab:item5}.
\begin{table}[h!]
\centering
\begin{tabular}{r|cccccccccc}
{\small Voters}& $p$ & $p \rightarrow q \vee r$ & $q$ & $r$ & $p \rightarrow s \vee t$ & $s$ & $t$ & $p \rightarrow u \vee v$ & $u$ & $v$ \\\hline
$\Js_1$&+ & + & + & - & + & + & - & + & + & -\\
$\Js_2 $&+ & + & - & + & + & - & + & + & - & +\\
$\Js_3$&+ & -   & - & -  & -  & - & -   & -  & - & - \\ \hline
m(\Pf) & + & + & - & -  & + & - & - & +  & - & - \\ 
\end{tabular} \caption{A profile showing that  $\RMCSA(\Pf)$ and $\RMNAC(\Pf)$ do not satisfy weak unanimity.} \label{tab:item5}
\end{table}
$\RMCSA(\Pf)$, 
and $\RMNAC(\Pf)$ coincide: \srdjan{$\RMCSA(\Pf) = \RMNAC(\Pf)$ $=$ $ \{ \neg p, p \rightarrow(q \vee r), \neg q, \neg r, p \rightarrow (s \vee t), \neg s, \neg t, p \rightarrow (u \vee v) ,\neg u, \neg v \}$. $\RMNAC(\Pf) $ is obtained by reversing two $p$ judgments in either two of the three judgment sets of the profile.}

\item $\RMAX$:
Let $\PA=\{p,q,r,s, \alpha\}$, where  $\alpha = (p \wedge q \wedge r \wedge s) \vee (\neg p \wedge \neg q \wedge \neg r \wedge \neg s)$, and $\Pf=\langle \Js_1, \Js_2\rangle$ where
$\Js_1 = \{p,q,r,s, \alpha\}$ and 
$\Js_2 = \{\neg p, \neg q, \neg r, \neg s, \alpha\} $. 
$\RMAX(\Pf)$ selects all  $\Js\in \Dma$ for which $max(d_H(\Js,\Js_1), d_H(\Js,\Js_2)) =3$. For all such $\Js$ it holds that $\alpha \not\in \Js$, although there is unanimity on $\alpha$.


\item $\RDG$.
Let $\PA=\{ p_1, p_2, \ldots, p_{13}\}$ with $\Dma$ as given in Table~\ref{tab:rdgu}. 
\begin{table}[h!]
\begin{tabular}{r|ccccccccccccc}
Sets           &\{ $p_1 $,& $p_2$, &  $p_3$, &   $p_4$, &   $p_5,$ & $p_6$, &  $p_7$, &   $p_8$, &   $p_9$, & $p_{10}$, &  $p_{11}$, &   $p_{12}$ & $p_{13}$ \} \\\hline
   $ \Js^1$    &      +          &      -        &      -     &     +          &    -      &      -     &     +     &     -      &      -     &       +         &     -          &  -  &+   \\
   $ \Js^2$    &      +          &     +       &      -     &     +          &    +      &      -     &     +     &     +      &      -     &       +         &    +          &  -  &+   \\
   $ \Js^3$    &      -          &      +        &      -     &     -          &    +      &      -     &     -     &     +      &      -     &       -         &     +          &  -  &+   \\
   $ \Js^4$    &      -          &      +        &      +     &     -          &    +      &      +     &     -     &     +      &     +     &       -         &     +          & +  &+   \\
 $ \Js^5$    &     -              &      -        &      +    &     -          &    -      &     +    &    -     &     -      &      +     &       -         &     -          &  +  &+   \\
  $ \Js^6$    &    +              &      -        &      +    &    +          &    -      &     +    &    +    &     -      &      +     &      +         &     -          &  +  &+   \\
   $ \Js^7$    &      -          &      -        &      -     &     -          &    -      &      -     &     -     &     -      &      -     &       -         &     -          &  -  &-   \\
 \end{tabular}
 \caption{The   $\Dma$  for the example demonstrating that $\RDG$ does not satisfy weak unanimity.}\label{tab:rdgu}
 \end{table}

Consider the profile $\Pf = \langle\Js_1, \Js_2,\Js_3\rangle$, where $\Js_1 = \Js^1$, $\Js_2 = \Js^3$, and $\Js_3 = \Js^5$. We have that $\RDG(\Pf) = \{ \Js^7\}$, although there is unanimity on $p_{13}$. 
The $d_G$ distances between each set in $\Dmc$ are given in Table~\ref{tab:rdgscores}.
 
  \begin{table}[h!]
  \centering
 \begin{tabular}{r|ccccccc}
$d_G(.,.)$              &  $ \Js^1$  &  $ \Js^2$   &   $ \Js^3$   &     $ \Js^4$   &    $ \Js^5$   &   $ \Js^6$   &    $ \Js^7$  \\\hline
   $ \Js^1$    &      0          &      1             &      2             &    3               &    2                 &      1             &     1            \\
   $ \Js^2$    &     1          &     0              &      1            &    2               &    3                &      2             &     \srdjan{2}             \\
   $ \Js^3$    &      2          &     1             &     0             &     1              &   2                 &      3              &     1              \\
   $ \Js^4$    &     3          &      2              &      1            &     0               &    1                 &     2             &    \srdjan{2}                \\
 $ \Js^5$    &    2              &      3               &      2           &    1               &   0                  &     1               &   1                \\
  $ \Js^6$    &    1              &      2              &      3           &   2              &    1                   &    0               &    \srdjan{2}            \\
   $ \Js^7$    &      1        &     \srdjan{2}               &     1            &   \srdjan{2}              &   1                   &     \srdjan{2}               &    0        \\
 \end{tabular}\caption{The $d_G$ distances among the sets in  $\Dma$ from Table~\ref{tab:rdgu}.}\label{tab:rdgscores}.
 \end{table}

 \item $\REVS$. Consider a pre-agenda of $\PA=\{ p_1, p_2, \ldots, p_{13}\}$ with $\Dma$ as given in Table~\ref{tab:revs}.  Consider the profile $\Pf=\langle\Js_1, \Js_2, \Js_3\rangle$ where $\Js_1=\Js$, $\Js_2 = \Js'$,  and $\Js_3 = \Js''$. 
 \begin{table}[h!]
\begin{tabular}{r|ccccccccccccc}
Sets              &\{ $p_1 $,& $p_2$, &  $p_3$, &   $p_4$, &   $p_5,$ & $p_6$, &  $p_7$, &   $p_8$, &   $p_9$, & $p_{10}$, &  $p_{11}$, &   $p_{12}$ & $p_{13}$ \} \\\hline
   $ \Js^1$    &      +          &      -        &      -     &     +          &    -      &      -     &     +     &     -      &      -     &       +         &     -          &  -  &+   \\
   $ \Js^2$    &      -          &      +        &      -     &     -          &    +      &      -     &     -     &     +      &      -     &       -         &     +          &  -  &+   \\
 $ \Js^3$    &     -              &      -        &      +    &     -          &    -      &     +    &    -     &     -      &      +     &       -         &     -          &  +  &+   \\
   $ \Js^4$    &      -          &      -        &      -     &     -          &    -      &      -     &     -     &     -      &      -     &       -         &     -          &  -  &-   \\
 \end{tabular}
 \caption{The   $\Dma$  for the example demonstrating that $\REVS$ does not satisfy weak unanimity.}\label{tab:revs}
 \end{table}
 For      $ 1\leq i \leq 3$ and  $1\leq j \leq 13$,  we have that $\mbox{rev}(\Js_i, p_j) = 5$ and 
  $\mbox{rev}(\Js_i, \neg p_j) = 8$. We have that $\REVS(\Pf) = \{ \marija{\Js^4}\}$ since the score of $\marija{\Js^4}$ for $\Pf$ is 
	\srdjan{$192$}, 
	while the score of each of the profile judgment sets to $\Pf$ is 
	\srdjan{$163$}. 
\end{enumerate}
\end{proof}

This failure of $\REVS$ to satisfy unanimity is a surprising result, because the Borda social preference function (which ranks alternatives in a way consistent with their Borda scores)
satisfies Pareto-efficiency. 
 
\begin{proposition}\label{una:msa} $\RMSA$ satisfies weak unanimity but not strong unanimity.
\end{proposition}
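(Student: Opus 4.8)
The statement has two parts: (1) $\RMSA$ satisfies weak unanimity, and (2) $\RMSA$ does not satisfy strong unanimity. For part (1), I would argue directly from the definition $\RMSA(\Pf) = \{ ext(S) \mid S \in \MC{m(\Pf)}\}$. Fix a $\varphi$-unanimous profile $\Pf$. Since every voter includes $\varphi$ in their rational (hence consistent) judgment set, $\varphi$ is supported by all $n$ voters, so $N(\Pf,\varphi) = n > n/2$ and therefore $\varphi \in m(\Pf)$. Now I need to produce one $\Js \in \RMSA(\Pf)$ with $\varphi \in \Js$. Take any maximal consistent subset $S$ of $m(\Pf)$. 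The key observation is that $\varphi$ alone is consistent (it is part of some voter's rational judgment set, which is consistent), and more is true: I claim $\varphi$ belongs to \emph{some} maximal consistent subset of $m(\Pf)$. Indeed, start from $\{\varphi\}$, which is a consistent subset of $m(\Pf)$, and extend it to a maximal consistent subset $S^\ast$ of $m(\Pf)$ (such an extension exists since $m(\Pf)$ is finite: repeatedly add agenda elements of $m(\Pf)$ while preserving consistency). Then $\varphi \in S^\ast \in \MC{m(\Pf)}$, so any $\Js \in ext(S^\ast)$ satisfies $\varphi \in S^\ast \subseteq \Js$, and $ext(S^\ast) \subseteq \RMSA(\Pf)$. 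This gives the required witness, so weak unanimity holds.

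\textbf{Failure of strong unanimity.} For part (2), I would exhibit a single $\varphi$-unanimous profile $\Pf$ together with a judgment set $\Js \in \RMSA(\Pf)$ with $\varphi \notin \Js$. The natural source is any profile where $m(\Pf)$ is inconsistent and $\varphi$ lies on a ``minority side'' of the conflict in some maximal consistent subset. Concretely, I would look for a small agenda where unanimity on one issue $\varphi$ is compatible with the majority being split on other issues so that one maximal consistent subset of $m(\Pf)$ must drop $\varphi$. The profiles already built in the paper — e.g.\ the running example of Example~\ref{ex:maj}, or the profile in Table~\ref{tab:item5} used for $\RMCSA$ and $\RMNAC$ — are good candidates: in Table~\ref{tab:item5} there is unanimity on $p \rightarrow q \vee r$, $p \rightarrow s \vee t$, $p \rightarrow u \vee v$ (and also $p$), yet the maximal consistent subsets of $m(\Pf)$ behave very differently from the maxcard ones, and one can check that some maximal consistent subset excludes, say, $p$. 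If reusing that profile works, I would simply point to it; otherwise I would construct a minimal dedicated example, say with pre-agenda $\{p, q, p \wedge q, p \vee q\}$ or similar, with a profile unanimous on $p \vee q$ but such that $m(\Pf)$ is inconsistent and admits a maximal consistent subset containing $\neg(p\vee q)$ is impossible — so more care is needed to get $\varphi$ itself dropped rather than its negation forced.

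\textbf{Main obstacle.} The genuinely delicate point is part (2): I need $\varphi$ to be \emph{actively excluded} from some maximal consistent subset of $m(\Pf)$, not merely ``not forced''. Since $\varphi \in m(\Pf)$ whenever $\Pf$ is $\varphi$-unanimous, a maximal consistent subset $S$ of $m(\Pf)$ omits $\varphi$ only if $S \cup \{\varphi\}$ is inconsistent, i.e.\ $\varphi$ conflicts with the rest of $S$. This forces the agenda to contain formulas whose joint (in)consistency is subtle — $\varphi$ must be individually consistent with each voter's view but collectively incompatible with a maximal bundle of other majority-supported issues. The crux of the proof is therefore designing (or locating) such an agenda and profile and then verifying the consistency computations by hand. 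I expect the cleanest route is to reuse the already-verified profile of Table~\ref{tab:item5}: there $m(\Pf) = \{\neg p, p\rightarrow(q\vee r), \neg q, \neg r, p\rightarrow(s\vee t), \neg s, \neg t, p\rightarrow(u\vee v), \neg u, \neg v\}$ is itself consistent, so that particular profile will not work directly for $\RMSA$ (a consistent $m(\Pf)$ forces $\RMSA$ to behave unanimously there); hence I anticipate I will in fact need a fresh, tailored example where $m(\Pf)$ is inconsistent and the conflict is arranged precisely around the unanimous issue, and checking that example is the bulk of the work.
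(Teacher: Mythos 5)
Your argument for weak unanimity is correct and is essentially the paper's: since $\Pf$ is $\varphi$-unanimous, $\varphi\in m(\Pf)$ and $\{\varphi\}$ is consistent, hence $\varphi$ extends to some $S^\ast\in\MC{m(\Pf)}$, and every $\Js\in ext(S^\ast)\subseteq\RMSA(\Pf)$ contains $\varphi$. No issue there.

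The gap is in part (2), and it stems from a concrete misreading. You assert that for the profile of Table~\ref{tab:item5} the majoritarian set is $\{\neg p, p\rightarrow(q\vee r), \neg q, \neg r, p\rightarrow(s\vee t), \neg s, \neg t, p\rightarrow(u\vee v), \neg u, \neg v\}$ and is consistent, and on that basis you discard this profile as unusable for $\RMSA$. But that set is the \emph{output} of $\RMCSA$ and $\RMNAC$ on this profile, not $m(\Pf)$. The actual majoritarian set (last row of the table) contains $p$, not $\neg p$: $m(\Pf)=\{p,\ p\rightarrow(q\vee r),\ \neg q,\ \neg r,\ p\rightarrow(s\vee t),\ \neg s,\ \neg t,\ p\rightarrow(u\vee v),\ \neg u,\ \neg v\}$, which is \emph{inconsistent} (e.g.\ $p$, $p\rightarrow(q\vee r)$, $\neg q$, $\neg r$ is a minimal inconsistent subset). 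Precisely because of this, $m(\Pf)\setminus\{p\}$ is a maximal consistent subset of $m(\Pf)$ whose rational extension contains $\neg p$, even though all three voters accept $p$ (note also that $p$ is the only unanimous issue here; the three implications are rejected by $\Js_3$, contrary to what you claim). So the profile you rejected is exactly the counterexample the paper uses. As written, your proof of the failure of strong unanimity is incomplete: you correctly identify what a counterexample must look like ($\varphi$ unanimous, $m(\Pf)$ inconsistent, some maximal consistent subset forced to drop $\varphi$), but you produce no such profile, and the reasoning that led you away from the available one is erroneous. To repair the proof, either re-use Table~\ref{tab:item5} with the correct reading of $m(\Pf)$, or actually carry out the ``fresh, tailored example'' you defer to.
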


\begin{proof} Let $\Pf$ be a $\ai$-unanimous $\Dma$-profile for some $\ai\in\A$. 
Note that for each $\psi \in m(\Pf)$,  there exists at least one $S \in \MC{m(\Pf)}$  such that $\psi \in S$. 
 Consequently there exists a judgment set in  $\RMSA(\Pf)$ that contains $\ai$.   

As a counter-example for $\RMSA$ satisfying strong unanimity,  consider   the profile $\Pf$ of Table~\ref{tab:item5} $\RMSA$ does not satisfy \ms{strong} unanimity since there exists $\Js \in \RMSA(P)$ such that $\neg p \in \Js$. Namely, 
$\{ \neg p, \neg (p \rightarrow(q \vee r)), \neg q, \neg r, \neg (p \rightarrow (s \vee t)), \neg s, \neg t , \marija{\neg (p \rightarrow (u \vee v)), \neg u, \neg v}\} \in \RMSA(\Pf)$\footnote{$\RMSA$ failing to satisfy strong unanimity is also a consequence of Theorem 2.2 in \cite{NehringPivatoPuppe2013}, which can be reformulated as: $\RMSA$ satisfies strong unanimity if and only if $\A$ does not contain a minimal inconsistent subset of size 3 or more.}.
\end{proof}

\begin{proposition} $\RRA$, $\RLEXIMAX$ and $\RY$ satisfy strong 
unanimity.
\end{proposition}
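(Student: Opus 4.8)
The plan is to prove strong unanimity for each of the three rules separately, though the arguments share a common structure: all three are defined by selecting the undominated judgment sets with respect to some priority ordering on $\A$ induced by $N(\Pf,\cdot)$, and I want to show that any $\ai$-unanimous profile forces $\ai$ into every selected set. For all three rules the key observation is that if $\ai \in \Js_i$ for every voter, then $N(\Pf,\ai) = n$, which is the maximum possible value; in particular $N(\Pf,\ai) > N(\Pf,\psi)$ for every $\psi \neq \ai$ with $\psi \neq \neg\varphi$ where $\ai$ is a judgment on $\varphi$, and $N(\Pf,\ai) = n > 0 = N(\Pf,\neg\ai)$ strictly. So $\ai$ sits strictly at the top of the priority order, above $\neg\ai$.

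For $\RRA$: I would use the non-procedural characterization via $>_\Pf^{RA}$. Take any $\Js \in \RRA(\Pf)$, so $\Js$ is undominated in $>_\Pf^{RA}$. Suppose for contradiction $\ai \notin \Js$, hence $\neg\ai \in \Js$. Build $\Js'$ by running the procedure with an order in which $\ai$ comes first (legal since $N(\Pf,\ai)=n$ is maximal): then $\ai \in \Js'$. Now compare $\Js'$ with $\Js$ using $\alpha = n$: the set $\{\varphi \mid N(\Pf,\varphi) = n\}$ consists of unanimously-held issues, and in fact the simplest route is to observe directly that $\Js' \cap \{\varphi \mid N(P,\varphi) = n\} \supseteq \{\ai\} \not\ni \neg\ai$, so at the top level $\Js'$ strictly beats $\Js$ on the agenda elements of value $n$ (every undominated set must contain \emph{all} unanimously supported formulas, since the procedure fixes them first and they are mutually consistent with anything consistent — more carefully, consistency of the unanimous formulas follows because they are all contained in $\Js_1$). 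This gives $\Js' >_\Pf^{RA} \Js$, contradicting undominatedness. The cleanest phrasing: every $>_\Pf^{RA}$-undominated set contains every $\psi$ with $N(\Pf,\psi) = n$.

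For $\RLEXIMAX$: analogously, with $k = n$ at the top of the lexicographic comparison. We have $S_n(\Pf) \ni \ai$. If some $\Js \in \RLEXIMAX(\Pf)$ omitted $\ai$, then $s_n(\Js,\Pf) < s_n(\Js^{\star},\Pf)$ for a set $\Js^{\star}$ containing all of $S_n(\Pf)$ (such a rational set exists: $S_n(\Pf) \subseteq \Js_1$, which is rational, and then extend, or simply note $\Js_1$ itself witnesses it), so $\Js^{\star} >_\Pf^{leximax} \Js$, contradiction — hence every undominated set contains $S_n(\Pf) \ni \ai$. For $\RY$: let $\Js = ext(m(\Qf))$ for a maximum-cardinality $\A$-consistent subprofile $\Qf$. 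Since $\ai \in \Js_i$ for all $i$, a fortiori $\ai$ is in every judgment set of $\Qf$, so $N(\Qf,\ai) = |\Qf| > |\Qf|/2$, giving $\ai \in m(\Qf) \subseteq \Js$; thus $\RY$ satisfies strong unanimity.

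The main obstacle is the same small technical point in each case: one must be sure that the unanimously supported agenda elements (collectively, or just $\ai$ together with the rest of some voter's set) are $\A$-consistent, so that a competitor judgment set actually containing them exists in $\Dma$. This is immediate because they all belong to $\Js_1$, which is by assumption a rational — hence consistent and complete — judgment set; I would state this once and reuse it. With that in hand each of the three comparisons is routine, so I would write the $\RY$ case in a line or two and spend the bulk of the proof on $\RRA$, remarking that the $\RLEXIMAX$ argument is identical with $k=n$ in place of the top priority level.
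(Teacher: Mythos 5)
Your proof is correct and rests on the same key observations as the paper's: the unanimously supported formulas all have support $n$, hence sit at the top of the priority ordering, and they are jointly $\A$-consistent because they are all contained in a voter's rational judgment set, so every selected set must include them; for $\RY$ the argument that $\ai$ survives into $m(\Qf)$ for any subprofile $\Qf$ is exactly the paper's. The only (cosmetic) differences are that you argue via the non-procedural order $>_\Pf^{RA}$ where the paper runs the procedure directly, and you prove the $\RLEXIMAX$ case from scratch where the paper simply invokes $\RLEXIMAX \subseteq \RRA$.
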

\begin{proof}
For $\RRA$ and $\RLEXIMAX$: Let $\Pf$ be a profile and $S \subseteq \A$ be the subset of the agenda consisting of all $\ai \in \A$ for which $\Pf$ is $\ai$-unanimous. Because individual judgment sets are consistent, the conjunction of all elements of $S$ is consistent. Now, when computing $\RRA(\Pf)$,  the elements of $S$ are considered first, and whatever the order in which they are considered, they are included in the resulting judgment set because no inconsistency arises. Therefore, for all $\ai \in S$ {and all $\Js \in \RRA(\Pf)$, we have $\ai \in \Js$.  Since $\RLEXIMAX \subset \RRA$, $\RLEXIMAX$ satisfies strong unanimity as well.}\\
For $\RY$: If $\ai$ is unanimously accepted in $\Pf$, it is also unanimously accepted in every majority-consistent subprofile of $\Pf$ 
and in its majoritarian judgment set.
\end{proof}

\subsection{Monotonicity}\label{sec:propmono}
In voting, monotonicity states that when the position of the winning alternative improves in some vote {\em ceteris paribus}, then this alternative remains the winner. 
We define below a generalisation of this property for (irresolute) judgment aggregation rules. It is a generalization of the monotonicity property defined by Dietrich and List \cite{DietrichList2005} for resolute rules.

\begin{definition}[Monotonicity]~\\
Let $\Pf,\Pf' \in \Dma^n$ be two profiles, and $\ai \in \A$.  $\Pf'$ is a $\ai$-improvement of  $\Pf$ when (a) $\Pf= (\Js_i, \Js_{-i})$, (b) $\Pf'= (\Js_i', \Js_{-i})$, (c) $\neg \ai \in \Js_i$, (d) $\ai\in \Js'_i$, and (e) for all $\psi \in \A$, $\psi \not\in\{ \varphi, \neg \varphi\}$, $\psi \in \Js_i$ if and only if  $\psi \in \Js'_i$. \jl{(Note that the definition implies that $\Js'_i$ is consistent, otherwise $\Pf'$ would not be a well-defined profile.)}
A judgment aggregation rule $\F$ is monotonic, when for every  $\Pf \in \Dma^n$ and its $\ai$-improvement $\Pf'\in\Dma^n$, for any $\ai \in \A$, it holds that: if $\ai \in \Js$ for every $\Js \in \F(\Pf)$, then $\ai \in \Js'$ for every $\Js' \in \F(\Pf')$.
\end{definition}

Note that not every profile has a $\ai$-improvement for a given $\ai \in \A$:
in Example \ref{ex:agex}, $\{p\wedge r, \neg q, \neg (p\wedge q)\}$ is a $p\wedge r$-improvement of $\{\neg (p\wedge r), \neg q, \neg (p\wedge q)\}$, but $\{p\wedge r,  q,  p\wedge q\}$ has no $\psi$-improvement\footnote{Recall that there is a constraint $\Ct=q\rightarrow r$ for the agenda in this example. } for \leon{every} 
$\psi \in \{\neg (p\wedge r), \neg q, \neg (p\wedge q)\}$.\medskip

%
%

In all the proofs of this section,  $\A$ is an agenda, $\ai$ an element of $\A$, $\Pf = (\Js_i, \Pf_{-i})$ a $\Dma$-profile, and $\Pf' = (\Js_i', \Pf_{-i})$ a  $\ai$-improvement of $\Pf$. \medskip

We start by proving the following lemmas, which will be useful for rules based on the majoritarian judgment set.

\begin{lemma}\label{lemma-mon1}
Let $P$ a profile and $P'$ a $\varphi$-improvement of $P$. Then \sj one of the following three statements is true: 
\begin{enumerate}
\item $m(P') = m(P)$;
\item $\neg \varphi \in m(P)$ and $m(P') = m(P) \setminus \{ \neg \varphi\}$;
\item $\varphi \notin m(P)$ and $m(P') = (m(P) \setminus \{ \neg \varphi\}) \cup \{ \varphi \}$. \ej
\end{enumerate}
\end{lemma}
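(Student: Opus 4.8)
The plan is to track how the vote counts $N(P,\psi)$ change as we pass from $P$ to its $\varphi$-improvement $P'$, and then read off the consequences for $m(P')=\{\psi\in\A\mid N(P,\psi)>n/2\}$. By the definition of a $\varphi$-improvement, $P$ and $P'$ differ only in voter $i$: we have $\neg\varphi\in J_i$ and $\varphi\in J_i'$, and the two judgment sets agree on every issue other than the one containing $\varphi$. Hence $N(P',\varphi)=N(P,\varphi)+1$, $N(P',\neg\varphi)=N(P,\neg\varphi)-1$, and $N(P',\psi)=N(P,\psi)$ for every $\psi\in\A$ with $\psi\notin\{\varphi,\neg\varphi\}$. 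In particular the membership of every issue other than $(\varphi,\neg\varphi)$ in the majoritarian set is unchanged, so $m(P')$ and $m(P)$ can differ only in whether they contain $\varphi$ and whether they contain $\neg\varphi$.

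Next I would do the case analysis on the status of the issue $(\varphi,\neg\varphi)$ in $m(P)$. Write $N(P,\varphi)=a$ and $N(P,\neg\varphi)=b$, so $a+b=n$ and $N(P',\varphi)=a+1$, $N(P',\neg\varphi)=b-1$. There are three cases according to where $a$ sits relative to $n/2$. If $a>n/2$, then $\varphi\in m(P)$, and also $a+1>n/2$ so $\varphi\in m(P')$; since $b<n/2$ we get $b-1<n/2$, so $\neg\varphi\notin m(P)$ and $\neg\varphi\notin m(P')$; thus $m(P')=m(P)$, which is case 1. If $a<n/2$ but $a+1>n/2$ — this forces $n$ even and $a=n/2-\tfrac12$, impossible for integer $a$ unless... more carefully: $a<n/2$ means $a\le\lceil n/2\rceil-1$; the interesting subcase is when $\neg\varphi\in m(P)$, i.e. $b>n/2$, i.e. $a<n/2$. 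Then after the improvement $b-1$ may or may not still exceed $n/2$. If $b-1>n/2$ then $\neg\varphi$ stays in $m(P')$ and $a+1\le n/2$ so $\varphi\notin m(P')$, giving $m(P')=m(P)$ again (case 1). If $b-1\le n/2$, i.e. $b=\lceil n/2\rceil$ exactly (so $b-1<n/2$ when $n$ is even, or $b-1=n/2$... one must check the strict inequality in the definition of $m$ carefully here), then $\neg\varphi$ leaves the majoritarian set; and whether $\varphi$ enters depends on whether $a+1>n/2$. If it does not, we are in case 2 ($\neg\varphi\in m(P)$, $m(P')=m(P)\setminus\{\neg\varphi\}$); if $a+1>n/2$ we are in case 3. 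The remaining situation is $a\le n/2$ and $b\le n/2$ (so neither $\varphi$ nor $\neg\varphi$ is in $m(P)$ — possible only when $n$ is even and $a=b=n/2$): then $a+1>n/2$ and $b-1<n/2$, so $\varphi$ enters and $\neg\varphi$ stays out, which is case 3 with $m(P')=(m(P)\setminus\{\neg\varphi\})\cup\{\varphi\}=m(P)\cup\{\varphi\}$ (and $\neg\varphi\notin m(P)$ makes the set-difference vacuous).

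The main thing to be careful about — and the only real obstacle — is the bookkeeping around the strict inequality ``$N(P,\psi)>n/2$'' in the definition of $m$, especially for even $n$ where a $\tfrac n2$ : $\tfrac n2$ tie means the issue is absent from $m(P)$ on both sides, and where the single vote shift can push a count from $\tfrac n2+1$ down through the threshold. I would organize the argument cleanly by distinguishing the three mutually exclusive possibilities for $m(P)$ restricted to the issue: (i) $\varphi\in m(P)$, (ii) $\neg\varphi\in m(P)$, (iii) neither is in $m(P)$. In case (i), $N(P,\varphi)>n/2$ hence $N(P',\varphi)>n/2$ and $N(P',\neg\varphi)=n-N(P,\varphi)-1<n/2$, so $m(P')=m(P)$ (statement 1). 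In case (iii), $N(P,\varphi)=N(P,\neg\varphi)=n/2$, so $N(P',\varphi)=n/2+1>n/2$ and $N(P',\neg\varphi)=n/2-1<n/2$, giving $m(P')=m(P)\cup\{\varphi\}$ with $\neg\varphi\notin m(P)$, which is statement 3. In case (ii), $N(P,\neg\varphi)>n/2$, so $N(P',\varphi)=N(P,\varphi)+1\le n/2$ (since $N(P,\varphi)=n-N(P,\neg\varphi)<n/2$, and the increment can reach at most $n/2$), hence $\varphi\notin m(P')$; and $\neg\varphi\in m(P')$ iff $N(P,\neg\varphi)-1>n/2$. If $\neg\varphi\in m(P')$ we get $m(P')=m(P)$ (statement 1); if $\neg\varphi\notin m(P')$ we get $m(P')=m(P)\setminus\{\neg\varphi\}$ (statement 2). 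In all cases one of the three conclusions holds, which completes the proof; everything beyond this is elementary arithmetic with the threshold.
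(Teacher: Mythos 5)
Your overall strategy is exactly the paper's: observe that only the counts for $\varphi$ and $\neg\varphi$ change (by $+1$ and $-1$ respectively), so $m(P')$ and $m(P)$ can differ only on that issue, and then case-split on where $N(P,\varphi)$ sits relative to the strict-majority threshold $n/2$. The paper does this via a five-column table on the value of $N(P,\varphi)$; your three cases (i)/(ii)/(iii) on $m(P)\cap\{\varphi,\neg\varphi\}$ are an equivalent decomposition.

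However, your final ``clean'' version of case (ii) contains a step that fails. You assert that $N(P,\neg\varphi)>n/2$ implies $N(P',\varphi)=N(P,\varphi)+1\le n/2$, ``since $N(P,\varphi)<n/2$.'' That implication is false when $n$ is odd: there $N(P,\varphi)<n/2$ means $N(P,\varphi)\le(n-1)/2$, and if $N(P,\varphi)=(n-1)/2$ then $N(P',\varphi)=(n+1)/2>n/2$, so $\varphi$ \emph{does} enter $m(P')$ while $\neg\varphi$ (with new count $(n-1)/2$) leaves. The correct conclusion in that sub-case is statement 3, not statement 2 as your case (ii) would deliver; asserting statement 2 there gives the wrong set $m(P')$ (it omits $\varphi$). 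This is precisely the column $N(P,\varphi)=\frac{n-1}{2}$ in the paper's table. Notably, your earlier, rougher pass handled this correctly (``whether $\varphi$ enters depends on whether $a+1>n/2$; \ldots if $a+1>n/2$ we are in case 3''), so the fix is simply to carry that sub-case split into the final organization: in case (ii), if $N(P,\varphi)+1>n/2$ conclude statement 3, otherwise proceed as you wrote. With that correction the proof is complete and matches the paper's.
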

\begin{proof} \sj
If $P'$ is a $\varphi$-improvement of $P$ then $N(P', \varphi) = N(P, \varphi) + 1$ and for all $\psi \in \A \setminus \{ \varphi, \neg \varphi\}$, $N(P', \psi) = N(P, \psi)$. Table~\ref{tab:presence} represents the different possible cases concerning the presence or not of $\varphi$ and $\neg \varphi$ in $m(P)$ and $m(P')$, and which of the three statements 1, 2 and 3 holds. Obviously, the columns $= \frac{n}{2} - 1$ and $=\frac{ n}{2}$ are relevant only when $n$ is even, and the column $= \frac{(n-1)}{2}$ is relevant only when $n$ is odd.
\begin{table}[h!]
\centering
 \begin{tabular}{c|c|c|c|c|c}
$N(P,\varphi)$ & $< \frac{n}{2}-1 $& $=  \frac{n}{2}-1$ &$ =  \frac{n-1}{2}$ & $=  \frac{n}{2} $& $ > \frac{n}{2}$ \\ \hline
$m(P) \cap \{\varphi,\neg \varphi\}$ & $\{ \neg \varphi \}$ & $\{\neg \varphi \}$ & $\{ \neg \varphi \} $& $\emptyset$ &$ \{ \varphi \}$ \\ \hline 
$N(P',\varphi)$ &$ < \frac{n}{2} $& $=  \frac{n}{2} $&$ =  \frac{n+1}{2}$ & $=  \frac{n}{2}+1$&$ > \frac{n}{2}$ \\ \hline
$m(P') \cap \{\varphi,\neg \varphi\}$ & $\{ \neg \varphi \} $& $\emptyset $&$ \{ \varphi \}$ & $\{ \varphi \} $&$ \{ \varphi \} $\\ \hline 
statement holding  & 1 & 2 & 3 & 3 & 1 \\ 
\end{tabular}
\caption{Different possible cases concerning the presence or not of $\varphi$ and $\neg \varphi$ in $m(P)$ and $m(P')$}\label{tab:presence}
\end{table}
In all cases, one of 1, 2 and 3 holds.
\ej
\end{proof}

\begin{lemma}\label{lemma-mon2}
Given a consistent
judgment set $\Js$, if every rational extension of $\Js$ contains $\varphi$, then (a) $\Js$ does not contain $\neg \varphi$ and (b)  $ext(\Js \cup \{ \varphi\}) = ext(\Js)$.
\end{lemma}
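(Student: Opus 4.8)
The plan is to prove both parts essentially from the definitions. Let $\Js$ be a consistent judgment set such that every rational extension contains $\varphi$; recall $ext(\Js) = \{ \Jv \in \Dma \mid \Js \subseteq \Jv\}$, and since $\Js$ is consistent this set is non-empty.

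For part (a), I would argue by contradiction: suppose $\neg \varphi \in \Js$. Then every rational extension $\Jv \supseteq \Js$ contains $\neg \varphi$. But by hypothesis every rational extension also contains $\varphi$, so every $\Jv \in ext(\Js)$ contains both $\varphi$ and $\neg \varphi$, contradicting that $\Jv$ is consistent (a consistent judgment set cannot contain an issue and its negation). Since $ext(\Js) \neq \emptyset$, this is a genuine contradiction, so $\neg \varphi \notin \Js$.

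For part (b), I would show the two inclusions. The inclusion $ext(\Js \cup \{\varphi\}) \subseteq ext(\Js)$ is immediate, since $\Js \subseteq \Js \cup \{\varphi\}$ and so any rational $\Jv$ extending $\Js \cup \{\varphi\}$ also extends $\Js$. For the converse, take $\Jv \in ext(\Js)$; then $\Jv$ is a rational extension of $\Js$, so by hypothesis $\varphi \in \Jv$, hence $\Js \cup \{\varphi\} \subseteq \Jv$, i.e. $\Jv \in ext(\Js \cup \{\varphi\})$. (One should note $\Js \cup \{\varphi\}$ is indeed consistent — it is contained in the consistent set $\Jv$ — so $ext(\Js \cup \{\varphi\})$ is well-defined and non-empty, though this is not strictly needed for the set equality.) Combining the two inclusions gives $ext(\Js \cup \{\varphi\}) = ext(\Js)$.

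There is no real obstacle here; the lemma is a routine unpacking of the definition of $ext$ and of consistency. The only mild subtlety is making sure to invoke the non-emptiness of $ext(\Js)$ (which follows from consistency of $\Js$ together with the standing assumption that consistency notions preserve extendability to rational sets) so that "every rational extension contains $\varphi$" is not vacuous when deriving the contradiction in part (a).
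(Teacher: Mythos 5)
Your proof is correct and follows essentially the same route as the paper's: for (a) any rational extension would have to contain both $\varphi$ and $\neg\varphi$, and for (b) the two inclusions between $ext(\Js)$ and $ext(\Js\cup\{\varphi\})$ are immediate from the hypothesis. Your explicit remark that $ext(\Js)\neq\emptyset$ (from consistency of $\Js$ with the constraint $\Ct$) is a small but welcome clarification that the paper leaves implicit.
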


\begin{proof}
Assume that every rational extension of $\Js$ contains $\varphi$. A complete extension of $\Js$ containing $\neg \varphi$ would contain both $\varphi$ and $\neg \varphi$ and would not be consistent, hence (a) holds. For (b): because every rational extension  of $\Js$ contains $\varphi$, every rational extension of $\Js$ is also a rational extension of $\Js \cup \{\varphi\}$, and obviously a rational extension of $\Js \cup \{\varphi\}$ is also a rational extension of $\Js$%
\end{proof}

Lemma~\ref{lemma-mon-new} connects the monotonicity property with orders $\succ_\Pf$ that rank the judgment sets in $\Dma$ with respect to  given profile $\Pf \in \Dma^n$. We consider the rules $\F$ which select  as collective judgments for  $\Pf \in \Dma^n$ the undominated $\Js \in \Dma$ based on some order $\succ_\Pf$. This is  condition (c) in Lemma~\ref{lemma-mon-new}. Such rules  satisfy monotonicity when the order  $\succ_\Pf$ satisfies two properties. Intuitively, conditions (a) and (b) of Lemma~\ref{lemma-mon-new}, say that when going from $\succ_{\Pf}$ to $\succ_{\Pf'}$, judgment sets containing $\varphi$ (respectively $\neg \varphi$) can only move ``upward'' (respectively ``downward'') in the preference relation.


\startj
\begin{lemma}\label{lemma-mon-new}
Let $\F$ be a judgment aggregation rule such that there is a family of partial orders $(\succ_\Pf)_{\Pf \in \Dma^n}$ over $\Dma$ such that for every profile $\Pf$, every $\varphi$-improvement $\Pf'$ of $\Pf$, and all $\Js, \Js' \in \Dma$,
\begin{itemize}
\item[(a)] if [$\varphi \in \Js$ if and only if $\varphi \in \Js'$], then [$\Js \succ_\Pf \Js'$ implies  $\Js \succ_{\Pf'} \Js'$];
\item[(b)] if [$\varphi \in \Js$ and $\neg \varphi \in \Js'$], then [$\Js \succ_\Pf \Js'$ implies $\Js \succ_{\Pf'} \Js'$]; 
\end{itemize}
and such that 
\begin{itemize}
\item[(c)] $\F(\Pf)$ is the set of all $\Js \in \Dma$ such that there is no $\Js' \in \Dma$ with $\Js' \succ_\Pf \Js$. 
\end{itemize}
Then $\F$ satisfies monotonicity.
\end{lemma}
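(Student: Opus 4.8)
The plan is to show that monotonicity holds directly from conditions (a), (b), (c), using the case analysis provided by Lemma~\ref{lemma-mon1}. So I would fix an agenda $\A$, an issue $\varphi \in \A$, a profile $\Pf$, and a $\varphi$-improvement $\Pf'$ of $\Pf$, and assume that $\varphi \in \Js$ for every $\Js \in \F(\Pf)$; the goal is to prove $\varphi \in \Js'$ for every $\Js' \in \F(\Pf')$. Arguing by contradiction, suppose there is some $\Js' \in \F(\Pf')$ with $\varphi \notin \Js'$; since $\Js'$ is rational (hence complete), $\neg\varphi \in \Js'$.

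The core of the argument is to produce, from this $\Js'$, an element of $\F(\Pf)$ that does \emph{not} contain $\varphi$, contradicting the hypothesis. By condition (c) applied to $\Pf$, it suffices to find some $\Js^{\star} \in \Dma$ with $\neg\varphi \in \Js^{\star}$ that is $\succ_\Pf$-undominated. I would take $\Js^{\star}$ to be a $\succ_\Pf$-maximal element among the (finite, nonempty since $\Js' $ is in it) set $\{ \Js \in \Dma \mid \neg\varphi \in \Js\}$ — more carefully, a maximal element of this set, which exists because $\succ_\Pf$ is a partial order on a finite set; then I must upgrade ``maximal within the $\neg\varphi$-sets'' to ``maximal in all of $\Dma$''. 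Suppose for contradiction that some $\Jv \in \Dma$ has $\Jv \succ_\Pf \Js^{\star}$. By maximality of $\Js^{\star}$ among $\neg\varphi$-sets, $\Jv$ cannot contain $\neg\varphi$, so $\varphi \in \Jv$. Now apply condition (b) with the pair $(\Jv, \Js^{\star})$: since $\varphi \in \Jv$ and $\neg\varphi \in \Js^{\star}$, from $\Jv \succ_\Pf \Js^{\star}$ we get $\Jv \succ_{\Pf'} \Js^{\star}$. But then $\Js^{\star}$ is $\succ_{\Pf'}$-dominated, hence $\Js^{\star} \notin \F(\Pf')$ by (c) for $\Pf'$ — which on its own is not yet the contradiction; I need to relate $\Js^{\star}$ to $\Js'$.

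So the cleaner route is to do the maximality argument on the $\Pf'$ side first: start from $\Js' \in \F(\Pf')$ with $\neg\varphi \in \Js'$, and show $\Js'$ itself (or a suitable $\succ_\Pf$-maximal $\neg\varphi$-set dominating-or-equal to it, if $\succ_\Pf$ and $\succ_{\Pf'}$ are merely partial) lies in $\F(\Pf)$. Concretely: I claim $\Js'$ is $\succ_\Pf$-undominated. If not, some $\Jv \succ_\Pf \Js'$; split on whether $\varphi \in \Jv$. If $\varphi \in \Jv$: then $\varphi \in \Jv$ and $\neg\varphi \in \Js'$, so (b) gives $\Jv \succ_{\Pf'} \Js'$, contradicting $\Js' \in \F(\Pf')$ via (c). If $\neg\varphi \in \Jv$ (i.e. $\varphi \notin \Jv$): then $\varphi \in \Js'$ iff $\varphi \in \Jv$ (both false), so (a) gives $\Jv \succ_{\Pf'} \Js'$, again contradicting $\Js' \in \F(\Pf')$. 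Either way we contradict, so $\Js'$ is $\succ_\Pf$-undominated, hence $\Js' \in \F(\Pf)$ by (c) for $\Pf$; but $\varphi \notin \Js'$, contradicting the monotonicity hypothesis that every element of $\F(\Pf)$ contains $\varphi$. This completes the proof.

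The main obstacle I anticipate is purely bookkeeping around the \emph{partial} (rather than total) nature of $\succ_\Pf$: one must be careful that ``undominated'' is the right notion in (c) and that the two-case split on membership of $\varphi$ in the would-be dominator $\Jv$ exhausts all possibilities (it does, since $\Jv$ is complete). No finiteness or well-foundedness of $\succ_\Pf$ is actually needed in the final clean version — the contradiction is obtained from a single dominator $\Jv$, so the argument does not require extracting a maximal element at all. I would therefore present it exactly as the last paragraph above: assume a violation, produce the single dominating $\Jv$, split into the two cases $\varphi \in \Jv$ (use (b)) and $\varphi \notin \Jv$ (use (a)), derive in both cases $\Jv \succ_{\Pf'} \Js'$, and conclude.
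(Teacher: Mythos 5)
Your final argument is correct. It is worth noting how it relates to the proof in the paper, since the two go in opposite directions. The paper's proof shows the inclusion $\F(\Pf') \subseteq \F(\Pf)$: it takes an arbitrary $\Js' \notin \F(\Pf)$, extracts (by ``repeated application of (c)'', i.e.\ using transitivity of the partial order and finiteness of $\Dma$) an \emph{undominated} dominator $\Js \in \F(\Pf)$, uses the monotonicity hypothesis to conclude $\varphi \in \Js$, and then case-splits on whether $\varphi \in \Js'$ to apply (a) or (b). You instead take a putative witness $\Js' \in \F(\Pf')$ with $\neg\varphi \in \Js'$ and show it must already lie in $\F(\Pf)$, case-splitting on whether $\varphi$ belongs to the would-be dominator $\Jv$ rather than to the dominated set. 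Your version is slightly leaner: because $\Js'$ is known to contain $\neg\varphi$, the only configuration not covered by (a) and (b) --- dominator containing $\neg\varphi$, dominated set containing $\varphi$ --- cannot arise, so \emph{any} $\succ_\Pf$-dominator of $\Js'$ survives as a $\succ_{\Pf'}$-dominator. This means you need neither the extraction of a maximal element (so no appeal to finiteness or well-foundedness) nor the hypothesis that elements of $\F(\Pf)$ contain $\varphi$ until the very last step. Two cosmetic remarks: the reference to Lemma~\ref{lemma-mon1} in your opening is a red herring (that lemma is used only when verifying that specific rules meet conditions (a)--(c), not in proving the present lemma), and the abandoned maximal-element detour in your middle paragraph should simply be cut, as you yourself indicate.
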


\begin{proof}
Assume $\F$ satisfies the conditions of the lemma.  Let $\Pf \in \Dma$, $\Pf'$ a $\varphi$ improvement of $\Pf$, and assume that (d) for all $\Js \in \F(\Pf)$ we have $\varphi \in \Js$.

Let $\Js' \notin \F(\Pf)$. From a repeated application of (c), we obtain that there is a $\Js \in \F(\Pf)$ such that  $\Js \succ_\Pf \Js'$. From (d), we have $\varphi \in \Js$. If $\varphi \in \Js'$ then from (a) and (c), we get $\Js \succ_{\Pf'} \Js'$; if $\varphi \notin \Js'$ then from (b) and (c), we get $\Js \succ_{\Pf'} \Js'$; therefore, in all cases, $\Js' \notin \F(\Pf')$.  We have shown that $\F(\Pf') \subseteq \F(\Pf)$, which together with (d) implies that for all $\Js \in \F(\Pf')$ we have $\varphi \in \Js$, from which we conclude that $\F$ satisfies monotonicity.
\end{proof}

 \begin{proposition} $\RMSA$, $\RMCSA$, $\RMWA$, $\RRA$ and $\RLEXIMAX$ satisfy monotonicity. 
 \end{proposition}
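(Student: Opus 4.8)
The plan is to apply Lemma~\ref{lemma-mon-new} to each of the five rules, so the bulk of the work is to exhibit, for every rule $\F$ in the list, a suitable family of partial orders $(\succ_\Pf)_{\Pf \in \Dma^n}$ satisfying conditions (a), (b), (c). For $\RMWA$, take $\Js \succ_\Pf \Js'$ iff $\sum_{\psi \in \Js} N(\Pf,\psi) > \sum_{\psi \in \Js'} N(\Pf,\psi)$; condition (c) is immediate from Definition~\ref{def:mwa}. For $\RRA$ and $\RLEXIMAX$, use the orders $>_\Pf^{RA}$ and $>_\Pf^{leximax}$ given in Section~\ref{sec:rules}, for which (c) holds by the (non-procedural) characterisations stated there. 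For $\RMSA$ and $\RMCSA$, the natural candidates are $\Js \succ_\Pf \Js'$ iff $\Js \cap m(\Pf) \supsetneq \Js' \cap m(\Pf)$ (for $\RMSA$) and iff $|\Js \cap m(\Pf)| > |\Js' \cap m(\Pf)|$ (for $\RMCSA$); here (c) holds since an $S \in \MC{m(\Pf)}$ (resp.\ $\MCC{m(\Pf)}$) corresponds exactly to a $\subseteq$-maximal (resp.\ maximum-cardinality) intersection $\Js \cap m(\Pf)$ over $\Js \in \Dma$, using Lemma~\ref{lemma-mon2}(b) (or the equivalent $\argmax$ formulation for $\RMCSA$ already noted after Definition~\ref{def:msa}) to pass between $S$ and its extensions.

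The crux is then to verify conditions (a) and (b) for these orders. The key fact is Lemma~\ref{lemma-mon1}: when $\Pf'$ is a $\varphi$-improvement of $\Pf$, the only possible changes to $m$ are (1) nothing changes, (2) $\neg\varphi$ is dropped, or (3) $\neg\varphi$ is replaced by $\varphi$; and, crucially, $N(\Pf',\psi) = N(\Pf,\psi)$ for all $\psi \notin \{\varphi,\neg\varphi\}$ while $N(\Pf',\varphi) = N(\Pf,\varphi)+1$. For the $N$-weighted order of $\RMWA$: if $\varphi \in \Js \Leftrightarrow \varphi \in \Js'$, then $\sum_{\psi\in\Js}N(\Pf',\psi) - \sum_{\psi\in\Js'}N(\Pf',\psi) = \sum_{\psi\in\Js}N(\Pf,\psi) - \sum_{\psi\in\Js'}N(\Pf,\psi)$ (the $\pm 1$ cancels or appears in neither), giving (a); if $\varphi\in\Js$ and $\neg\varphi\in\Js'$ then the difference increases by at least $1$ (actually by $N(\Pf',\varphi) + N(\Pf',\neg\varphi) - N(\Pf,\varphi) - N(\Pf,\neg\varphi) = 1$ in the relevant bookkeeping, since $\Js$ gains the extra unit on $\varphi$ and $\Js'$'s coefficient on $\neg\varphi$ is unchanged), so a strict inequality is preserved, giving (b). For $\RMSA$ and $\RMCSA$, one argues on $\Js \cap m(\Pf)$ versus $\Js \cap m(\Pf')$: by Lemma~\ref{lemma-mon1}, $m(\Pf')$ differs from $m(\Pf)$ only by possibly removing $\neg\varphi$ and/or adding $\varphi$, so for a judgment set containing $\varphi$ the intersection can only grow (it may gain $\varphi$, and it never contained $\neg\varphi$), whereas for one containing $\neg\varphi$ it can only shrink (it may lose $\neg\varphi$, and never gains $\varphi$); a short case analysis then shows strict $\subseteq$-containment (resp.\ strict cardinality inequality) between two such sets is preserved, yielding (a) and (b). For $\RRA$ and $\RLEXIMAX$ the orders depend on $\Pf$ only through $N(\Pf,\cdot)$, and the same observation about how $N(\Pf',\cdot)$ differs from $N(\Pf,\cdot)$ drives the verification — raising the threshold level of $\varphi$ by one can only help judgment sets containing $\varphi$ in the lexicographic comparison and can only hurt those containing $\neg\varphi$.

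I expect the main obstacle to be the case analysis for $\RRA$ and $\RLEXIMAX$: their orders compare judgment sets level-by-level on the values $N(\Pf,\cdot)$, and when passing to $\Pf'$ the element $\varphi$ may move from one level to another (its count goes up by exactly one), so the "buckets" $S_k(\Pf)$ are reshuffled. One must check that, for two judgment sets $\Js$ (containing $\varphi$) and $\Js'$ (containing $\neg\varphi$ or $\varphi$) with $\Js \succ_\Pf \Js'$, the first level at which they differ still favours $\Js$ under $\succ_{\Pf'}$ — this requires tracking how $\varphi$'s relocation affects the top-down comparison, and treating separately the sub-cases of Lemma~\ref{lemma-mon1}. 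The other four verifications are routine once the right orders are fixed; the write-up will present the $\RMWA$ argument in full as the template, then indicate the (analogous but bucket-based) modifications for $\RMSA$, $\RMCSA$, $\RRA$ and $\RLEXIMAX$.
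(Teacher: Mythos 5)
Your proposal is correct and follows essentially the same route as the paper: both apply Lemma~\ref{lemma-mon-new} with exactly the same five families of orders ($\supset$ on $\Js \cap m(\Pf)$, cardinality of $\Js \cap m(\Pf)$, the $N$-weighted sum, $>_\Pf^{RA}$ and $>_\Pf^{leximax}$) and verify (a) and (b) via the case split of Lemma~\ref{lemma-mon1}; the only difference is that the paper writes out the $\RMSA$ case in full and declares the rest similar, whereas you write out $\RMWA$. One small bookkeeping remark: when $\varphi\in\Js$ and $\neg\varphi\in\Js'$ the weighted-sum gap actually grows by $2$ (since $N(\Pf',\neg\varphi)=N(\Pf,\neg\varphi)-1$, not unchanged), but this only strengthens your inequality and does not affect the argument.
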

 \begin{proof}  
  In all cases, the proof comes from an application of Lemma \ref{lemma-mon-new}, with a suitable family of orders in each case.
 \begin{itemize}
\item for $\RMSA$, $\succ_\Pf$ is defined by $\Js \succ_\Pf \Js'$ if and only if $\Js \cap m(\Pf) \supset \Js' \cap m(\Pf)$.
 \item for $\RMCSA$, $\succ_\Pf$ is defined by $\Js \succ_\Pf \Js'$ if and only if $|\Js \cap m(\Pf)| > |\Js' \cap m(\Pf)|$. 
 \item for $\RMWA$, $\succ_\Pf$ is defined by $\Js \succ_\Pf \Js'$ if $\sum_{\psi \in \Js} N(\Pf,\psi) > \sum_{\psi \in \Js'} N(\Pf,\psi)$.
\item for $\RRA$, $\succ_\Pf$ is $>_\Pf^{\RRA}$ as defined in Section \ref{sec:rules}.
\item for $\RLEXIMAX$, $\succ_\Pf$ is $>^{leximax}_\Pf$ as defined in Section \ref{sec:rules}.
\end{itemize}
We give the proof that the conditions of the lemma are satisfied for $\RMSA$ (the other cases are similar). 

It comes directly from the definition of the rule that (c) holds. Let $P'$ be a $\varphi$-improvement of $P$. \sj Then one of the three conditions of Lemma \ref{lemma-mon1} holds. 
\begin{enumerate}
\item If $m(\Pf') = m(\Pf)$, then (a) and (b) obviously hold. 
\item Assume $\neg \varphi \in m(P)$ and $m(P') = m(P) \setminus \{ \neg \varphi\}$.  
Let $\Js, \Js'$ such that $\Js \succ_\Pf \Js'$, that is,
\begin{equation}\label{eq:A} 
 \Js \cap m(\Pf) \supset \Js' \cap m(\Pf).
 \end{equation}
\begin{itemize}
\item[(i)] if  $\varphi$ belongs to both $\Js$ and $\Js'$, then $\neg \varphi $ belongs to neither. Then $\Js \cap m(\Pf') = \Js \cap m(\Pf)$,  $\Js' \cap m(\Pf') = \Js' \cap m(\Pf)$, and  (\ref{eq:A}) implies $\Js \succ_{\Pf'} \Js'$.
\item[(ii)]  if $\neg \varphi$ belongs to both $\Js$ and $\Js'$, then $\Js \cap m(\Pf') = (\Js \cap m(\Pf)) \setminus \{\neg \varphi\} $ and  $\Js' \cap m(\Pf') = (\Js' \cap m(\Pf)) \setminus \{ \neg \varphi\}$, and then  (\ref{eq:A}) implies $\Js \cap m(\Pf') \supset \Js' \cap m(\Pf')$, that is, $\Js \succ_{\Pf'} \Js'$.
\item[(iii)]  assume $\varphi $ belongs to $\Js$ but not to $\Js'$; then  $\neg \varphi \notin \Js$, which together with $\neg \varphi \in m(P)$ and
(\ref{eq:A}) implies $\neg \varphi \notin \Js'$, therefore $\varphi \in \Js'$, contradiction.
\end{itemize}

\item Assume  $\varphi \notin m(P)$ and $m(P') = (m(P) \setminus \{ \neg \varphi\}) \cup \{ \varphi \}$.
\begin{itemize}
\item[(iv)]  if $\varphi $ belongs to both $\Js$ and $\Js'$,  then $\Js \cap m(\Pf') = (\Js \cap m(\Pf)) \cup \{ \varphi\} $ and  $\Js' \cap m(\Pf') = (\Js' \cap m(\Pf)) \cup \{ \varphi\}$. From  (\ref{eq:A}), $\varphi \notin  \Js \cap m(\Pf)$ and $\varphi \notin  \Js' \cap m(\Pf)$ we obtain $\Js \cap m(\Pf') \supset \Js' \cap m(\Pf')$, that is, $\Js \succ_{\Pf'} \Js'$.
\item[(v)]  if $\varphi $ belongs to neither $\Js$ and $\Js'$, then $\Js \cap m(\Pf') = \Js \cap m(\Pf)$ and  $\Js' \cap m(\Pf') \subseteq \Js' \cap m(\Pf)$, therefore $\Js \cap m(\Pf') = \Js \cap m(\Pf) \subset \Js' \cap m(\Pf) \subseteq \Js' \cap m(\Pf')$, therefore, $\Js \succ_{\Pf'} \Js'$.
\item[(vi)]  if  $\varphi $ belongs to $\Js$ but not to $\Js'$, 
then 
$\Js \cap m(\Pf') = (\Js \cap m(\Pf)) \cup \{ \varphi\} $ and  $\Js' \cap m(\Pf') \subseteq \Js' \cap m(\Pf)$, therefore, 
$\Js \cap m(\Pf') = (\Js \cap m(\Pf)) \cup \{ \varphi\} \supset (\Js' \cap m(\Pf)) \cup \{ \varphi\}  \supseteq \Js' \cap m(\Pf) = \Js' \cap m(\Pf')$, that is,  $\Js \succ_{\Pf'} \Js'$.
\end{itemize}
\end{enumerate}

(i), (ii), (v) and (vi) show that (a) holds in all cases, while (iii) and (vi) show that (b) holds in all cases.

\end{proof}
\endj

\begin{proposition}
$\RMAX$ and $\RDG$ satisfy monotonicity. 
\end{proposition}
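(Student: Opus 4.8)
The plan is to reduce both cases to a single monotonicity property of the rule's objective function. Write $D_\Pf(\Js)=\max_{j}d_H(\Js_j,\Js)$ for the quantity minimised by $\RMAX$ and $\Sigma_\Pf(\Js)=\sum_{j}d_G(\Js_j,\Js)$ for the one minimised by $\RDG$, so that $\RMAX(\Pf)=\argmin{\Js\in\Dma}D_\Pf(\Js)$ and $\RDG(\Pf)=\argmin{\Js\in\Dma}\Sigma_\Pf(\Js)$. Recall that the $\varphi$-improvement $\Pf\to\Pf'$ changes only voter $i$, replacing $\Js_i$ (which contains $\neg\varphi$) by $\Js_i'$ (which contains $\varphi$ and agrees with $\Js_i$ on every other issue). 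The first step is to prove the following ``objective monotonicity'' facts for every $\Js\in\Dma$: (i) if $\varphi\in\Js$ then $D_{\Pf'}(\Js)\le D_\Pf(\Js)$ and $\Sigma_{\Pf'}(\Js)\le\Sigma_\Pf(\Js)$; (ii) if $\neg\varphi\in\Js$ then $D_{\Pf'}(\Js)\ge D_\Pf(\Js)$ and $\Sigma_{\Pf'}(\Js)\ge\Sigma_\Pf(\Js)$.

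For $\RMAX$, (i) and (ii) are immediate: since $\Js_i$ and $\Js_i'$ differ only on the issue of $\varphi$, we have $d_H(\Js_i',\Js)=d_H(\Js_i,\Js)-1$ when $\varphi\in\Js$ and $d_H(\Js_i',\Js)=d_H(\Js_i,\Js)+1$ when $\neg\varphi\in\Js$, and taking the maximum against the unchanged distances $d_H(\Js_j,\Js)$ for $j\ne i$ gives the inequalities on $D$. For $\RDG$ I would in fact prove the sharper statement that $\Sigma_{\Pf'}(\Js)=\Sigma_\Pf(\Js)-1$ when $\varphi\in\Js$ and $\Sigma_{\Pf'}(\Js)=\Sigma_\Pf(\Js)+1$ when $\neg\varphi\in\Js$. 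This relies on two observations about the graph $G_{\A}$. First, $\Js_i$ and $\Js_i'$ are adjacent in $G_{\A}$: the set $\Js_i\cap\Js_i'$ already fixes the judgment on every issue other than that of $\varphi$, so $\Js_i$ and $\Js_i'$ are its only rational extensions, nothing lies strictly between them, and $d_G(\Js_i,\Js_i')=1$. Second, for any $\Js$ with $\varphi\in\Js$ the set $\Js_i'$ lies \emph{between} $\Js_i$ and $\Js$ in the sense of \cite{DuddyP:2012}, because $\Js_i\cap\Js$ contains no occurrence of $\neg\varphi$ and is therefore contained in $\Js_i\setminus\{\neg\varphi\}\subseteq\Js_i'$; symmetrically, $\Js_i$ lies between $\Js_i'$ and $\Js$ whenever $\neg\varphi\in\Js$. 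Using that $d_G$ is additive along betweenness, this gives $d_G(\Js_i,\Js)=1+d_G(\Js_i',\Js)$ in the first case and $d_G(\Js_i',\Js)=1+d_G(\Js_i,\Js)$ in the second; summing over the voters (only voter $i$'s term changes) yields the sharpened (i)--(ii) for $\Sigma$.

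To conclude, I would treat the two rules slightly differently. For $\RDG$, the uniform $\pm1$ shift of $\Sigma$ makes Lemma~\ref{lemma-mon-new} directly applicable: take $\succ_\Pf$ defined by $\Js\succ_\Pf\Js'$ iff $\Sigma_\Pf(\Js)<\Sigma_\Pf(\Js')$; condition (c) is exactly the definition of $\RDG$, and conditions (a) and (b) hold because $\Sigma_\Pf(\Js)$ and $\Sigma_\Pf(\Js')$ are shifted by the same amount when $\varphi$ belongs to both $\Js$ and $\Js'$ or to neither, while the gap only widens when $\varphi\in\Js$ and $\neg\varphi\in\Js'$. For $\RMAX$, Lemma~\ref{lemma-mon-new} cannot be used — the maximum does not shift by a fixed amount, and indeed $\RMAX(\Pf')\subseteq\RMAX(\Pf)$ can fail — so I would argue directly. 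Suppose $\varphi\in\Js$ for every $\Js\in\RMAX(\Pf)$, let $\Js'\in\RMAX(\Pf')$, and assume toward a contradiction that $\varphi\notin\Js'$, \ie $\neg\varphi\in\Js'$. Picking some $\Js^\ast\in\RMAX(\Pf)$ (so $\varphi\in\Js^\ast$) and chaining (ii) at $\Js'$, the minimality of $\Js'$ for $\Pf'$, (i) at $\Js^\ast$, and the minimality of $\Js^\ast$ for $\Pf$, one obtains
\[ D_\Pf(\Js')\ \le\ D_{\Pf'}(\Js')\ \le\ D_{\Pf'}(\Js^\ast)\ \le\ D_\Pf(\Js^\ast)\ \le\ D_\Pf(\Js'), \]
so all four quantities are equal; hence $\Js'\in\RMAX(\Pf)$, so $\varphi\in\Js'$, a contradiction. (The same chaining argument works for $\RDG$ too, as an alternative to invoking Lemma~\ref{lemma-mon-new}.)

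I expect the only delicate point to be the betweenness-additivity of $d_G$ used for $\RDG$. Only a special case is needed — if $X$ is adjacent to $\Js''$ and $\Js''$ lies between $X$ and $Y$, then $d_G(X,Y)=1+d_G(\Js'',Y)$ — and the inequality ``$\le$'' is merely the triangle inequality. The reverse inequality says that no geodesic from $\Js_i$ to a judgment set containing $\varphi$ can beat the one that starts with the step $\Js_i\to\Js_i'$; a useful ingredient is that, by the edge definition of $G_{\A}$, $\Js_i'$ is actually the \emph{unique} neighbour of $\Js_i$ that contains $\varphi$, which settles the case where a geodesic's first step already flips $\varphi$. Carrying this through in general, or citing the relevant lemma of \cite{DuddyP:2012}, is where the real care lies; the Hamming-distance bookkeeping for $\RMAX$ and the verification of the hypotheses of Lemma~\ref{lemma-mon-new} for $\RDG$ are routine.
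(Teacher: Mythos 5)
Your overall architecture is the paper's: reduce monotonicity to a per-voter statement about how the distance from a fixed $\Js$ to voter $i$'s judgment set changes under the $\varphi$-improvement, and then chain four inequalities (the two per-voter facts plus optimality of $\Js'$ for $\Pf'$ and of $\Js^\ast$ for $\Pf$) to conclude that no $\Js'$ containing $\neg\varphi$ can survive into the output on $\Pf'$. The paper packages your facts (i)--(ii) as the statement that $d_H$ and $d_G$ are \emph{agreement monotonic} --- $(\Js''\setminus\Js')\subset(\Js''\setminus\Js)$ implies $d(\Js',\Js'')\le d(\Js,\Js'')$ --- which it does not prove but cites from \cite{JELIA2014}, and its displayed chain $d^\star(\Js',\Pf')\ge d^\star(\Js',\Pf)>d^\star(\Js,\Pf)\ge d^\star(\Js,\Pf')$ is exactly your contradiction argument run forwards. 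Your $\RMAX$ case is complete and correct: the $\pm1$ bookkeeping for $d_H$ is immediate, and you are right that Lemma~\ref{lemma-mon-new} does not apply to the max aggregator, so the direct chaining is the right move there.

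The gap is in the $\RDG$ case, and you have located it yourself. The chaining only needs the weak inequalities $d_G(\Js_i',\Js)\le d_G(\Js_i,\Js)$ when $\varphi\in\Js$ and the reverse when $\neg\varphi\in\Js$; you instead set out to prove the sharper identity $d_G(\Js_i,\Js)=1+d_G(\Js_i',\Js)$ via ``additivity of $d_G$ along betweenness'', and that additivity is precisely what is not established. Your adjacency claim $d_G(\Js_i,\Js_i')=1$ is fine, and so is the observation that $\Js_i'$ is the unique neighbour of $\Js_i$ in $G_{\A}$ containing $\varphi$; but that only disposes of geodesics from $\Js_i$ to $\Js$ whose \emph{first} edge flips $\varphi$. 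If the first edge goes to some $K_1$ still containing $\neg\varphi$, the natural induction does not close: the single-issue flip of $K_1$ on $\varphi$ need not be rational, and nothing yet bounds $d_G(\Js_i',K_1)$. So as written the reverse inequality $d_G(\Js_i,\Js)\ge 1+d_G(\Js_i',\Js)$ is unproven --- and it is stronger than needed, since equality can be replaced by the weak ``$\le$'' form throughout your argument. The clean fix is the one you half-offer: invoke the betweenness property of $d_G$ from \cite{DuddyP:2012}, or the agreement monotonicity of $d_G$ from \cite{JELIA2014} as the paper does, rather than reproving it; if you insist on self-containedness, this is the one nontrivial lemma of the whole proposition and requires a genuine argument about geodesics in $G_{\A}$, not the triangle inequality plus the unique-neighbour observation.
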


\begin{proof}
We say that a distance $d$ satisfies {\em agreement monotonicity} \cite{JELIA2014} when for all judgment sets $\Js, \Js', \Js'' \in \Dma$, 
$(\Js'' \setminus \Js' )\subset (\Js'' \setminus \Js)$ implies $d(\Js',\Js'')\leq d(\Js,\Js'')$. Clearly, $d_H$ and $d_G$ are agreement monotonic \cite{JELIA2014}. Let \srdjan{$\star \in \{\sum, \max\}$}. For 
profile $\Pf = (\Js_i, i \in N)$ and judgment set $\Js$, define $d^\star(\Js,\Pf) = \star(d(\Js, \Js_i) \ | \ i \in N)$, and let $\F_d^\star$ the rule defined by $\F_d^\star(\Pf) = \mbox{argmin}_{\Js \in \Dma}d^\star(\Js,\Pf)$.

Let $d \in \{d_H, d_G\}$. Let $\Pf = (\Js_i, \Pf_{-i})$ be a profile such that $\varphi \in \Js$ for all $\Js \in \F_d^\star(\Pf)$, and $\Pf' = (\Js_i', \Pf_{-i})$ a $\varphi$-improvement of $\Pf$. 
Let $\Js \in \F_d^\star(\Pf)$ and $\Js'$ such that $\neg \varphi \in \Js'$. 
Because $\Js' \notin \F_d^\star(\Pf)$, we have $d^\star(\Js',\Pf) > d^\star(\Js,\Pf)$. 
Since $\neg \varphi \in \Js_i$, $\varphi \in \Js_i'$, and $\varphi \in \Js$, by agreement monotonicity of $d$ we have $d(\Js,\Js_i') \leq d(\Js,\Js_i)$ and $d(\Js',\Js_i') \geq d(\Js',\Js_i)$. 
Therefore, 
$$d^\star(\Js',\Pf') \geq d^\star(\Js',\Pf) > d^\star(\Js,\Pf) \geq d^\star(\Js,\Pf'),$$ 
which shows that $\Js' \notin \F_d^\star(\Pf')$. Therefore, $\F_d^\star(\Pf')$ satisfies monotonicity, and as particular cases, $\RMAX$, $\RMWA$ and $\RDG$ satisfy monotonicity. 
\end{proof}

\begin{proposition} $\RY$ satisfies
monotonicity.
 \end{proposition}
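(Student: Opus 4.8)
The plan is to mimic the structure used for the other ``elementary changes'' rules by reducing monotonicity of $\RY$ to a statement about majority-consistent sub-profiles, and then to track what happens to those sub-profiles under a $\varphi$-improvement. Fix an agenda $\A$, an element $\varphi \in \A$, a profile $\Pf = (\Js_i, \Pf_{-i})$, and a $\varphi$-improvement $\Pf' = (\Js_i', \Pf_{-i})$. Assume $\varphi \in \Js$ for every $\Js \in \RY(\Pf)$; we must show $\varphi \in \Js'$ for every $\Js' \in \RY(\Pf')$. Write $k$ for the maximum size of a majority-consistent sub-profile of $\Pf$; recall $\RY(\Pf) = \{ ext(m(\Qf)) \mid \Qf \subseteq \Pf,\ |\Qf| = k,\ m(\Qf)\ \A\text{-consistent}\}$.

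First I would observe the ``sub-profiles are preserved'' direction: every sub-profile $\Qf \subseteq \Pf$ has a natural counterpart $\Qf' \subseteq \Pf'$ (replace $\Js_i$ by $\Js_i'$ if voter $i \in \Qf$, otherwise leave it), with $|\Qf'| = |\Qf|$. Using Lemma~\ref{lemma-mon1} applied to $\Qf$ and $\Qf'$ (noting $\Qf'$ is a $\varphi$-improvement of $\Qf$ when $i \in \Qf$, and $\Qf' = \Qf$ when $i \notin \Qf$), $m(\Qf')$ is obtained from $m(\Qf)$ by at most deleting $\neg\varphi$ and/or adding $\varphi$; in particular if $m(\Qf)$ is $\A$-consistent then so is $m(\Qf')$, because $m(\Qf') \subseteq m(\Qf) \cup \{\varphi\}$ and in the only nontrivial case ($\varphi \notin m(\Qf)$, statement 3) consistency of $m(\Qf) \setminus \{\neg\varphi\} \cup \{\varphi\}$ follows since $\Js_i'$ is consistent and witnesses an extension — actually one must argue a bit: in statement~3 we have $N(\Qf,\varphi) = N(\Qf',\varphi) - 1$ is exactly on the threshold, and every voter in $\Qf'$ whose set contains $m(\Qf')$ must be checked; the cleanest route is to note $m(\Qf') \subseteq \Js_i'$ would be too strong, so instead I use the following: a maximal-cardinality majority-consistent sub-profile $\Qf$ of $\Pf$ gives a majority-consistent $\Qf'$ of $\Pf'$ of the same size, hence the optimum value $k' := \max |\Qf'|$ for $\Pf'$ satisfies $k' \geq k$. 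Conversely, the same replacement run backwards (and Lemma~\ref{lemma-mon1} in the other direction) shows $k' \leq k$, so $k' = k$ and there is a bijective-on-sizes correspondence between optimal sub-profiles.

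Next I would analyze an arbitrary optimal $\Qf' \subseteq \Pf'$ with $m(\Qf')$ $\A$-consistent and $|\Qf'| = k$. If $i \notin \Qf'$, then $\Qf' \subseteq \Pf$ is already an optimal majority-consistent sub-profile of $\Pf$, so by the hypothesis $\varphi \in ext(m(\Qf'))$ componentwise, i.e. every rational extension of $m(\Qf')$ contains $\varphi$; equivalently (Lemma~\ref{lemma-mon2}(a)) $\neg\varphi \notin m(\Qf')$ and $ext(m(\Qf')) = ext(m(\Qf') \cup \{\varphi\})$, giving $\varphi \in \Js'$ for all $\Js' \in ext(m(\Qf'))$. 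If $i \in \Qf'$, let $\Qf \subseteq \Pf$ be the corresponding sub-profile (replace $\Js_i'$ by $\Js_i$); it has size $k$ and by Lemma~\ref{lemma-mon1} $m(\Qf)$ is $\A$-consistent (it is $m(\Qf')$, or $m(\Qf') \cup \{\neg\varphi\}$ minus $\varphi$, or $m(\Qf') \cup \{\neg\varphi\} \setminus \{\varphi\}$, and in each case consistency is inherited — this needs the same small care as above, handled by checking the threshold cases). Hence $\Qf$ is optimal for $\Pf$, so by hypothesis every extension of $m(\Qf)$ contains $\varphi$, so $\varphi \in m(\Qf)$ by Lemma~\ref{lemma-mon2}, forcing $m(\Qf)$ to already contain $\varphi$, so by Lemma~\ref{lemma-mon1} we are in statement~1 and $m(\Qf') = m(\Qf)$; thus $\varphi \in m(\Qf') = m(\Qf)$ and every extension of $m(\Qf')$ contains $\varphi$. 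In all cases $\varphi \in \Js'$ for every $\Js' \in ext(m(\Qf'))$, and since $\RY(\Pf')$ is the union of such $ext(m(\Qf'))$ over optimal $\Qf'$, we conclude $\varphi \in \Js'$ for all $\Js' \in \RY(\Pf')$.

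The main obstacle I anticipate is the bookkeeping around $\A$-consistency of $m(\Qf')$ in the threshold (statement~3) case of Lemma~\ref{lemma-mon1}: one has to be sure that adding $\varphi$ and removing $\neg\varphi$ from a consistent majoritarian set keeps it consistent, which is not automatic from consistency of individual judgment sets alone and must be extracted from the fact that $m(\Qf') \subseteq m(\Qf) \cup \{\varphi\}$ together with a careful look at which voters support which formulas. A clean way around this is to prove the following helper claim first: \emph{if $\Qf'$ is a $\varphi$-improvement of $\Qf$ and $m(\Qf)$ is $\A$-consistent, then $m(\Qf')$ is $\A$-consistent} — this follows because $m(\Qf') \setminus m(\Qf) \subseteq \{\varphi\}$, and when $\varphi \in m(\Qf') \setminus m(\Qf)$ we are in statement~3, where $m(\Qf') = (m(\Qf) \setminus \{\neg\varphi\}) \cup \{\varphi\}$, and this set is a subset of $\Js_i'$ restricted appropriately... in fact the safest self-contained argument is: any rational extension of $m(\Qf) \setminus \{\neg\varphi\}$ that contains $\varphi$ witnesses consistency, and such an extension exists because $m(\Qf) \setminus \{\neg\varphi\}$ is consistent and does not entail $\neg\varphi$ (else $\neg\varphi \in m(\Qf)$ would contradict $\varphi \notin m(\Qf)$ only if... ) — so the real work is isolating precisely this helper claim and proving it once; everything else is then routine symmetry and an application of Lemmas~\ref{lemma-mon1} and~\ref{lemma-mon2}.
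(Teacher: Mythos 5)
Your overall strategy founders on the ``helper claim'' that you yourself flag as the crux and never actually prove: \emph{if $\Qf'$ is a $\varphi$-improvement of $\Qf$ and $m(\Qf)$ is $\A$-consistent, then $m(\Qf')$ is $\A$-consistent}. This claim is false. Take $\PA = \{p, q, p\wedge q\}$ with $\Ct = \top$ and $\Qf = \langle \{p,q,p\wedge q\},\ \{p,\neg q,\neg(p\wedge q)\},\ \{\neg p,\neg q,\neg(p\wedge q)\}\rangle$: here $m(\Qf) = \{p, \neg q, \neg(p\wedge q)\}$ is consistent, but the $q$-improvement of the third voter to $\{\neg p, q, \neg(p\wedge q)\}$ yields $m(\Qf') = \{p, q, \neg(p\wedge q)\}$, which is inconsistent --- the discursive dilemma reappears exactly in the threshold case (statement~3 of Lemma~\ref{lemma-mon1}). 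Your attempted justification (``$m(\Qf)\setminus\{\neg\varphi\}$ is consistent and does not entail $\neg\varphi$'') fails on the same example, since $\{p,\neg(p\wedge q)\}$ entails $\neg q$ without containing it. The converse transfer, which you need both for $k' \le k$ and for the pullback of an optimal $\Qf'$ with $i \in \Qf'$ to be majority-consistent (so that the hypothesis on $\RY(\Pf)$ can be invoked at all), fails for the same reason. A secondary slip: from ``every rational extension of $m(\Qf)$ contains $\varphi$'', Lemma~\ref{lemma-mon2} gives only $\neg\varphi \notin m(\Qf)$, not $\varphi \in m(\Qf)$, so you cannot conclude that you are in statement~1 of Lemma~\ref{lemma-mon1}.

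The paper's proof avoids the false transfer by never invoking it in this bare form. It argues by contradiction from a maxcard majority-consistent $\Qf' \subseteq \Pf'$ admitting an extension containing $\neg\varphi$, and the only consistency transfers it performs are of two safe kinds: (i) for a maxcard majority-consistent subprofile $U$ of $\Pf$, the monotonicity hypothesis guarantees that \emph{every} rational extension of $m(U)$ contains $\varphi$, whence Lemma~\ref{lemma-mon2} yields $\neg\varphi \notin m(U)$ and $ext(m(U)\cup\{\varphi\}) = ext(m(U))$, so the improved $U'$ is majority-consistent of the same cardinality; (ii) for the pullback $Q$ of $\Qf'$, the assumed bad extension of $m(\Qf')$ containing $\neg\varphi$ itself serves as a rational extension of $m(Q)$ and thus witnesses its consistency. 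Neither of these devices is available in your direct, correspondence-based argument, so to repair it you would essentially have to reintroduce the paper's case split and contradiction structure.
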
 

\begin{proof}

\jl{Let $\Pf = (\Js_i, \Pf_{-i})$, and $\Pf' = (\Js_i', \Pf_{-i})$ a $\varphi$-improvement of $\Pf$, that is, 
(1) $\Pf' = (\Js_i', \Pf_{-i})$, with $\Js_i' = (\Js_i \setminus \{\neg \varphi\}) \cup \{\varphi\}$ (and $\Js'_i$ consistent).} Assume that (2) every judgment set in $\RY(\Pf)$ contains $\varphi$. Assume as well that (3) some judgment set in $\RY(\Pf')$ contains $\neg \varphi$, which means that (4) there is a maximum cardinality majority-consistent subprofile $\Qf'$ of $\Pf'$  such that $\neg \varphi \in \Js'$ for some $\Js' \in ext(m(\Qf'))$. 
We distinguish two cases: 
\medskip

\noindent {\em Case 1: $\Js'_i \notin \Qf'$}. Then $Q'$ is also a majority-consistent subprofile of $\Pf$. Because $Q'$ can be extended into a judgment set containing $\neg \varphi$, $Q'$ cannot be a maxcard majority-consistent subprofile of $P$, thus (5) there exists a maxcard majority-consistent subset $U$ of $P$ with $|U| > |Q'|$. 
\smallskip


\noindent {\em Case 1.1: $\Js_i \notin U$}. Then $U$ is also a majority-consistent subset of $P'$ with $|U| >  |Q'|$, which contradicts (4).\smallskip

\noindent {\em Case  1.2: $\Js_i \in U$}. Let $U' = (U\setminus \{\Js_i\}) \cup \{\Js_i'\}$. Note that $U'$ is a $\varphi$-improvement of $U$. Because of (2) and (5), we have (6) every rational extension of $m(U)$ contains $\varphi$. \sj By point (a) of Lemma \sj \ref{lemma-mon2} applied to $U$, $\neg \varphi \notin m(U)$. Now, we apply Lemma \sj \ref{lemma-mon1} to $U$. Condition (2) is impossible because $\neg \varphi \notin m(U)$, and condition (3) simplifies to $m(U') = m(U)\cup \{\varphi\}$; therefore, either \sj $m(U') = m(U)$ or $m(U') = m(U)\cup \{\varphi\}$. If $m(U') = m(U)$ then trivially, $U'$ is majority-consistent. If $m(U') = m(U)\cup \{\varphi\}$ then \ej applying \sj point (b) of \ej Lemma \ref{lemma-mon2} to $J = m(U)$, we obtain that 
$m(U)$ and $m(U')$ have the same rational extensions, which in turn implies that $U'$ is majority-consistent. Thus, $U'$ is a majority-consistent subset of $P'$ with $|U'| = |U| > |Q'|$, which contradicts (4). \medskip

\noindent {\em Case  2: $\Js'_i \in \Qf'$}. Let $Q = (\Qf' \setminus \{\Js_i'\}) \cup \{\Js_i\}$. 
Because of (4), $m(Q')$ does not contain $\varphi$, and because $Q'$ is a $\varphi$-improvement of $Q$, $m(Q)$ does not contain $\varphi$ either, and moreover $m(Q)$ and $m(Q')$ coincide on all issues other than $\varphi, \neg \varphi$. This implies that a rational extension $\Js'$ of $m(\Qf')$ containing $\varphi$ is also a rational extension of $m(Q)$, therefore, $m(Q)$ is consistent and has some rational extension containing $\neg \varphi$. This, together with (2), implies that $Q$ cannot be a maxcard majority-consistent subprofile of $P$, that is, there is a maxcard majority-consistent subprofile $T$ of $P$ such that $|T| > |Q|$. \smallskip

\noindent {\em Case 2.1: $\Js_i \notin T$}. Then $T$ is also a majority-consistent subprofile of $P'$, and $|T| > |Q| = |Q'|$, which contradicts (4).\smallskip

\noindent {\em Case  2.2: $\Js_i \in T$}. Let $T' = (T \setminus \{\Js_i\}) \cup \{\Js_i'\}$.
Similarly as in case 1.2, $m(T)$ and $m(T')$ have the same rational extensions, and $T'$ is a majority-consistent subprofile of $P'$ such that $|P'| > |Q'|$, which contradicts (4).\bigskip

\end{proof}

Here come now \sj two rather surprising results. \ej
\begin{table}[th!]
 \includegraphics[width=\textwidth]{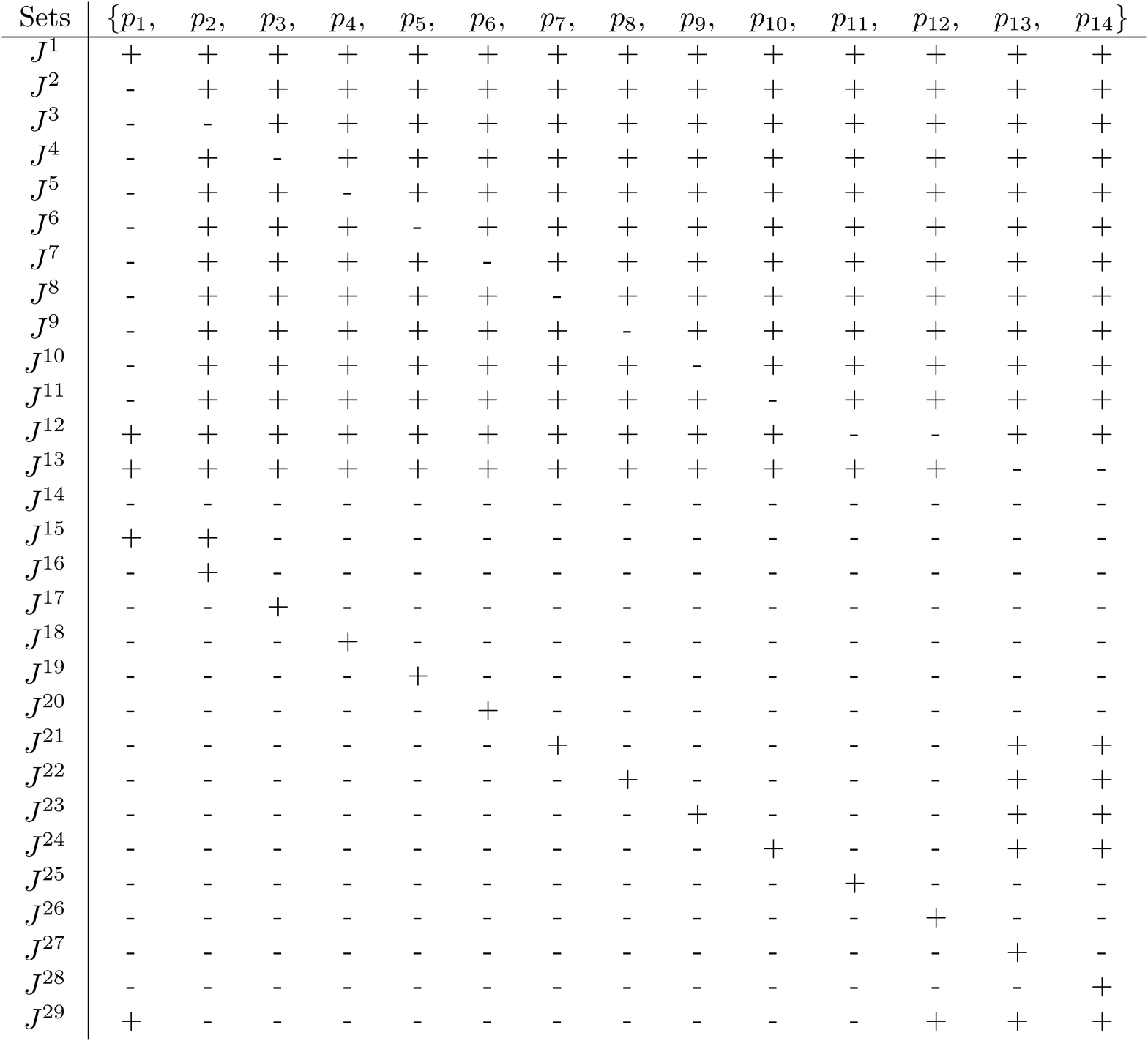}
\caption{Counterexample showing that $\REVS$ does not satisfying monotonicity. }\label{tab:borcount}
\end{table}

\begin{proposition} $\REVS$ does not satisfy monotonicity.
 \end{proposition}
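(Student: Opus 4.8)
The plan is to exhibit an explicit counterexample: an agenda $\A$, a profile $\Pf$, an element $\varphi \in \A$, and a $\varphi$-improvement $\Pf'$ of $\Pf$, such that $\varphi$ belongs to every judgment set in $\REVS(\Pf)$ but $\varphi$ fails to belong to some judgment set in $\REVS(\Pf')$. Since the referenced Table~\ref{tab:borcount} (the image \texttt{BordaMonCount}) is supposed to carry the data, the written argument should consist of: (1) describing the agenda and the full set of rational judgment sets $\Dma$ (since reversal scores depend on all of $\Dma$, not just the profile); (2) displaying the profile $\Pf$ and the improved profile $\Pf'$; and (3) computing, for the relevant candidate judgment sets, the total reversal score $\sum_{\varphi \in \Js}\sum_{\Js_i \in \Pf}\mathrm{rev}(\Js_i,\varphi)$ and checking that the argmax shifts from a $\varphi$-containing set (in $\Pf$) to one containing $\neg\varphi$ (in $\Pf'$).

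The key computational steps, in order, are as follows. First I would fix the agenda small enough that $\Dma$ is manageable but rich enough that the reversal score $\mathrm{rev}(\Js,\psi) = \min\{d_H(\Js,\Js') : \Js' \in \Dma, \psi \notin \Js'\}$ genuinely varies across $\psi$ (a constraint that makes some issues ``hard to flip'' is what creates the Borda-like behavior and its monotonicity failure). Second, I would tabulate $\mathrm{rev}(\Js_i,\psi)$ for every individual judgment set $\Js_i$ appearing in $\Pf$ and every $\psi \in \A$ — this is the bulk of the bookkeeping. Third, using additivity of the score over $\psi \in \Js$ and over voters, I would identify $\REVS(\Pf)$ by comparing the finitely many candidate totals, verifying that every maximizer contains the distinguished $\varphi$. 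Fourth, I would pass to $\Pf'$, which differs from $\Pf$ only in one voter's judgment on the issue $\{\varphi,\neg\varphi\}$: this changes $\mathrm{rev}(\Js_i,\cdot)$ only for that one voter, so the new totals differ from the old ones in a controlled, easily-recomputed way. Then I exhibit a judgment set containing $\neg\varphi$ whose new total strictly exceeds that of every $\varphi$-containing set, which shows $\varphi \notin \Js'$ for some $\Js' \in \REVS(\Pf')$, contradicting monotonicity.

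The main obstacle is purely the design and verification of the numbers: one must engineer $\Dma$ so that the reversal scores create exactly the right non-monotone interaction — improving $\varphi$ in one vote must \emph{lower} the relative score of $\varphi$-containing collective sets, which is counterintuitive and requires the reversal distances to behave non-locally. Concretely, this happens because raising $N(\Pf,\varphi)$ does not simply add to the score of $\varphi$-sets: it changes which individual judgment sets are in the profile, and the reversal score $\mathrm{rev}(\Js_i,\psi)$ of the changed voter on \emph{other} issues $\psi$ may drop, so the collective set that best ``matches'' the profile shifts away from $\varphi$. Getting a clean, checkable instance is the whole difficulty; once the table is in place, the verification is a finite (if tedious) arithmetic check. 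Since the paper delegates the data to the figure \texttt{BordaMonCount}, the prose here need only point the reader to that table and walk through the score comparison for the handful of decisive judgment sets, concluding that $\REVS$ violates monotonicity.
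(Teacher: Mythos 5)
Your strategy is exactly the one the paper uses---exhibit a concrete agenda, a two-part profile, and a $\varphi$-improvement under which the argmax of the total reversal score shifts from $\varphi$-containing judgment sets to a set containing $\neg\varphi$---and your diagnosis of \emph{why} such a shift is possible is correct: since $\mathrm{rev}(\Js_i,\psi)=\min\{d_H(\Js_i,\Js'):\Js'\in\Dma,\ \psi\notin\Js'\}$ depends on the entire judgment set $\Js_i$ and on the global structure of $\Dma$, flipping one voter's judgment on $\varphi$ perturbs that voter's scores on \emph{other} issues, so the improvement is not a purely local boost to $\varphi$. That is precisely the mechanism exploited in the paper's counterexample (an agenda on $14$ issues with $29$ rational judgment sets, $\Pf=\langle\Js^1,\Js^{14}\rangle$ with $\REVS(\Pf)=\{\Js^8,\Js^9,\Js^{10},\Js^{11}\}$ all containing $\neg p_1$, and the $\neg p_1$-improvement $\Pf'=\langle\Js^2,\Js^{14}\rangle$ with $\REVS(\Pf')=\{\Js^{29}\}\not\ni\neg p_1$).

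The gap is that you never actually produce the witness. The proposition is an existential claim, and its entire mathematical content is the concrete instance: the constraint $\Ct$, the list of rational judgment sets, the profile, and the arithmetic showing the argmax moves. You acknowledge this yourself (``getting a clean, checkable instance is the whole difficulty'') and then defer to the paper's own Table~\ref{tab:borcount} for the data---but a blind proof cannot lean on a table it has not constructed, and once that reference is removed, nothing verifiable remains. A plan for how one \emph{would} engineer and check a counterexample, however accurate, does not establish that one exists; indeed the nontrivial part is ruling out the a priori possibility that the non-local score perturbations always happen to preserve monotonicity. To close the gap you must either write down an explicit $\Dma$, profile, and improvement with the score computations, or give a structural argument (e.g., a reduction from a known non-monotone behaviour of some voting rule under a suitable constraint) that guarantees such an instance exists; as it stands, neither is present.
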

\begin{proof} Consider an agenda $\A = \{p_1, \neg p_1, \ldots, p_{14}, \neg p_{14}\}$ and \srdjan{constraint} $\Ct$, such that $\Dmc_{\A}$ contains judgment sets $\Js^1$ to $\Js^{29}$ as given in Table~\ref{tab:borcount}.  

Consider the profile of two agents $\Pf=\langle \Js_1, \Js_2\rangle$ where $\Js_1=\Js^1$ and $\Js_2=\Js^{14}$. We obtain that $\REVS(\Pf)=\{ \Js^8,\Js^9, \Js^{10}, \Js^{11}\}$. Observe that for every $\Js \in \REVS(\Pf)$, $\neg p_1 \in \Js$. Consider the \jl{$\neg p_1$}-improvement of $\Pf$, the profile $\Pf' = \langle \Js'_1,\Js_2\rangle$, where $\Js'_1 = \Js^2$. We now have that $\REVS(\Pf')=\{ \Js^{29}\}$  and $\neg p_1 \not\in \Js^{29}$.
\end{proof}

Thus, $\REVS$ fails to satisfy monotonicity although, of course, the Borda rule does satisfy it.\footnote{This surprising result triggers further questions: are there interesting agendas, other than the preference agenda, for which $\REVS$ remains monotonic? Can we find a natural monotonic extension of the Borda rule? Such intriguing questions are left for further study.}

\begin{proposition} $\RMNAC$ does not satisfy monotonicity.
 \end{proposition}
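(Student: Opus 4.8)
The plan is to exhibit an explicit counterexample, since $\RMNAC$ does not fit the framework of Lemma~\ref{lemma-mon-new} and, unlike $\RY$, small changes in a single judgment set can have a large and non-monotone effect on the minimal profile repair. First I would look for a small agenda $\A$ (with a suitable integrity constraint $\Ct$) together with a profile $\Pf = (\Js_i, \Pf_{-i})$ that is \emph{not} majority-consistent, chosen so that the unique closest majority-consistent profile $\Qf$ forces some issue $\varphi$ into every set of $\RMNAC(\Pf)$ -- i.e.\ $\varphi \in \Js$ for all $\Js \in ext(m(\Qf))$. Then I would perform a $\varphi$-improvement on voter $i$ (flip $\neg\varphi$ to $\varphi$ in $\Js_i$) to obtain $\Pf'$, arranged so that this flip changes which repair is cheapest: after the improvement, the new minimal $D_H$-repair $\Qf'$ of $\Pf'$ has $\neg\varphi \in \Js'$ for some $\Js' \in ext(m(\Qf'))$, violating monotonicity.

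The mechanism I would exploit is that moving voter $i$ toward $\varphi$ can make a \emph{different} inconsistency in $m(\Pf')$ cheaper to resolve by setting the collective value of $\varphi$ to $\neg\varphi$ -- for instance if $\varphi$ participates in two different minimal inconsistent subsets of the majoritarian set, and the improvement shifts the balance of how many reversals each resolution costs. Concretely I would build the agenda so that $m(\Pf)$ contains two ``conflicts'', one of which is cheaply repaired by keeping $\varphi$ and one of which, after the improvement changes the $N(\Pf,\cdot)$ margins, becomes cheaply repaired only by dropping $\varphi$; the key is to tune the vote counts so that in $\Pf$ the unique optimum repair keeps $\varphi$, while in $\Pf'$ the unique optimum repair drops it. One can try to reuse or adapt the agenda already appearing in Table~\ref{tab:item5} (the one used to show $\RMNAC$ fails weak unanimity), since that agenda already has the feature that $\RMNAC$ is willing to overturn a near-unanimous issue $p$; a monotonicity counterexample is morally a ``dynamic'' version of the same phenomenon.

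The main obstacle, and the bulk of the work, is verifying the two minimality claims: that $D_H(\Pf,\Qf)$ is genuinely minimal among \emph{all} $\Qf \in \Dma^n$ with $m(\Qf)$ $\A$-consistent (and likewise for $\Pf'$), and that the optimal repairs have the claimed direction on $\varphi$. This requires either a clean combinatorial argument bounding the number of reversals needed to kill every minimal inconsistent subset of $m(\Pf)$ (and $m(\Pf')$), or -- if the agenda is kept small enough -- an exhaustive check over $\Dma$ exactly as done in the $\REVS$ and $\RDG$ proofs above, displayed as a table. I would aim to keep $|\Dma|$ and $n$ small enough that the verification can be presented as a short table of $D_H$ distances (analogous to Tables~\ref{tab:pigdud}--\ref{tab:borcount}), flag the grey cells indicating the reversed judgments in $\Qf$ and $\Qf'$, and then the proposition follows: $\neg p_1$ (say) is in every set of $\RMNAC(\Pf)$ but in some set of $\RMNAC(\Pf')$ after a $\neg p_1$-improvement is reversed -- wait, it is a $p_1$-improvement, and $p_1$ belongs to every set of $\RMNAC(\Pf)$ while $p_1$ fails to belong to some set of $\RMNAC(\Pf')$, contradicting monotonicity.
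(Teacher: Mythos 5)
Your strategy is exactly the one the paper uses: exhibit a profile $\Pf$ with $\RMNAC(\Pf)$ forcing some $\varphi$, perform a $\varphi$-improvement to get $\Pf'$, and show that the set of cheapest majority-consistent repairs of $\Pf'$ now admits an extension containing $\neg\varphi$, verified by an exhaustive enumeration of all candidate profiles within the relevant Hamming radius. You have also correctly identified the mechanism (the improvement re-balances which repair of the majoritarian inconsistencies is cheapest) and correctly located where the real work lies (the two minimality verifications). The problem is that you never actually do any of it. For a proposition of the form ``rule $X$ does not satisfy property $Y$,'' the counterexample together with its verification \emph{is} the entire proof; a description of the properties a counterexample would need to have, plus a promise that the verification could be carried out ``as a short table,'' establishes nothing. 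As written, your proposal is a research plan, not a proof.

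Two further concrete issues. First, your suggestion to reuse the agenda of Table~\ref{tab:item5} is not backed by any computation, and there is no reason to expect it works off the shelf; the paper in fact has to build a bespoke agenda on $16$ issues with $|\Dmc_{\A}| = 9$, tuned so that $\RMNAC(\Pf)=\{\Js^4\}$ is a singleton containing $p_1$ while $\RMNAC(\Pf')=\{\Js^4,\Js^5\}$ with $\neg p_1\in\Js^5$, and then enumerates the $10$ (resp.\ $14$) profiles at distance at most $5$ (resp.\ $6$) to certify minimality. Finding such a configuration is the nontrivial part, and nothing in your sketch guarantees one exists on your proposed agenda. Second, the hedging at the end (``wait, it is a $p_1$-improvement\dots'') signals that even the final logical step---which direction of improvement, and which polarity of $\varphi$ must appear in $\RMNAC(\Pf)$ versus $\RMNAC(\Pf')$ to contradict the definition of monotonicity---has not been pinned down. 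To complete the proof you must produce the explicit agenda, constraint, profile, improvement, and the full distance tables, and check every competing repair.
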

\begin{proof} Consider an agenda $\A = \{p_1, \neg p_1, \ldots, p_{16}, \neg p_{16}\}$ and \srdjan{constraint} $\Ct$, such that \srdjan{$\Dmc_{\A}= \{ \Js^1, \ldots, \Js^9 \}$} as given \srdjan{in Table \ref{ce-mnac-1}}.
{\small
\begin{table}[htb!]
\begin{tabular}{c|c|ccc|ccc|ccc|ccc|ccc}
& $p_1$&$p_2$&$p_3$&$p_4$&$p_5$&$p_6$&$p_7$&$p_8$&$p_9$&$p_{10}$&$p_{11}$&$p_{12}$&$p_{13}$&$p_{14}$&$p_{15}$&$p_{16}$\\ \hline
$\Js^1$& +&+&+&+&+&+&+&+&+&+&+&+&+&+&+&+\\
$\Js^2$& +&+&-&-&+&-&-&+&-&-&+&-&-&+&-&-\\
$\Js^3$& -&-&+&-&-&+&-&-&+&-&-&+&-&-&+&-\\ \hline
$\Js^4$&+&+&+&-&+&-&+&+&-&-&+&+&-&+&+&-\\
$\Js^5$&-&+&+&-&+&+&-&+&+&-&+&+&-&+&+&-\\
$\Js^6$& -&+&-&-&+&+&+&+&+&+&+&+&+&+&+&+\\
$\Js^7$& -&+&+&+&+&-&-&+&-&-&+&-&-&+&-&-\\
$\Js^8$& -&-&+&-&+&-&+&+&-&-&-&+&-&-&+&-\\ \hline
$\Js^9$& +&-&+&-&-&+&-&-&+&-&-&+&-&-&+&-\\
\end{tabular}\medskip
\caption{Counter-example showing that $\RMNAC$ does not satisfy monotonicity. }
\label{ce-mnac-1}
\end{table}
}

Consider profile $\Pf=\langle \Js^1, \Js^2,\Js^3\rangle$. 
Let $Q = \langle \Js^1, \Js^2,\Js^8\rangle$.
We observe that $d(P,Q) = d(\Js^3,\Js^8) = 5$, and that $m(Q) = \{\Js_4\}$, therefore $Q$ is majority-consistent. We claim that there is no other majority-consistent profile $Q'$ with $d(P,Q') \leq 5$. To check this, we first give the distances between $\Js^1, \Js^2,\Js^3$ and other judgement sets; they are represented \srdjan{in Table \ref{ce-mnac-2}}.

\begin{table}[htb]%
$$\begin{array}{c|c|c|c|c|c|c|c|c|c}
& 				\Js^1	& \Js^2 &\Js^3 	&\Js^4 	&\Js^5 	&\Js^6 	&\Js^7 	&\Js^8 	&\Js^9 	\\ \hline
	\Js^1 & 0 		& 10 		& 11 		& 6 		& \srdjan{6} 		& 3 		& 9 		& \srdjan{10} 		& 10 		\\  
	\Js^2 & 10 		& 0 		& 11 		& 4 		& 6 		& 9 		& 3 		& 8 		& \srdjan{10}		\\  
	\Js^3 & 11 		& 11 		& 0 		& \srdjan{9} 		& 5 		& 10 		& 10 		& 5 		& 1
\end{array}$$
\caption{Distances between $\Js^1, \Js^2,\Js^3$ and other judgement sets.}
\label{ce-mnac-2}
\end{table}

Now, we list all profiles such that $d(P,Q') \leq 5$. There are 10; they are shown in \srdjan{Table \ref{ce-mnac-3}}, together with their distance to $P$ and their majoritarian aggregation.
\begin{table}%
$$\begin{array}{c|c|c}
Q' & d(P,Q') & maj(Q')\\ \hline
\Pf & 0 & + ~ + + - ~  +  + - ~ +  +  - ~+  +  - ~ + +  -\\
Q & 5 & J_4\\
\langle \Js^1, \Js^2,\Js^5\rangle & 5 & + ~ + + - ~  +  + - ~ +  +  - ~+  +  - ~ + +  - \\
\langle \Js^1, \Js^2,\Js^9\rangle & 1 &  + ~ + + - ~  +  + - ~ +  +  - ~+  +  - ~ + +  - \\ 
\langle \Js^1, \Js^7,\Js^3\rangle & 3 &  - ~ + + + ~  +  + - ~ +  +  - ~+  +  - ~ + +  - \\ 
\langle \Js^1, \Js^7,\Js^9\rangle & 4 &  + ~ + + + ~  +  + - ~ +  +  - ~+  +  - ~ + +  - \\
\langle \Js^1, \Js^4,\Js^3\rangle & 4 &  + ~ + + - ~  +  + + ~ +  +  - ~+  +  - ~ + +  - \\
\langle \Js^1, \Js^4,\Js^9\rangle & 5 &  + ~ + + - ~  +  + + ~ +  +  - ~+  +  - ~ + +  - \\
\langle \Js^6, \Js^2,\Js^3\rangle & 3 &  - ~ + - - ~  \srdjan{+  + -} ~ +  +  - ~+  +  - ~ + +  - \\
\langle \srdjan{\Js^6, \Js^2,\Js^9}\rangle & 4 &  + ~ + - - ~  \srdjan{+  + -} ~ +  +  - ~+  +  - ~ + +  - \\
\end{array}$$
\caption{All the profiles at distances $5$ or less from $P$.}
\label{ce-mnac-3}
\end{table}
All of them except  $Q$ are majority-inconsistent. Therefore, $\RMNAC(\Pf) = m(Q) = \{\Js^4\}$. Also, observe that $p_1 \in \Js^4$.\\

Consider now $P' = \langle \Js^1, \Js^2,\Js^9\rangle$. Since $\Js^3$ and $\Js^9$ differ only on $p_1$, $P'$ is a $p_1$-improvement of $P$.
Now, we claim that $\RMNAC(\Pf') = \{\Js^4, \Js^5\}$. First, we observe that $d(P',Q) = d(\Js^9,\Js^8) = 6$.  We show now that there are no majority-consistent profile $Q'$ with $d(P',Q') < 6$, and that there is another one with $d(P',Q') = 6$. To check this, we first give the distances between $\Js_9$ and each of the 9 consistent judgment sets, which are shown in \srdjan{Table \ref{ce-mnac-4}}.

\begin{table}%
$$\begin{array}{c|c|c|c|c|c|c|c|c|c}
& \Js^1& \Js^2 &\Js^3 &\Js^4 &\Js^5 &\Js^6 &\Js^7 &\Js^8 &\Js^9 \\ \hline
\Js^9 & \srdjan{10} & \srdjan{10} & 1 & \srdjan{8} & 6 & 11 & 11 & 6 & 0
\end{array}$$
\caption{Distances between $\Js_9$ and each of the 9 consistent judgment sets.}
\label{ce-mnac-4}
\end{table}

Let us now list all profiles such that $d(P',Q') \leq 6$. \srdjan{There are 14 of them}: all those that satisfied $d(P,Q') \srdjan{\leq 5}$, and four more that are shown \srdjan{in Table \ref{ce-mnac-5}}.

\begin{table}%
$$\begin{array}{c|c|c}
Q' & d(P',Q') & maj(Q')\\ \hline
\langle \Js^6, \Js^7,\Js^9\rangle & 6 & \Js^5 \\
\langle \Js^4, \Js^2,\Js^9\rangle & 6 &  + ~ + + - ~  +  - - ~ +  -  - ~ +  +  - ~ + +  - \\
\srdjan{\langle \Js^1, \Js^5,\Js^9 \rangle }& 6 &  + ~ + + - ~  + + - ~ + + - ~ + + - ~ + +  -  \\
\srdjan{\langle \Js^5, \Js^2,\Js^9 \rangle} & 6 &   + ~ + + - ~  + + - ~ + + - ~ + + - ~ + +  -  \\
\end{array}$$
\caption{There are 14 profiles at distance 6 or less from $P'$: four profiles from this table and all those from Table \ref{ce-mnac-3}.}
\label{ce-mnac-5}
\end{table}

We conclude that there are exactly two consistent profiles at distance 6 from $P'$: $Q$ and $\langle \Js^6, \Js^7,\Js^9 \rangle$.
Therefore, $\RMNAC(\Pf') = \{\Js^4, \Js^5\}$. Now, $\neg p_1 \in \Js^5$, which shows that $\RMNAC$ does not satisfy monotonicity.
\end{proof}



\subsection{Reinforcement}\label{sec:rein}

A social preference function $F$ satisfies {\em reinforcement} if whenever two profiles over disjoint electorates have some output rankings in common, then the profiles obtained by merging the two electorates leads to elect those rankings that are obtained for both profiles. This easily generalizes to judgment aggregation rules as follows.

\begin{definition}
For \leon{every} 
two profiles $\Pf = \langle \Js_1, \ldots, \Js_n \rangle$ and $\Qf = \langle \Js_{n+1}, \ldots, \Js_q \rangle$, we denote $\Pf + \Qf = \langle \Js_1, \ldots, \Js_q \rangle$. We say that a judgment aggregation rule $\F$ satisfies {\em reinforcement} when for \leon{every} 
agenda $\A$, and \leon{every} 
two profiles $\Pf$ and $\Qf$ over disjoint electorates, if $F(\Pf) \cap F(\Qf) \neq \emptyset$ then $F(\Pf+\Qf) = F(\Pf) \cap F(\Qf)$. 
\end{definition}

Young and Levenglick's theorem \cite{YoungLevenglick78} tells us that among social preference functions Kemeny's rule is the unique Condorcet extension satisfying neutrality and reinforcement. As a consequence, if a judgment aggregation rule is majority-preserving and if its application to a preference agenda defines a neutral social preference function, then this SPF has to be Kemeny's rule. 

\begin{corollary}
$\RMSA$, $\RMCSA$, $\RRA$, $\RLEXIMAX$, $\RMNAC$ and $\RY$ do not satisfy reinforcement. 
\end{corollary}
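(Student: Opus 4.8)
The plan is to exploit Young and Levenglick's theorem exactly as the sentence preceding the corollary suggests. The key observation is that each of the rules $\RMSA$, $\RMCSA$, $\RRA$, $\RLEXIMAX$, $\RMNAC$ and $\RY$ is majority-preserving (by the first Proposition of Section~\ref{sec:mp}), and each of them, when applied to a preference agenda $\A_{\Alt}^{Tr}$ with the transitivity constraint, induces a \emph{neutral} social preference function. Neutrality is inherited from the fact that all six rules are defined purely in terms of the combinatorics of the majoritarian judgment set, the weighted majoritarian set, or distances/cardinalities between judgment sets — none of them refers to the identity of particular agenda formulas — so relabelling the alternatives commutes with applying the rule.

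First I would recall that a social preference function derived from a majority-preserving judgment aggregation rule via the $Tr$ constraint is automatically a Condorcet extension: whenever the pairwise majority relation is transitive (equivalently, the majoritarian judgment set is $\A_{\Alt}^{Tr}$-consistent), majority-preservation forces the rule to output exactly $ext(m(\Pf))$, which on the preference agenda is the single linear order given by the majority relation. In particular it returns the Condorcet winner at the top whenever one exists. Next I would argue neutrality of each induced SPF, rule by rule but briefly, by noting that a permutation $\sigma$ of $\Alt$ induces a permutation of the pre-agenda $[\A_{\Alt}^{Tr}]$ that preserves the transitivity constraint, preserves $N(\Pf,\cdot)$, preserves $m(\Pf)$, preserves Hamming and geodesic distances, and preserves cardinalities of intersections; hence it commutes with $\RMSA$, $\RMCSA$, $\RRA$, $\RLEXIMAX$, $\RMNAC$, $\RY$.

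Then I would suppose, for contradiction, that one of these rules $\F$ satisfies reinforcement. Its induced SPF on $\A_{\Alt}^{Tr}$ is then a neutral Condorcet extension satisfying reinforcement, so by Young and Levenglick's theorem it must coincide with the Kemeny SPF. But the Kemeny SPF is the SPF induced by $\RMWA$, and it is a single-valued-up-to-ties rule that in general differs from all six of these rules: it suffices to exhibit one preference profile (equivalently, one profile on $\A_{\Alt}^{Tr}$) on which $\F$ and $\RMWA$ return different sets of rankings. For $\RMSA$ the top-cycle rule differs from Kemeny; for $\RMCSA$ the Slater rule differs from Kemeny; for $\RRA$ and $\RLEXIMAX$ ranked pairs differs from Kemeny; for $\RY$ the Young SPF differs from Kemeny; and for $\RMNAC$ (a Dodgson-like rule) it differs from Kemeny as well. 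Producing these separating profiles — or citing the standard fact that each of Slater, top-cycle, ranked pairs, Young and Dodgson is distinct from Kemeny — is the only real work, and the main obstacle is simply making sure each separating example is small and that the corresponding judgment-aggregation profile is legitimate; everything else is bookkeeping about neutrality. Concluding, no such $\F$ can satisfy reinforcement, which is the statement of the corollary.
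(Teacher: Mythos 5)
Your proposal is correct and follows essentially the same route as the paper, which presents this as an immediate corollary of Young and Levenglick's theorem: majority-preservation plus neutrality of the induced social preference function forces any rule satisfying reinforcement to coincide with the Kemeny SPF on the preference agenda with the $Tr$ constraint, and none of the six rules does. The only detail left implicit both in your write-up and in the paper is the exhibition, for each rule (in particular for $\RMNAC$, whose correspondence with a named voting rule is not exact), of a concrete preference profile separating its induced SPF from Kemeny's.
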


The following result does not come as a surprise, as reinforcement is the key property of scoring voting rules:

\begin{proposition}
All scoring rules satisfy reinforcement.
\end{proposition}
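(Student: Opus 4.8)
The plan is to show directly from the definition of a scoring rule $\RS$ that it satisfies reinforcement, using the fact that the score of a judgment set is additive over voters. Recall that for a scoring function $s: \Dma \times \A \rightarrow \mathbb{R}^+$, the rule is $\RS(\Pf) = \argmax{\Js \in \Dma} \sum_{\varphi \in \Js}\sum_{\Js_i \in \Pf} s(\Js_i, \varphi)$. The key observation is that if we write $\sigma(\Js, \Pf) = \sum_{\varphi \in \Js}\sum_{\Js_i \in \Pf} s(\Js_i, \varphi)$, then for any two profiles $\Pf$ and $\Qf$ over disjoint electorates we have $\sigma(\Js, \Pf + \Qf) = \sigma(\Js, \Pf) + \sigma(\Js, \Qf)$, since the inner sum over $\Js_i \in \Pf + \Qf$ splits as a sum over $\Js_i \in \Pf$ plus a sum over $\Js_i \in \Qf$ (the electorates being disjoint ensures no voter is double-counted).

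First I would fix an agenda $\A$ and two profiles $\Pf$, $\Qf$ over disjoint electorates with $\RS(\Pf) \cap \RS(\Qf) \neq \emptyset$; pick $\Js^\star \in \RS(\Pf) \cap \RS(\Qf)$, and let $a = \max_{\Js \in \Dma} \sigma(\Js, \Pf)$ and $b = \max_{\Js \in \Dma} \sigma(\Js, \Qf)$, so that $\sigma(\Js^\star, \Pf) = a$ and $\sigma(\Js^\star, \Qf) = b$. For the inclusion $\RS(\Pf) \cap \RS(\Qf) \subseteq \RS(\Pf + \Qf)$: any $\Js \in \RS(\Pf) \cap \RS(\Qf)$ has $\sigma(\Js, \Pf+\Qf) = a + b$, and for an arbitrary $\Js' \in \Dma$ we have $\sigma(\Js', \Pf + \Qf) = \sigma(\Js', \Pf) + \sigma(\Js', \Qf) \leq a + b$; hence $\Js$ maximizes $\sigma(\cdot, \Pf+\Qf)$, i.e. $\Js \in \RS(\Pf+\Qf)$ (and this also shows $\max_{\Js \in \Dma}\sigma(\Js, \Pf+\Qf) = a+b$). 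For the reverse inclusion $\RS(\Pf+\Qf) \subseteq \RS(\Pf) \cap \RS(\Qf)$: if $\Js \in \RS(\Pf+\Qf)$ then $\sigma(\Js, \Pf) + \sigma(\Js, \Qf) = a + b$; since $\sigma(\Js, \Pf) \leq a$ and $\sigma(\Js, \Qf) \leq b$, equality forces $\sigma(\Js, \Pf) = a$ and $\sigma(\Js, \Qf) = b$, so $\Js \in \RS(\Pf)$ and $\Js \in \RS(\Qf)$.

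I do not anticipate a serious obstacle here; the only mild subtlety is the bookkeeping that disjointness of the electorates is what makes the score additive — if the electorates overlapped, merging would re-count shared voters and additivity would fail. It is also worth remarking that this argument is entirely generic in $s$, so it covers $\RMWA$, $\REVS$, and all the other scoring rules of Section~\ref{sec:scoring} simultaneously, and in particular $\RMWA$ (hence $\RSUM$) satisfies reinforcement. One could optionally note that the argument shows something slightly stronger: the maximum achievable score on $\Pf+\Qf$ equals the sum of the two maxima exactly when the two argmax sets intersect, which is the natural "additivity of optima" phenomenon underlying reinforcement for all scoring-based aggregators.
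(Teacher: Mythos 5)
Your proof is correct and follows essentially the same route as the paper's: both rest on the additivity of the score over disjoint electorates and then a standard argmax argument for the two inclusions. The only cosmetic difference is that you prove the reverse inclusion directly (equality to $a+b$ forces both summands to be maximal), whereas the paper argues the contrapositive; the substance is identical.
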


\begin{proof}
Let $\RS$ be the scoring rule based on some scoring function $s$. Let $P$ and $Q$ be \srdjan{two profiles} over disjoint electorates and assume that $\RS(\Pf) \cap \RS(\Qf) \neq \emptyset$. Let $\Js \in \RS(\Pf) \cap \RS(\Qf)$. Then $s(\Js,\Pf) \geq s(\Js', \Pf)$ and $s(\Js,\Qf) \geq s(\Js', \Qf)$ for all $\Js'$, therefore, $s(\Js,\Pf+\Qf) = s(\Js,\Pf) + s(\Js,\Qf) \geq s(\Js', \Pf) + s(\Js',\Qf) = s(\Js', \Pf+\Qf)$, which shows that $\Js \in \RS(\Pf+\Qf)$. Conversely, let $\Js' \notin \RS(\Pf) \cap \RS(Q)$. Without loss of generality, assume $\Js' \notin \RS(\Pf)$. Then, for \leon{every} 
$\Js \in \RS(\Pf) \cap \RS(\Qf)$ we have $s(\Js,\Pf) > s(\Js',\Pf)$, therefore $s(\Js',\Pf+\Qf) = s(\Js',\Pf) + s(\Js',\Qf) > s(\Js, \Pf) + s(\Js,\Qf) = s(\Js, \Pf+\Qf)$, which shows that $\Js' \notin \RS(\Pf+\Qf)$.
\end{proof}

\begin{corollary}
$\RMWA$ and $\REVS$ 
satisfy reinforcement. 
\end{corollary}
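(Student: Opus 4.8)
The final statement to prove is the corollary that $\RMWA$ and $\REVS$ satisfy reinforcement. The plan is to invoke the immediately preceding Proposition, which establishes that all scoring rules satisfy reinforcement, and then observe that both $\RMWA$ and $\REVS$ fall into the class of scoring rules as defined in Section~\ref{sec:scoring}.

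First I would recall that $\RMWA$ was explicitly identified in Section~\ref{sec:scoring} as a scoring rule: indeed, taking the scoring function $s(\Js,\varphi) = N(\Pf,\varphi)$ does not quite work since $s$ must not depend on $\Pf$, so the correct observation is that $\RMWA(\Pf) = \argmax{\Js \in \Dma} \sum_{\varphi \in \Js} \sum_{\Js_i \in \Pf} s(\Js_i,\varphi)$ with $s(\Js_i,\varphi) = 1$ if $\varphi \in \Js_i$ and $0$ otherwise; summing over $\Js_i \in \Pf$ recovers $N(\Pf,\varphi)$, matching Equation~(\ref{eq:mwa}). Second, $\REVS$ is by definition the scoring rule $\F_s$ associated with the reversal score function $\mbox{rev}(\Js,\varphi) = \mmin{\Js' \in \Dma,\ \varphi \notin \Js'} d_H(\Js,\Js')$, which is a well-defined function $\Dma \times \A \to \mathbb{R}^+$ independent of the profile. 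Hence both rules are instances of $\F_s$ for suitable $s$.

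Therefore the corollary follows immediately: by the preceding Proposition, every rule of the form $\F_s$ satisfies reinforcement, and since $\RMWA$ and $\REVS$ are both of this form, they satisfy reinforcement. The proof is essentially a one-line citation of the general result, so there is no real obstacle; the only thing to be careful about is to state precisely which scoring function $s$ realizes each rule, so that the reduction to the Proposition is airtight. No separate computation is needed.

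\begin{proof}
Both $\RMWA$ and $\REVS$ are scoring rules in the sense of Section~\ref{sec:scoring}: $\RMWA = \F_s$ for the scoring function $s(\Js,\varphi)$ equal to $1$ if $\varphi \in \Js$ and $0$ otherwise (so that $\sum_{\Js_i \in \Pf} s(\Js_i,\varphi) = N(\Pf,\varphi)$, matching~(\ref{eq:mwa})), and $\REVS = \F_s$ for $s = \mbox{rev}$. Since the previous proposition shows that all scoring rules satisfy reinforcement, the claim follows.
\end{proof}
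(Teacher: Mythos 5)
Your proof is correct and matches the paper's intent exactly: the corollary is stated without proof precisely because it follows immediately from the proposition that all scoring rules satisfy reinforcement, together with the observations in Section~\ref{sec:scoring} that $\RMWA$ is the scoring rule with $s(\Js,\varphi)=1$ if $\varphi\in\Js$ and $0$ otherwise, and that $\REVS$ is the scoring rule for $s=\mbox{rev}$. Your explicit identification of the scoring functions is a welcome precision but adds nothing beyond the paper's argument.
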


\begin{proposition}
$\RDG$ satisfies reinforcement.
\end{proposition}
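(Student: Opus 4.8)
The plan is to notice that the proof of the preceding proposition uses only one feature of a scoring rule, namely that its objective function splits additively over the electorate, and that $\RDG$ enjoys the same feature even though it is not a scoring rule in Dietrich's sense (the geodesic distance does not decompose issue by issue, so there is no scoring function $s$ with $\RDG = \RS$). So I would first set, for a judgment set $\Js$ and a profile $\Pf = \langle \Js_1, \ldots, \Js_n\rangle$, the quantity $d_G(\Js,\Pf) = \sum_{\Js_i \in \Pf} d_G(\Js_i,\Js)$, record that $\RDG(\Pf) = \argmin{\Js \in \Dma} d_G(\Js, \Pf)$, and note the identity $d_G(\Js, \Pf + \Qf) = d_G(\Js, \Pf) + d_G(\Js, \Qf)$ for profiles over disjoint electorates.

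Then I would fix a common minimizer $\Js^\ast \in \RDG(\Pf) \cap \RDG(\Qf)$ (the only interesting case) and argue the two inclusions. For $\RDG(\Pf) \cap \RDG(\Qf) \subseteq \RDG(\Pf+\Qf)$: for every $\Js \in \Dma$, add $d_G(\Js^\ast, \Pf) \leq d_G(\Js, \Pf)$ and $d_G(\Js^\ast, \Qf) \leq d_G(\Js, \Qf)$ to get $d_G(\Js^\ast, \Pf+\Qf) \leq d_G(\Js, \Pf+\Qf)$; hence $\Js^\ast \in \RDG(\Pf+\Qf)$, and more generally so is every element of $\RDG(\Pf) \cap \RDG(\Qf)$. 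For the converse $\RDG(\Pf+\Qf) \subseteq \RDG(\Pf) \cap \RDG(\Qf)$: let $\Js \in \RDG(\Pf+\Qf)$; since $\Js^\ast$ also lies in $\RDG(\Pf+\Qf)$ by the first inclusion, $d_G(\Js, \Pf) + d_G(\Js, \Qf) = d_G(\Js^\ast, \Pf) + d_G(\Js^\ast, \Qf)$, and combined with $d_G(\Js, \Pf) \geq d_G(\Js^\ast, \Pf)$ and $d_G(\Js, \Qf) \geq d_G(\Js^\ast, \Qf)$ this forces both to be equalities, so $\Js \in \RDG(\Pf) \cap \RDG(\Qf)$. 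Putting the inclusions together yields $\RDG(\Pf+\Qf) = \RDG(\Pf) \cap \RDG(\Qf)$, which is reinforcement.

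I do not expect a genuine obstacle here: the argument is \emph{distance-agnostic}, using nothing about $d_G$ beyond it being a fixed real-valued function of pairs of judgment sets, so in fact $\F^{d,\Sigma}$ satisfies reinforcement for every pseudo-distance $d$ (with the scoring-rules proposition recovering the case $\RMWA = \RSUM = \F^{d_H,\Sigma}$). The only point requiring a little care is that reinforcement is only demanded when $\RDG(\Pf) \cap \RDG(\Qf) \neq \emptyset$, so the whole argument should be anchored to a fixed common minimizer $\Js^\ast$, and one should remember to observe that $\RDG(\Pf+\Qf)$ is non-empty (it contains $\Js^\ast$) before concluding equality of the two sets.
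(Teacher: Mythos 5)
Your proof is correct and is essentially the paper's own argument: the paper simply states that the proof is the same as the one given for scoring rules, with scores replaced by distances and maximization by minimization, and notes (as you do) that it works for any rule minimising a sum of distances. Your write-up just spells out that adaptation, with the converse inclusion phrased via forcing equalities rather than the paper's contrapositive, which is logically equivalent.
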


The proof is similar to the proof above for scoring rules, replacing scores by distances and maximization by minimization. It would work more generally for \leon{every} 
rule \leon{minimising} 
the {\em sum} of distances to judgment sets. 

\begin{proposition}
$\RMAX$ does not satisfy reinforcement.
\end{proposition}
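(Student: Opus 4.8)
The plan is to exhibit an explicit counterexample on a two-issue agenda, after first recording the structural reason why the failure must take a specific shape. Since adding voters can only increase a maximum, for any two profiles $\Pf,\Qf$ over disjoint electorates and any $\Js\in\Dma$ we have $\max_{\Js_i\in\Pf+\Qf}d_H(\Js,\Js_i)=\max\big(\max_{\Js_i\in\Pf}d_H(\Js,\Js_i),\,\max_{\Js_i\in\Qf}d_H(\Js,\Js_i)\big)$. If $m_P$ and $m_Q$ denote the minimax values attained by $\RMAX$ on $\Pf$ and on $\Qf$, it follows that $\min_{\Js\in\Dma}\max\big(\max_{\Js_i\in\Pf}d_H(\Js,\Js_i),\,\max_{\Js_i\in\Qf}d_H(\Js,\Js_i)\big)=\max(m_P,m_Q)$, and this value is attained on every $\Js\in\RMAX(\Pf)\cap\RMAX(\Qf)$; hence $\RMAX(\Pf)\cap\RMAX(\Qf)\subseteq\RMAX(\Pf+\Qf)$ always holds. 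So reinforcement can only fail by the merged winner set being \emph{strictly larger} than the intersection, which occurs as soon as $\RMAX(\Pf)$ contains a judgment set $\Js'$ with $\max_{\Js_i\in\Qf}d_H(\Js',\Js_i)\leq m_P$ but $\Js'\notin\RMAX(\Qf)$.

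Concretely I would take $\PA=\{p,q\}$ with $\Ct=\top$, so $\Dma$ consists of the four judgment sets $\{p,q\}$, $\{p,\neg q\}$, $\{\neg p,q\}$, $\{\neg p,\neg q\}$; and I set $\Pf=\langle\{\neg p,\neg q\},\{\neg p,q\}\rangle$ on electorate $\{1,2\}$ and $\Qf=\langle\{\neg p,\neg q\}\rangle$ on electorate $\{3\}$. A direct computation of the $\max$-distances gives $\RMAX(\Pf)=\{\{\neg p,\neg q\},\{\neg p,q\}\}$ with minimax value $1$ (the sets $\{p,\neg q\}$ and $\{p,q\}$ having value $2$), and $\RMAX(\Qf)=\{\{\neg p,\neg q\}\}$ with minimax value $0$, so $\RMAX(\Pf)\cap\RMAX(\Qf)=\{\{\neg p,\neg q\}\}\neq\emptyset$. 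For the merged profile $\Pf+\Qf=\langle\{\neg p,\neg q\},\{\neg p,q\},\{\neg p,\neg q\}\rangle$ the $\max$-distances are $1$ for each of $\{\neg p,\neg q\}$ and $\{\neg p,q\}$, and $2$ for each of $\{p,\neg q\}$ and $\{p,q\}$; hence $\RMAX(\Pf+\Qf)=\{\{\neg p,\neg q\},\{\neg p,q\}\}\supsetneq\RMAX(\Pf)\cap\RMAX(\Qf)$, contradicting reinforcement.

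There is essentially no technical obstacle: all the verifications are finite and immediate. The only genuine choice is picking a profile $\Pf$ whose $\RMAX$-winner set contains two judgment sets at Hamming distance at most the minimax value (here distance $1$), so that fixing the electorate of $\Qf$ onto one of them fails to break the tie once the profiles are merged. One should just check that a one-voter profile is admitted by the definition of profile (it is — any $n$ is allowed) and that the electorates are taken disjoint (they are), after which the statement follows.
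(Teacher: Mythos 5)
Your proof is correct and takes essentially the same route as the paper: an explicit small counterexample in which $\Qf$ is a single voter whose judgment set is one of several tied $\RMAX$-winners of $\Pf$, so that merging fails to shrink the winner set to the intersection (you use a two-issue agenda where the paper uses three). Your preliminary observation that $\RMAX(\Pf)\cap\RMAX(\Qf)\subseteq\RMAX(\Pf+\Qf)$ always holds is also correct and slightly sharpens the weak-reinforcement remark the paper makes after this proposition.
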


\begin{proof}
\srdjan{Let $[\A] = \{p, q, r\}$, $\Ct=\top$, $\Pf = \langle \{p,q,r\}, \{\neg p, \neg q, \neg r\} \rangle$ and $\Qf = \langle \{\neg p, q, r \} \rangle$. $\RMAX(\Pf) = \{ \{\neg p, q, r\}, \{p, \neg q, r\}, \{p, q, \neg r\}, \{\neg p, \neg q, r\}, $\linebreak $ \{\neg p, q, \neg r\}, \{\neg p, \neg q, r\} \}$ and $\RMAX(\Qf) = \{ \{\neg p, q, r \} \}$, therefore \linebreak $\RMAX(\Pf) \cap \RMAX(\Qf) = \RMAX(\Qf)  \neq \emptyset$. However, $\RMAX(\Pf + \Qf) = \RMAX(\Pf)$.  }
\end{proof}

A similar negative result holds more generally for \leon{every} 
rule \leon{minimising} 
the {\em maximum} of distances to judgment sets.  However, such rules, including $\RMAX$, satisfy this weak version of reinforcement: if $\RMAX(\Pf) \cap \RMAX(\Qf) \neq \emptyset$,   then  
 $\RMAX(\Pf + \Qf) \cap (\RMAX(\Pf) \cap \RMAX(\Qf)) \neq \emptyset.$

\subsection{Homogeneity}\label{sec:homo}

Let us write $k\Pf$ for $\underbrace{\Pf+\cdots +\Pf}_{k~ times}$, where $+$ has been defined in Subsection \ref{sec:rein}.

\begin{definition} A judgment aggregation rule $\F$ satisfies homogeneity when for every $\Dma$-profile $\Pf$ and positive integer $k$, it holds that $\F(k\Pf) = \F(\Pf)$.
\end{definition}

Homogeneity being weaker than reinforcement, we already know that it is satisfied by all scoring functions (including $\RMWA$ and $\REVS$)
and by $\RDG$. 

\begin{proposition}
Every judgment aggregation rule based on the majority set satisfies homogeneity. 
\end{proposition}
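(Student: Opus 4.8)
The plan is to unwind the definitions. Recall that a rule $\F$ is \emph{based on the majoritarian judgment set} precisely when $m(\Pf) = m(\Pf')$ implies $\F(\Pf) = \F(\Pf')$. So to establish homogeneity it suffices to show that for every profile $\Pf$ and every positive integer $k$, the majoritarian judgment sets of $\Pf$ and of $k\Pf$ coincide, i.e. $m(k\Pf) = m(\Pf)$; applying the defining property of $\F$ to the pair $(\Pf, k\Pf)$ then yields $\F(k\Pf) = \F(\Pf)$, which is exactly homogeneity.

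The first (and essentially only) step is the counting identity $m(k\Pf) = m(\Pf)$. Write $n = |\Pf|$, so $|k\Pf| = kn$. For every $\ai \in \A$ we have $N(k\Pf, \ai) = k \cdot N(\Pf,\ai)$, since $k\Pf$ consists of $k$ consecutive copies of $\Pf$ and each copy contributes $N(\Pf,\ai)$ voters supporting $\ai$. Hence $N(k\Pf,\ai) > \frac{kn}{2}$ if and only if $k \cdot N(\Pf,\ai) > \frac{kn}{2}$, and dividing by the positive integer $k$, if and only if $N(\Pf,\ai) > \frac{n}{2}$. Therefore $\ai \in m(k\Pf)$ iff $\ai \in m(\Pf)$, so $m(k\Pf) = m(\Pf)$.

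The second step is the one-line application: since $\F$ is based on the majoritarian judgment set and $m(k\Pf) = m(\Pf)$, we get $\F(k\Pf) = \F(\Pf)$. As this holds for every $\Dma$-profile $\Pf$ and every positive integer $k$, the rule $\F$ satisfies homogeneity. There is no real obstacle here — the only thing to be careful about is that the strict-majority threshold scales by the same factor $k$ as the support counts, so that strictness of the inequality is preserved under division by $k$; this is immediate because $k \ge 1$ is a positive integer.

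\begin{proof}
Let $\F$ be a judgment aggregation rule based on the majoritarian judgment set, let $\Pf$ be a $\Dma$-profile with $|\Pf| = n$, and let $k$ be a positive integer, so that $|k\Pf| = kn$. For every $\ai \in \A$, each of the $k$ copies of $\Pf$ inside $k\Pf$ contains exactly $N(\Pf,\ai)$ judgment sets containing $\ai$, hence $N(k\Pf,\ai) = k \cdot N(\Pf,\ai)$. Consequently,
\[
\ai \in m(k\Pf) \iff N(k\Pf,\ai) > \tfrac{kn}{2} \iff k\cdot N(\Pf,\ai) > \tfrac{kn}{2} \iff N(\Pf,\ai) > \tfrac{n}{2} \iff \ai \in m(\Pf),
\]
where the third equivalence uses that $k$ is a positive integer. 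Therefore $m(k\Pf) = m(\Pf)$. Since $\F$ is based on the majoritarian judgment set, this implies $\F(k\Pf) = \F(\Pf)$. As $\Pf$ and $k$ were arbitrary, $\F$ satisfies homogeneity.
\end{proof}
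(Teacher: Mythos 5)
Your proof is correct and follows exactly the same route as the paper's: show $m(k\Pf)=m(\Pf)$ and then invoke the defining property of rules based on the majoritarian judgment set. You merely spell out the counting identity $N(k\Pf,\ai)=k\cdot N(\Pf,\ai)$ that the paper leaves implicit.
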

\begin{proof}
Let $\F$ be a judgment aggregation rule based on majority set. Since for every profile $\Pf$ and every $k \in \{1, 2, 3, \ldots\}$ we have that $m(\Pf) = m (k \Pf)$ then $\F(\Pf) = \F(k \Pf)$. 
\end{proof}

\begin{corollary} $\RMSA$ and $\RMCSA$ satisfy homogeneity.
\end{corollary}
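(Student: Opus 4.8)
The plan is to derive this as an immediate specialization of the preceding proposition, so the whole task reduces to checking that $\RMSA$ and $\RMCSA$ belong to the family of rules based on the majoritarian judgment set. First I would recall Definition~\ref{def:msa}: both $\RMSA(\Pf) = \{ext(S) \mid S \in \MC{m(\Pf)}\}$ and $\RMCSA(\Pf) = \{ext(S) \mid S \in \MCC{m(\Pf)}\}$ depend on the profile $\Pf$ only through $m(\Pf)$. Hence, for any two $\Dma$-profiles $\Pf, \Pf'$ with $m(\Pf) = m(\Pf')$ we have $\RMSA(\Pf) = \RMSA(\Pf')$ and $\RMCSA(\Pf) = \RMCSA(\Pf')$, which is exactly the defining condition of Section~\ref{family-majoritarian} for being based on the majoritarian judgment set.

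Next I would invoke the proposition stating that every judgment aggregation rule based on the majority set satisfies homogeneity; applying it to $\RMSA$ and $\RMCSA$ gives $\RMSA(k\Pf) = \RMSA(\Pf)$ and $\RMCSA(k\Pf) = \RMCSA(\Pf)$ for every $\Dma$-profile $\Pf$ and every positive integer $k$, which is the claim. For completeness I would briefly spell out why $m(k\Pf) = m(\Pf)$: if $|\Pf| = n$ then $\npf{k\Pf}{\ai} = k\,\npf{\Pf}{\ai}$ for each $\ai \in \A$, and $\npf{\Pf}{\ai} > \tfrac{n}{2}$ holds if and only if $k\,\npf{\Pf}{\ai} > \tfrac{kn}{2}$, so the same agenda elements are supported by a strict majority in $\Pf$ and in $k\Pf$.

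There is essentially no obstacle here: the only content is the observation that the two Condorcet-style rules are functions of $m(\Pf)$, and this is transparent from their definitions. The corollary is therefore a one-line consequence of the general proposition together with the identity $m(k\Pf) = m(\Pf)$.
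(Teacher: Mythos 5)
Your proposal is correct and follows exactly the paper's intended route: $\RMSA$ and $\RMCSA$ are introduced in Section~\ref{family-majoritarian} precisely as rules based on the majoritarian judgment set, so the corollary is an immediate instance of the preceding proposition, whose proof rests on the identity $m(k\Pf)=m(\Pf)$ that you also verify. Nothing is missing.
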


\begin{proposition} $\RRA$ and $\RLEXIMAX$ satisfy homogeneity.
\end{proposition}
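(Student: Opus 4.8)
The plan is to show that both $\RRA$ and $\RLEXIMAX$ are invariant under multiplying a profile by a positive integer $k$, using the non-procedural (order-based) characterisations given in Section~\ref{sec:rules}. The key observation is that $N(k\Pf, \varphi) = k \cdot N(\Pf, \varphi)$ for every $\varphi \in \A$, so multiplying a profile by $k$ scales all the counts $N(\Pf,\cdot)$ by the same positive factor.

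For $\RRA$: recall that $\RRA(\Pf) = \{\Js \in \Dma \mid \Js \text{ undominated in } >_\Pf^{RA}\}$, where $\Js >_\Pf^{RA} \Js'$ holds iff there is an $\alpha \in \mathbb{N}$ with (1) $N(\Pf,\psi) > \alpha$ implying [$\psi \in \Js$ iff $\psi \in \Js'$] for all $\psi \in \A$, and (2) $\Js \cap \{\varphi \mid N(\Pf,\varphi) = \alpha\} \supset \Js' \cap \{\varphi \mid N(\Pf,\varphi) = \alpha\}$. First I would verify that $>_{k\Pf}^{RA}$ coincides with $>_\Pf^{RA}$: given a witness threshold $\alpha$ for $\Js >_\Pf^{RA} \Js'$, the threshold $k\alpha$ is a witness for $\Js >_{k\Pf}^{RA} \Js'$, since $N(k\Pf,\psi) > k\alpha \iff N(\Pf,\psi) > \alpha$ and $N(k\Pf,\varphi) = k\alpha \iff N(\Pf,\varphi) = \alpha$; conversely, if $\alpha'$ witnesses $\Js >_{k\Pf}^{RA} \Js'$, then (since all values $N(k\Pf,\cdot)$ are multiples of $k$, and the only thresholds that matter are those equal to some $N(k\Pf,\varphi)$ or lying between consecutive such values) we may assume $\alpha' = k\alpha$ for some $\alpha$, which then witnesses $\Js >_\Pf^{RA} \Js'$. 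Hence the two strict orders are equal, so their sets of undominated elements are equal, giving $\RRA(k\Pf) = \RRA(\Pf)$.

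For $\RLEXIMAX$: with an $n$-voter profile $\Pf$, the rule is defined from $>_\Pf^{leximax}$, which compares judgment sets via the quantities $s_k(\Js,\Pf) = |S_k(\Pf) \cap \Js|$ where $S_k(\Pf) = \{\varphi \in \A \mid N(\Pf,\varphi) = k,\ \frac{n}{2} \le k \le n\}$. Under $k$-replication the profile $k\Pf$ has $kn$ voters, and $S_{k\ell}(k\Pf) = S_\ell(\Pf)$ for each $\ell$ with $\frac{n}{2} \le \ell \le n$ (and $S_j(k\Pf) = \emptyset$ whenever $j$ is not a multiple of $k$ in the relevant range). Consequently $s_{k\ell}(\Js, k\Pf) = s_\ell(\Js,\Pf)$, and the lexicographic comparison over the index set $\{\frac{n}{2},\ldots,n\}$ for $\Pf$ is order-isomorphic (via $\ell \mapsto k\ell$) to the lexicographic comparison over $\{\frac{kn}{2},\ldots,kn\}$ for $k\Pf$, because the extra indices contribute only zero components. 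Therefore $>_{k\Pf}^{leximax}$ and $>_\Pf^{leximax}$ induce the same strict order on $\Dma$, so $\RLEXIMAX(k\Pf) = \RLEXIMAX(\Pf)$. (Alternatively, once $\RRA$ is known to be homogeneous one could note $\RLEXIMAX \subset \RRA$ and argue $\RLEXIMAX$ refines $\RRA$ by the same scaled scoring data, but the direct argument above is cleaner.)

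The main obstacle, and the point deserving care, is the converse direction in the $\RRA$ argument: justifying that every witnessing threshold for $>_{k\Pf}^{RA}$ can be taken to be a multiple of $k$. This is a small but real technical point — one must observe that in the definition of $>_\Pf^{RA}$ only thresholds $\alpha$ equal to some value $N(\Pf,\varphi)$ actually make condition (2) non-vacuous, and between consecutive attained values the order does not change; since every attained value of $N(k\Pf,\cdot)$ is a multiple of $k$, restricting to multiples of $k$ loses nothing. Everything else is routine bookkeeping with the identity $N(k\Pf,\varphi) = k\,N(\Pf,\varphi)$.
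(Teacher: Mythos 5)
Your proof is correct and follows essentially the same route as the paper: both arguments reduce to the observation that $N(k\Pf,\varphi)=k\,N(\Pf,\varphi)$, so replication preserves the weak order that $N(\Pf,\cdot)$ induces on $\A$, which is all that $\RRA$ and $\RLEXIMAX$ depend on. The paper states this invariance directly from the procedural definition, whereas you verify it via the dominance-order characterisations $>_\Pf^{RA}$ and $>_\Pf^{leximax}$; your extra care about witnessing thresholds being multiples of $k$ is a sound (if not strictly necessary) elaboration of the same idea.
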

\begin{proof}
$\RRA(\Pf)$ is fully determined by the weak order on $\A$ induced by the values of $N(P,.)$. Since for every $k \in \{1, 2, \ldots\}$ and \leon{every} 
$\varphi, \psi \in \A$, $N(kP,\varphi) \geq N(kP,\psi)$ if and only if $N(P,\varphi) \geq N(P,\psi)$, 
we have $\RRA(\Pf) = \RRA(k \Pf)$. The same proof works also for $\RLEXIMAX$.
\end{proof}

\begin{proposition} 
$\RY$ does not satisfy homogeneity.
\end{proposition}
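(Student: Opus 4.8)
The plan is to exhibit a single profile $\Pf$ and an integer $k$ (in fact $k=2$ suffices) such that $\RY(\Pf) \neq \RY(2\Pf)$. The underlying reason $\RY$ fails homogeneity is that $\RY$ works by \emph{deleting} voters until the remaining subprofile is majority-consistent, and duplicating the whole profile changes the arithmetic of ``how many voters must be removed'': in $\Pf$ one may need to remove, say, one voter to break a tie, but in $2\Pf$ the corresponding tie occurs at doubled counts, so one must remove two copies, and meanwhile a \emph{different}, previously-suboptimal deletion pattern may now become (weakly) cheaper or may produce a different majoritarian set. Concretely, I would look for a small agenda where $\Pf$ is \emph{not} majority-consistent, so that $\RY$ genuinely has to delete voters.

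First I would pick the smallest agenda exhibiting a majority cycle, namely the preference agenda $\A_{\Alt}^{Tr}$ on three alternatives $x_1,x_2,x_3$, whose pre-agenda is $\{x_1Px_2, x_2Px_3, x_1Px_3\}$ with the transitivity constraint $Tr$; equivalently $[\A] = \{p,q,r\}$ with $r$ standing for $x_1Px_3$ and the constraint ruling out $p\wedge q\wedge\neg r$ and its negation. Then I would take a profile $\Pf$ consisting of an odd number of voters forming a Condorcet cycle together with some extra voters, arranged so that removing exactly one voter restores majority-consistency and yields a \emph{unique} $m(\Qf)$, but so that in $2\Pf$ removing two voters (necessarily, since all relevant majorities are now even) admits \emph{two} distinct optimal deletion patterns leading to two distinct complete majoritarian sets — or conversely collapses the output. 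For instance a 3-voter Condorcet cycle $\langle\{p,q,r'\},\{p',q,r\}\,\text{(adjusting orientations)}\rangle$ plus additional unanimous-ish voters so that in $\Pf$ the unique maxcard majority-consistent subprofile deletes the ``odd one out'', while in $2\Pf$ symmetry is restored and the set of maxcard majority-consistent subprofiles becomes strictly larger, changing $\RY$.

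The key steps, in order, would be: (1) fix the agenda and constraint and write down $\Dma$ explicitly; (2) specify $\Pf$ and compute $m(\Pf)$, verifying $\Pf$ is \emph{not} majority-consistent; (3) compute $\RY(\Pf)$ by identifying the maxcard majority-consistent subprofiles of $\Pf$ and their $ext(m(\cdot))$; (4) compute $m(2\Pf)=m(\Pf)$ (still inconsistent) and then $\RY(2\Pf)$ by the same procedure, now over the doubled profile; (5) observe $\RY(\Pf)\neq\RY(2\Pf)$ and conclude. Most of this is routine case enumeration over a handful of rational judgment sets.

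The main obstacle is step (4): one must carefully argue about \emph{all} subprofiles of $2\Pf$ of a given cardinality, not just the ``obvious'' ones obtained by duplicating a subprofile of $\Pf$ — the point of the counterexample is precisely that $2\Pf$ has a richer family of maxcard majority-consistent subprofiles (one can delete ``one copy of voter $i$ and one copy of voter $j$'' for $i\neq j$, which has no analogue in $\Pf$). So the delicate part is (a) proving a lower bound — that no deletion of fewer voters works in $2\Pf$ — and (b) enumerating the optimal deletions completely enough to pin down $\RY(2\Pf)$ exactly. I would keep the agenda as small as possible (three issues, four rational judgment sets in the $Tr$ case) precisely to make this enumeration short; if three issues turn out to be too rigid, the fallback is to add one independent ``padding'' issue to decouple the counts, at the cost of a slightly longer but still finite case check.
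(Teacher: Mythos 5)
There is a genuine gap: your proposal is a plan for finding a counterexample, but it never actually exhibits and verifies one, and for this statement the counterexample \emph{is} the proof. You correctly identify the mechanism (the parity of ``how many voters must be deleted'' changes under replication, and new mixed deletion patterns become available in $2\Pf$), but the concrete instances you gesture at are not shown to work --- and the most natural one demonstrably fails. For example, the fully symmetric $3$-voter Condorcet cycle on three alternatives with the $Tr$ constraint gives $\RY(\Pf) = \RY(2\Pf) = \Dma$ (all six linear orders): in $\Pf$ each single-voter deletion leaves a one-edge majoritarian set, in $2\Pf$ each optimal two-voter deletion leaves a one- or two-edge majoritarian set, and the unions of extensions coincide. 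So the burden of the proof --- producing an asymmetric profile where the optimum genuinely shifts, proving the lower bound on deletions in $k\Pf$, and enumerating \emph{all} optimal deletion patterns of the replicated profile --- remains entirely undone, and your own remarks (``if three issues turn out to be too rigid, the fallback is\dots'') concede that the construction is not pinned down. Known counterexamples of this type are not small, so ``most of this is routine case enumeration over a handful of rational judgment sets'' is optimistic.

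The paper avoids all of this with a one-line reduction: it was established in Section \ref{sec:rules} that $\RY$ applied to the preference agenda with the $W$ constraint yields exactly the Young voting rule, and Young's own Example 2 in \cite{Young77} shows that rule is not homogeneous; hence $\RY$ is not homogeneous. You could rescue your approach by importing that specific profile (translated into a $\Dmc_{\A_{\Alt}^W}$-profile) rather than trying to invent a fresh one, but as written the proposal does not contain a proof.
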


\startj
\begin{proof}
This is a consequence of the fact that the Young voting rule does not satisfy homogeneity (see Example 2 in \cite{Young77}).
\end{proof}
\endj

\begin{proposition} $\RMNAC$ does not satisfy homogeneity.
\end{proposition}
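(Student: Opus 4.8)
The plan is to exhibit an explicit counterexample, in the spirit of the homogeneity proofs for $\RY$ and the monotonicity counterexample for $\RMNAC$: find a small constrained agenda, a profile $\Pf$, and an integer $k$ (most likely $k=2$) such that $\RMNAC(\Pf)\neq\RMNAC(k\Pf)$. The key observation driving the construction is that $\RMNAC$ minimises the \emph{total} number of individual judgment reversals $D_H(\Pf,\Qf)$ needed to reach a majority-consistent profile, and this quantity does \emph{not} scale linearly with $k$ in the way the majoritarian set does: doubling the electorate doubles the ``cost'' of flipping a given voter, but it also changes which profiles are majority-consistent and, crucially, the minimum number of flips required to tip a near-tied issue. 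So the relative cost of different repair strategies can change between $\Pf$ and $2\Pf$.

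Concretely, I would look for a profile $\Pf$ that is \emph{not} majority-consistent, where there are (at least) two competing ways to repair it: one repair, call it strategy $A$, flips a small number of judgments on one ``cheap'' issue that is won by a bare majority; another, strategy $B$, flips judgments spread over several issues. In $\Pf$ one strategy is strictly cheaper; in $2\Pf$ the arithmetic of ``strict majority'' on the bare-majority issue changes (e.g.\ a $k$ vs.\ $k-1$ split becomes $2k$ vs.\ $2k-2$, so the number of flips to reverse or neutralise that issue roughly doubles), while some other repair whose cost was tied to structural features of $\Dma$ rather than to a bare majority does \emph{not} double in the same way — so the argmin shifts and $\RMNAC(2\Pf)$ picks up a judgment set absent from $\RMNAC(\Pf)$ (or drops one). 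I would then lay this out in a table exactly as in Table~\ref{ce-mnac-1}: rows listing the rational judgment sets $\Js^1,\ldots,\Js^m$ of a handcrafted $\Dma$ (obtained by choosing an integrity constraint $\Ct$), then the profile rows, then the majoritarian row.

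The verification has two halves, as in the monotonicity counterexample: (i) compute $\RMNAC(\Pf)$ by tabulating $D_H(\Pf,\Qf)$ for every candidate profile $\Qf$ within the relevant distance bound and checking which are majority-consistent — this reduces to a finite case analysis once the pairwise Hamming distances among the judgment sets in $\Dma$ are tabulated; (ii) do the same for $2\Pf$. Then point to a specific $\varphi$ on which the two outputs differ. The structure would mirror Tables~\ref{ce-mnac-2}--\ref{ce-mnac-5}: a distance table among the judgment sets, then an exhaustive list of profiles $\Qf$ with $D_H(\Pf,\Qf)$ below the threshold together with their majoritarian sets, and likewise for $2\Pf$.

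The main obstacle is purely constructive: designing $\Dma$ (equivalently $\Ct$) and $\Pf$ so that the two repair strategies are \emph{tied or nearly tied} for $\Pf$ but break the other way for $2\Pf$, while keeping the agenda small enough that the exhaustive distance bookkeeping is tractable. This is the same design difficulty the authors faced for the $\RMNAC$ monotonicity counterexample (which needed a $16$-issue agenda and nine judgment sets), so I would expect the agenda here to be of comparable size; I would try to reuse or lightly adapt that construction, since the mechanism — sensitivity of $D_H$-minimisation to bare-majority arithmetic — is essentially the same phenomenon that defeats both monotonicity and homogeneity for $\RMNAC$. Once the table is fixed, everything else is mechanical verification.
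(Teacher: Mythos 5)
You have correctly identified both the right approach (an explicit counterexample with $k=2$) and the precise mechanism the paper exploits: the number of individual flips needed to reverse or neutralise a majority does not scale uniformly when the electorate is duplicated, so the $D_H$-minimising repair can change between $\Pf$ and $2\Pf$. However, your text stops exactly where the proof begins. For a non-homogeneity claim the counterexample \emph{is} the proof, and you never exhibit an agenda, a profile, or the verification; you explicitly defer "the main obstacle," which is the entire content of the statement. As written this is a proof strategy, not a proof, so there is a genuine gap.

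The gap is much easier to close than you anticipate. You expect a construction of the size of the $16$-issue monotonicity counterexample; the paper instead uses the $4$-issue pre-agenda $\{p\wedge r,\, p\wedge s,\, q,\, p\wedge q\}$ with no integrity constraint and $11$ voters: three with $(+,+,+,+)$, three with $(+,+,-,-)$, four with $(-,-,+,-)$ and one with $(-,-,-,-)$. Then $m(\Pf)=\{p\wedge r,p\wedge s,q,\neg(p\wedge q)\}$ is inconsistent, and the two cheapest repairs both cost $2$ flips: either demote $q$ (supported $7$ against $4$) by flipping two voters, or demote each of $p\wedge r$ and $p\wedge s$ (each supported $6$ against $5$) by flipping one voter apiece. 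Hence $\RMNAC(\Pf)$ contains two judgment sets. In $2\Pf$ the counts become $14$ against $8$ and $12$ against $10$: demoting $q$ now costs $3$ flips, while neutralising $p\wedge r$ and $p\wedge s$ still costs one flip each (a single flip produces an $11$--$11$ tie, which already removes the issue from the majoritarian set), so the tie is broken and $\RMNAC(2\Pf)$ is a singleton. This is precisely the "bare-majority arithmetic" effect you describe; what was missing was sitting down and choosing the numbers.
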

\begin{proof}
Consider the profile $\Pf$ from \marija{Table} \ref{tab:mnac-homo}.
\begin{table}[h!]
\begin{center}\begin{tabular}{r|cccc}
Voters & $p \wedge r$ & $p \wedge s$ & $q$ & $p \wedge q$ \\ \hline
$\Js_1,\Js_2,\Js_3 $& + & + & +  & +\\
$\Js_4, \Js_5,\Js_6 $& + & + & -   & - \\
$\Js_7-\Js_{10}$ & - & - & + & -\\
$\Js_{11}$ & - & - & - & -\\ \hline
$m(\Pf)$              & +  & + & + & -    
\end{tabular}
\captionof{table}{Profile showing that $\RMNAC$ does not satisfy homogeneity.}
 \label{tab:mnac-homo}
\end{center}
\end{table}
We have $\RMNAC(\Pf) = \{ \{ \neg (p \wedge r), \neg (p \wedge s), q, \neg (p \wedge q) \},  
\{ (p \wedge r), (p \wedge s), \neg q, \neg (p \wedge q) \} \}$.
Consider now the profile $\Pf' = 2 P = P + P$. We have $\RMNAC(\Pf') = \{ \{ \neg (p \wedge r), \neg (p \wedge s), q, \neg (p \wedge q) \}  \}$.
\end{proof}
 
\begin{proposition} For \leon{every} 
distance $d$, $\F^{d,max}$ satisfies homogeneity. 
\end{proposition}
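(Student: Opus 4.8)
The plan is to show that for every distance $d$, the rule $\F^{d,\textsc{max}}$ assigns the same value to a profile $\Pf$ and to its $k$-fold replication $k\Pf$, for every positive integer $k$. The key observation is that the $\textsc{max}$ aggregation function is insensitive to repetition: for any multiset of nonnegative reals, taking the maximum of $k$ copies of that multiset gives the same value as taking the maximum of the original multiset.

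First I would fix an agenda $\A$, a distance $d$ on $\Dma$, a $\Dma$-profile $\Pf = \langle \Js_1, \ldots, \Js_n\rangle$, a positive integer $k$, and an arbitrary candidate judgment set $\Js \in \Dma$. Writing $k\Pf = \langle \Js_1^{(1)}, \ldots, \Js_n^{(1)}, \Js_1^{(2)}, \ldots, \Js_n^{(2)}, \ldots, \Js_1^{(k)}, \ldots, \Js_n^{(k)}\rangle$ where each block $\langle \Js_1^{(j)}, \ldots, \Js_n^{(j)}\rangle$ equals $\Pf$, I would compute
$$\max\big(d(\Js_i^{(j)}, \Js) \mid 1 \leq i \leq n,\ 1 \leq j \leq k\big) = \max\big(d(\Js_i, \Js) \mid 1 \leq i \leq n\big),$$
simply because the multiset on the left consists of $k$ copies of each value $d(\Js_i, \Js)$, and duplicating entries does not change a maximum. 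Hence the objective function $\Js \mapsto \textsc{max}(d(\Js_1, \Js), \ldots, d(\Js_n, \Js))$ that $\F^{d,\textsc{max}}$ minimises over $\Dma$ takes exactly the same value at every $\Js$ when evaluated on $k\Pf$ as when evaluated on $\Pf$.

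Since the two objective functions coincide pointwise on $\Dma$, their sets of minimisers coincide, i.e. $\argmin{\Js \in \Dma}\textsc{max}(d(\Js_i, k\Pf)) = \argmin{\Js \in \Dma}\textsc{max}(d(\Js_i, \Pf))$, which by definition of the rule means $\F^{d,\textsc{max}}(k\Pf) = \F^{d,\textsc{max}}(\Pf)$. As this holds for every $\Pf$ and every $k$, $\F^{d,\textsc{max}}$ satisfies homogeneity.

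There is essentially no obstacle here: the only thing to be careful about is making explicit the identity $\max$ of a multiset $= \max$ of the multiset with repeated elements, and noting that it holds for an arbitrary distance $d$ with no further assumptions (in particular non-degeneracy is irrelevant). One could remark that the same argument does \emph{not} go through for $\F^{d,\Sigma}$, where replication multiplies every summand by $k$ — but there it still works because scaling the objective by a positive constant preserves minimisers, so sum-based distance rules are homogeneous too; the $\textsc{max}$ case is if anything simpler.
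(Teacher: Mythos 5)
Your proof is correct and follows exactly the same argument as the paper's: the maximum of the distances is unchanged under $k$-fold replication of the profile, so the objective function minimised by $\F^{d,\textsc{max}}$ is pointwise identical on $\Pf$ and $k\Pf$, and the sets of minimisers coincide. The paper's version is just a one-line statement of the same identity; your additional remarks on the irrelevance of non-degeneracy and on the $\Sigma$-case are accurate but not needed.
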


\begin{proof}
For \leon{every} 
profile $\Pf$, \startj judgment set $\Js$, and positive integer $k$, we have $\mmax{\Js_i \in k\Pf}~d(\Js,\Js_i) = \mmax{\Js_i \in \Pf}~d(\Js,\Js_i)$.
The result follows.
\end{proof}

As a consequence, $\F^{d_H,max}$ satisfies homogeneity. 

\section{Summary}\label{sec:conclu}

We have listed a number of existing judgment aggregation rules, and for a number of important properties we have identified those rules that satisfy it. 
These properties come in four groups: (1) \ms{majority-preservation}; (2) weak and strong unanimity; (3) monotonicity, and (4) reinforcement and homogeneity (weaker than reinforcement). 
Our results are summarized in Table~\ref{tab:summr}. 

 \definecolor{LightCyan}{rgb}{0.88,1,1}
\begin{table}[h!]
\centering
 \includegraphics[width=\textwidth]{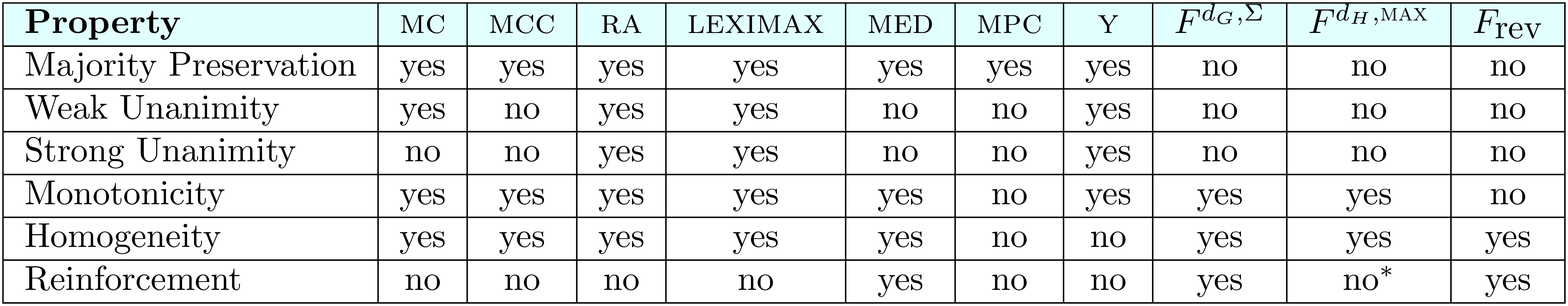}
\caption{Summary of rules and properties they (do not) satisfy.}
\label{tab:summr}
\end{table}

We may use this table to derive a tentative dominance relation between rules. \jl{Let ${\cal P}$ be the set of properties considered here, that is, ${\cal P}$ = \{majority preservation, weak unanimity, strong unanimity,  monotonicity, reinforcement, homogeneity \}.} Say that a rule $\F$ \jl{${\cal P}$-dominates} a rule $\F'$ \ms{when} the set of properties \jl{in ${\cal P}$} satisfied by $\F$ strictly contains the set of properties \jl{in ${\cal P}$} satisfied by $\F'$. (Of course, this is somewhat arbitrary because many other properties could have been considered; but still, these are among the most important properties.) Then:

\begin{itemize}
\item $\RRA$ and $\RLEXIMAX$  \jl{${\cal P}$-dominate} $\RMSA$, $\RMCSA$, $\RMNAC$ and $\RY$; 
\item $\RMWA$  \jl{${\cal P}$-dominates} $\RMCSA$, $\REVS$, $\RMNAC$, $\RDG$ and $\RMAX$.
\end{itemize} 

This leaves us with three  \jl{${\cal P}$-undominated} rules, coming in two groups: $\RMWA$, and the very closely related rules $\RRA$ and $\RLEXIMAX$. Given the importance of the median rule, which generalizes the Kemeny rule, it should not come as a surprise that $\RMWA$ performs well. The presence of $\RRA$ and $\RLEXIMAX$ on this podium is somewhat more surprising.

Some rules have been left out of this study. Importantly, we did not consider quota-based rules, because the quota has to be chosen {\em depending on the agenda} to ensure that the judgment sets are consistent, which prevents the use of a quota-based rule in an agenda-independent way. This is even more patent for premise-based and conclusion-based rules.  

Finally, a challenging question is the axiomatization of some judgment aggregation rules, or some families of rules, for which our work can be seen as a very first (and very incomplete) step.\footnote{\jl{There exists a recent axiomatization of the median rule (in the general judgement aggregation framework) \cite{NehringPivato16}; we are not aware of axiomatic characterizations of other rules.}}

\section*{Acknowledgments}
Gabriella Pigozzi and Srdjan Vesic benefited from the support of the project AMANDE ANR-13-BS02-0004 of the French National Research Agency (ANR). J\'{e}r\^{o}me Lang  benefited from the support of the ANR project  14-CE24-0007-01 CoCoRICo-CoDec. The authors would like to thank   Denis Bouyssou, as well as  anonymous reviewers.

\bibliographystyle{spmpsci}      
\bibliography{SCWbiblio}   
 
\section*{Appendix: Proof of Proposition \ref{prop:inc}.}

\startj Many of the non-inclusion relationships can be derived from the profile of our running example, introduced in 
Example \ref{ex:maj}, and used again in Example \ref{ex:msa} for $\RMCSA$ and $\RMSA$, Example \ref{ex:mwa} for $\RMWA$, Example \ref{ex:ra} for $\RRA$ and $\RLEXIMAX$, Example \ref{ex:y} for $\RY$, and Example \ref{ex:mnac} for $\RMNAC$. This profile already shows that $\RMSA \not \subseteq \RMCSA$, $\RMSA \not \subseteq \RMWA$, $\RMSA \not \subseteq \RRA$, $\RMSA \not \subseteq \RLEXIMAX$, $\RY \not \subseteq \RRA$,   \ms{$\RY \not \subseteq \RLEXIMAX$},  that $\RMWA$ and $\RRA$ are incomparable, as well as \ms{$\RMWA$ and $\RLEXIMAX$}, that $\RY$ and each of $\RMSA$, $\RMCSA$, $\RMWA$ are incomparable, \ms{and that $\RMNAC$ is incompatible with each of $\RY$, $\RRA$, and $ \RLEXIMAX$.}

The inclusion relationships $\RMCSA \subseteq  \RMSA$, $\RLEXIMAX \subseteq \RRA$ are clear from their definitions, and a proof that $\RMWA \subseteq \RMSA$ can be found in \cite{PuppeNehring2011}. 

We now prove what remains to be proven:\endj

\begin{enumerate}
\item $\RRA \subseteq \RMSA$:
If $\Js \in \RRA(\Pf)$ then, by definition of $\RRA$, $\Js \cap m(\Pf)$ is a maximal consistent subset of $m(\Pf)$, thus $\Js \in \RMSA(\Pf)$. 
\item $\RRA \not\subseteq \RLEXIMAX$: Consider the profile $\Pf$ in Table~\ref{tab:ralex}.
\begin{center}\begin{tabular}{r|cccccc}
Voters & $p \wedge q$ & $p$ & $q$ & $p \wedge r$ & $q \wedge r $& $s$\\ \hline
$\Js_1 -\Js_5 $& - & + & -  & +& - & +\\
$\Js_6 - \Js_{10}$& - & - & +   & - & + & -\\
$\Js_{\marija{11}} - \Js_{14}$ & + & + & + & +& + & +\\
$\Js_{15}$ & + & + & + & - & - & -\\ \hline
$m(\Pf)$              & -  & + &+ & + & + & +
\end{tabular}
\captionof{table}{A profile showing that $\RRA \not\subseteq \RLEXIMAX$.}
 \label{tab:ralex}
\end{center}
$\RRA(\Pf) = \{ \{ p \wedge q, p, q, p \wedge r, q \wedge r, s\}, 
                            \{ \neg(p \wedge q), p, \neg q, p \wedge r, \neg (q \wedge r), s\}
                             \{ \neg (p \wedge q), \neg p, q, \neg (p \wedge r), q \wedge r, s\}\}$ 
and
{\sc leximax}$(\Pf) = \{ \{ p \wedge q, p, q, p \wedge r, q \wedge r, s\} \}$. 
 

\item $\RMWA \not \subseteq \RMCSA$: Consider the example from 
Table~\ref{tab:item5}. 
We have \srdjan{$\RMCSA(\Pf) = \{  \neg p, p \rightarrow(q \vee r), \neg p, \neg r, p \rightarrow (s \vee t), \neg s, \neg t ,p \rightarrow (u \vee v) ,\neg u, \neg v\}$} and
$\{ p, p \rightarrow(q \vee r), \neg q, r, p\rightarrow (s \vee t), \neg s, t, \marija{p \rightarrow (u \vee v) ,\neg u, v}  \} \in \RMWA(\Pf)$. 
\item $\RRA$ and $\RLEXIMAX$ are incomparable with  $\RMCSA$.\\
Consider again the example from Table~\ref{tab:item5}. $\RMCSA(\Pf) = \{ \{ \neg p, p \rightarrow(q \vee r), \neg p, \neg r, p \rightarrow (s \vee t), \neg s, \neg t \}, \marija{p \rightarrow (u \vee v) ,\neg u, \neg v}\}$ and for every $\Js \in \RRA(\Pf)$, and {\em a fortiori} for every $\Js \in \RRA(\Pf)$,  $p \in \Js$. Thus $\RMCSA \not \subseteq \RRA$ and $\RLEXIMAX \not \subseteq \RMCSA$.

\item $\RRA \not \subseteq \RY$: Consider the example from Table \ref{tab:RAvY}. 
\begin{table}[h!]
\centering
 \begin{tabular}{r|cccccccc}
Voters &$ \{p,$ & $q,$ & $p ~\wedge~q,$ & $r,$ & $s,$ & $r ~\wedge~s,$ &$t\}$\\ \hline
$\Js_1$ & + & + & + & - & + & - & +\\
$\Js_2,\Js_3,\Js_4$ & + & + & + & - & + & - & -\\
$\Js_5-\Js_8$& + & + & + & + & - & - & -\\
\rowcolor{light-gray}$\Js_9,\Js_{10}$ & + & - & - & + & - & - & -\\
$\Js_{11}-\Js_{14}$ & + & - & - & + & + & + & +\\
$\Js_{15} -\Js_{18}$ & - & + & - & + & + & + & +\\\hline \hline
\marija{Rule} &$ \{p,$ & $q,$ & $p ~\wedge~q,$ & $r,$ & $s,$ & $r ~\wedge~s,$ &$t\}$\\\hline
\marija{$\RRA(\Pf)$} & + & + & + & + & + & + & + \\
                        &+ & + &+ & + & + & + &  -  \\\hline
\marija{ $\RY(\Pf)$}  & + & + & + & + & + & + & +                      
 \end{tabular}\caption{A profile showing that  $\RRA \not \subseteq \RY$.} \label{tab:RAvY}
\end{table}
The minimal number of voters to remove  to make the profile majority-consistent is two. These two voters are the two voters of the fourth row (light gray shaded).  
We have $\RY(\Pf) = \{ \{ p, q, p \wedge q, r, s, r \wedge s, t\} \}$ and
$\RRA(\Pf) = \{ \{ p, q, p \wedge q, r, s, r \wedge s, t\}, \{ p, q, p \wedge q, r, s, r \wedge s, \neg t\} \}$. 
Thus, $\RRA \not \subseteq \RY$.


\item  $\RMNAC$ is incomparable with $\RMSA$:
Consider the pre-agenda $\PA=\{ p,q,p\wedge q,p\wedge\neg q\wedge r,\alpha_1,\alpha_2, \neg p\wedge q \wedge s,\alpha_3,\;\alpha_4, \alpha_5,\alpha_6, \alpha_7 \}$, where\\
\noindent$\;\alpha_1 = p\wedge\neg q  \wedge r \wedge\neg q$, 
$\alpha_2~=~p\wedge\neg q  \wedge r \wedge\neg q\wedge\neg q$,
 $\alpha_3 = q\wedge\neg p\wedge\neg p \wedge s$,
 $\alpha_4 = q\wedge\neg p\wedge\neg p\wedge\neg p \wedge s$,
 $ \alpha_5 = (p \leftrightarrow q) \wedge t$,
  $ \alpha_6 = (p \leftrightarrow q) \wedge t \wedge t$ and
    $ \alpha_7 = (p \leftrightarrow q) \wedge t \wedge t \wedge t$.
 Let $\Pf$ be the profile from Table~\ref{tab:msamwa}. 
\begin{table}[h!]
\centering
 \includegraphics[width=\textwidth]{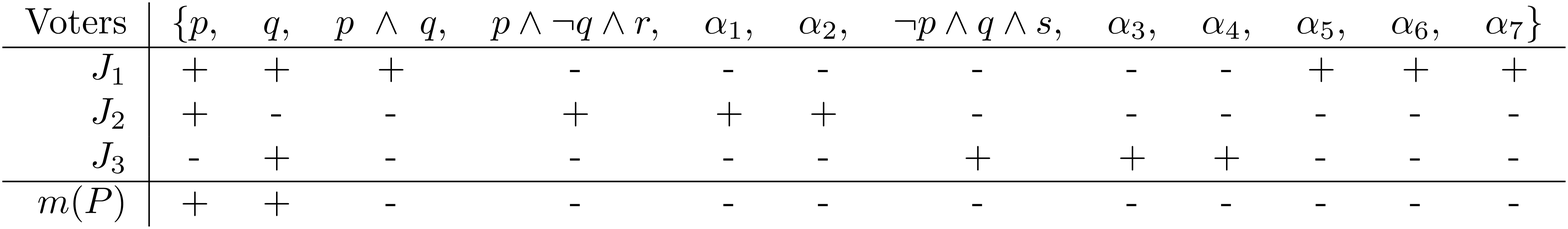}
\caption{ A profile $\Pf$ showing that $\RMNAC \inc \RMSA$.} \label{tab:msamwa}
\end{table}

We obtain \\ 

\hspace{-0.8cm}
{\centering
\includegraphics[width=\textwidth]{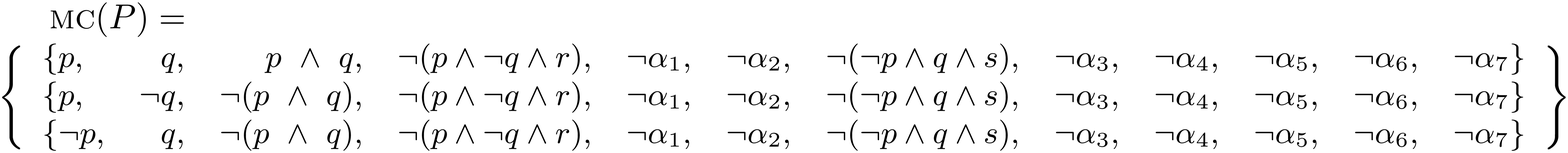}}

\begin{table}[h!]
\begin{center}
 \includegraphics[width=\textwidth]{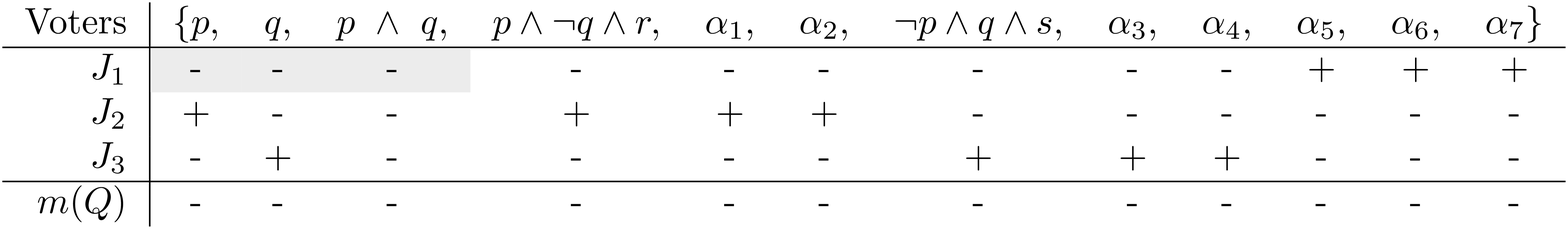}
\caption{\marija{A profile $\Qf$ at  a minimal $D_H$ sitance from $\Pf$ in Table~\ref{tab:msamwa}. Colouered are the cells with the judgments reversed from $\Pf$}} \label{tab:msamwa1}
\end{center}
\end{table}
 
To obtain $\RMNAC(\Pf)$, we need to change the first three judgments of the first voter, obtaining the profile \marija{$\Qf$} given in Table~\ref{tab:msamwa1}. This is the minimal change, since if either the second or the third agent change either their judgment on $p$ or their judgment on $q$, they have to change additional other three judgments. We obtain  $\RMNAC(\Pf)=\{  \{ \neg p,  \neg q, \neg(p~\wedge~q),\neg(p\wedge\neg q \wedge r),\neg \alpha_1, \neg \alpha_2, \neg (p\wedge  q \wedge s), \neg \alpha_3, \neg \alpha_4,\neg \alpha_5, \neg \alpha_6, \neg \alpha_7\} \}$.
Thus, $\RMSA \inc \RMNAC$.

\item $\RMNAC$ is incomparable with $\RMCSA$:
\marija{Consider the profile $\Pf$ from Table \ref{tab:mnac-homo-double}. We have $\RMNAC(\Pf) = \{ \{ \neg (p \wedge r), \neg (p \wedge s), q, \neg (p \wedge q) \}  \}$ and $\RMCSA(\Pf) = \{ \{ p \wedge r, p \wedge s, \neg q, \neg (p \wedge q) \}, \{ p \wedge r, p \wedge s, q, p \wedge q \} \}$; thus $\RMNAC(\Pf) \cap \RMCSA(\Pf) = \emptyset$.}
\marija{
\begin{table}[h!]
\begin{center}\begin{tabular}{r|cccc}
Voters & $p \wedge r$ & $p \wedge s$ & $q$ & $p \wedge q$ \\ \hline
$\Js_1-\Js_6 $& + & + & +  & +\\
$\Js_7-\Js_{12} $& + & + & -   & - \\
$\Js_{13}-\Js_{20}$ & - & - & + & -\\
$\Js_{21}, \Js_{22}$ & - & - & - & -\\ \hline
$m(\Pf)$              & +  & + & + & -    
\end{tabular}
\captionof{table}{A profile showing that $\RMNAC$ does not satisfy homogeneity.}
 \label{tab:mnac-homo-double}
\end{center}
\end{table}}

\item $\RMNAC \inc \RMWA$: Consider the pre-agenda $\PA=\{ p,q,p~\wedge~q,p~\wedge~\neg q,\alpha_1,\alpha_2,q\wedge\neg p,\alpha_3,\;\alpha_4 \}$, where $\;\alpha_1 = p\wedge\neg q\wedge\neg q$, 
$\alpha_2~=~p\wedge\neg q\wedge\neg q\wedge\neg q$,
 $\alpha_3 = q\wedge\neg p\wedge\neg p$ and
 $\alpha_4 = q\wedge\neg p\wedge\neg p\wedge\neg p$.

\begin{table}[h!]
\centering
\begin{tabular}{c|ccccccccc}
Voters &$\{ p,$ & $q,$ & $p~\wedge~q,$ & $p~\wedge~\neg q,$ & $\alpha _1,$ & $\alpha_2,$ & $q\wedge\neg p,$ & $\alpha_3,$ & $\alpha_4\}$  \\ \hline
  $ \Js_1$ &+ &+ &+ &- &- &- &- &- &-  \\
  $\Js_2$&+ &-  &-  &+ &+ &+ &- &- &-  \\
  $\Js_3$&- &+  &-  &- &- &- &+ &+ &+   \\ \hline \hline
Rules &$\{ p,$ & $q,$ & $p~\wedge~q,$ & $p~\wedge~\neg q,$ & $\alpha _1,$ & $\alpha_2,$ & $q\wedge\neg p,$ & $\alpha_3,$ & $\alpha_4\}$  \\ \hline
  $ \RMNAC$ &- &- &- &- &- &- &- &- &-  \\\hline
  $\RMWA$ &+&+ &+ &- &- &- &- &- &-  
\end{tabular}\caption{A profile  showing that  $\RMNAC \inc \RMWA$.} \label{tab:mnacinc}
\end{table}

\marija{
We obtain $\RMNAC(\Pf)$
by changing the first three judgments of the first voter. This is the minimal change, since if either the second or the third agent change either their judgment on $p$ or their judgment on $q$, they have to change additional other three judgments.  }
Observe that $\RMWA(\Pf) = \{\{ p, q, p \wedge q, \neg (p \wedge \neg q), \neg \alpha_1, \neg \alpha_2, \neg (q \wedge \neg p), \neg \alpha_3, \neg \alpha_4 \}\}$ since for this judgment set the weight is  17, and for the remaining three other possible judgment sets the weights are: 14  for the set of the judgment sets of the second, and third voter and 16 for the judgment set \linebreak $\{\neg p, \neg q, \neg(p~\wedge~q),\neg(p~\wedge~\neg q),\neg \alpha_1, \neg \alpha_2, \neg (q\wedge\neg p),\neg \alpha_3, \neg \alpha_4\}$. \\
Thus $\RMNAC \inc \RMWA$.

\item $\RMAX$, $\REVS$ and $\RDG$ are pairwise incomparable: 
We give one counterexample for all three pairs. Let $\A_{\Alt}$ be the preference agenda for the set of alternatives $\Alt = \{c_1,c_2,c_3,c_4\}$, together with the transitivity constraint.  
Consider the profile given in Table~\ref{fig:incnmp}. The collective judgments obtained by $\REVS$,  $\RDG$, and $\RMAX$ are represented in the last five rows of this table. 
\begin{table}[h!]
\centering
\begin{tabular}{r|cccccc}
Voters& $c_1Pc_2$ & $c_1Pc_3$ & $c_1Pc_4$ & $c_2Pc_3$ & $c_2Pc_4$ & $c_3Pc_4$\\\hline
$\Js_1$ &+& + & +& - & - & -  \\
$\Js_2, \Js_3$ & - & + & +& + & + & + \\\hline\hline
Rule & $c_1Pc_2$ & $c_1Pc_3$ & $c_1Pc_4$ & $c_2Pc_3$ & $c_2Pc_4$ & $c_3Pc_4$\\\hline
$\REVS(\langle \Js_1, \Js_2, \Js_3\rangle)$              &+ &+ & +& + & + & + \\
$\RDG(\langle \Js_1, \Js_2, \Js_3\rangle)$               &- &+ & +& + & + & + \\ 
$\RMAX(\langle \Js_1, \Js_2, \Js_3\rangle)$            &+ &+ & +& - & + & + \\
                               &+ &+ & +& + & - & + \\
                               &+ &+ & +& + & + & - \\
\end{tabular}
\caption{A profile showing that $\RMAX$, $\REVS$ and $\RDG$ are mutually incomparable.}
\label{fig:incnmp}
\end{table}
 
\end{enumerate}

\end{document}